\documentclass[11pt]{article}
\usepackage[letterpaper]{geometry}
\usepackage[parfill]{parskip}
\usepackage{amsmath,amsthm,amssymb,bbm}
\usepackage{mathtools}
\usepackage{cases}
\usepackage{dsfont}
\usepackage{microtype}

\usepackage{subfigure}
\usepackage{algorithm,algorithmic}
\usepackage{color}
\usepackage{appendix}

\usepackage{bm}
\usepackage{subfigure}
\usepackage{algorithm,algorithmic}
\usepackage{color}
\usepackage{booktabs}       
\usepackage{appendix}
\usepackage{authblk}

\usepackage{url}
\usepackage[authoryear]{natbib}
\usepackage[colorlinks,citecolor=blue,urlcolor=blue,linkcolor=blue,linktocpage=true]{hyperref}
\pdfstringdefDisableCommands{\def\Cref#1{#1}}

\usepackage{cleveref}
\crefformat{equation}{(#2#1#3)}
\crefrangeformat{equation}{(#3#1#4) to~(#5#2#6)}
\crefname{equation}{}{}
\Crefname{equation}{}{}

\crefname{definition}{\textbf{definition}}{definitions}
\Crefname{definition}{Definition}{Definitions}
\crefname{assumption}{\textbf{assumption}}{assumptions}
\Crefname{assumption}{Assumption}{Assumptions}


\definecolor{maroon}{RGB}{192,80,77}
\definecolor{mypink3}{cmyk}{0, 0.7808, 0.4429, 0.1412}

\newtheorem{theorem}{Theorem}[section]
\newtheorem{lemma}[theorem]{Lemma}

\newtheorem{remark}[theorem]{Remark}
\newtheorem{assumption}[theorem]{Assumption}

\usepackage{amsmath}

\newcommand\norm[1]{\left\lVert#1\right\rVert}
\newcommand{\mypink}{\textcolor{mypink3}}

\newcommand{\argmax}{\mathop{\mathrm{argmax}}}



\def\hpi{\widehat{\pi}}
\def\E{\mathbb{E}}
\def\P{\mathbb{P}}

\def\Var{\mathrm{Var}}

\def\R{\mathbb{R}}

\begin{document}

\title{Near-Optimal Provable Uniform Convergence in Offline Policy Evaluation for Reinforcement Learning}

\author[1,3]{Ming Yin }
\author[2]{Yu Bai}
\author[3]{Yu-Xiang Wang}
\affil[1]{Department of Statistics and Applied Probability, UC Santa Barbara}
\affil[2]{Salesforce Research}
\affil[3]{Department of Computer Science, UC Santa Barbara}
\affil[ ]{\texttt{ming\_yin@ucsb.edu}  \quad \texttt{yu.bai@salesforce.com} \quad
	\texttt{yuxiangw@cs.ucsb.edu}}

\maketitle

\begin{abstract}	


The problem of \emph{Offline Policy Evaluation} (OPE) in Reinforcement Learning (RL) is a critical step towards applying RL in real life applications. Existing work on OPE mostly focus on evaluating a \emph{fixed} target policy $\pi$, which does not provide useful bounds for offline policy learning as $\pi$ will then be data-dependent. We address this problem by \emph{simultaneously} evaluating all policies in a policy class $\Pi$ --- uniform convergence in OPE --- and obtain nearly optimal error bounds for a number of global / local policy classes. Our results imply that the model-based planning achieves an optimal episode complexity of $\widetilde{O}(H^3/d_m\epsilon^2)$ in identifying an $\epsilon$-optimal policy under the \emph{time-inhomogeneous episodic} MDP model ($H$ is the planning horizon, $d_m$ is a quantity that reflects the exploration of the logging policy $\mu$). To the best of our knowledge, this is the first time the optimal rate is shown to be possible for the offline RL setting and the paper is the first that systematically investigates the uniform convergence in OPE. 




\end{abstract}

\newpage
\tableofcontents
\newpage

\section{Introduction}\label{sec:introduction}

In offline reinforcement learning (offline RL), there are mainly two fundamental problems: \textit{offline policy evaluation} (OPE) and \textit{offline learning} (also known as \textit{batch RL}) \citep{sutton2018reinforcement}.  OPE addresses to the statistical estimation problem of predicting the performance of a fixed target policy $\pi$ with only data collected by a logging/behavioral policy $\mu$. On the other hand, offline learning is a \emph{statistical learning} problem that aims at learning a near-optimal policy using an offline dataset alone \citep{lange2012batch}.

As offline RL methods do not require interacting with the task environments or having access to a simulator,  they are more suitable for real-world applications of RL such as those in marketing \citep{thomas2017predictive}, targeted advertising \citep{bottou2013counterfactual,tang2013automatic}, finance \citep{bertoluzzo2012testing}, robotics \citep{quillen2018deep,dasari2019robonet}, language \citep{jaques2019way} and health care \citep{ernst2006clinical,raghu2017continuous,raghu2018behaviour,gottesman2019guidelines}. In these tasks, it is usually not feasible to deploy an online RL algorithm to trials-and-error with the environment. Instead, we are given a large offline dataset of historical interaction to come up with a new policy $\pi$ and to demonstrate that this new policy $\pi$ will perform better using the same dataset without actually testing it online. 
 

In this paper, we present our solution via a statistical learning perspective by studying the \emph{uniform convergence} in OPE 
under the \emph{non-stationary transition, finite horizon, episodic Markov decision process (MDP)} model with finite states and actions. Informally, given a policy class $\Pi$ and a logging policy $\mu$, uniform convergence problem in OPE (Uniform OPE for short)  focuses on coming up with OPE estimator $\widehat{v}^\pi$ and characterizing the number of episodes $n$ we need (from $\mu$) in order for $\widehat{v}^\pi$ to satisfies that with high probability
\begin{equation*}
\sup_{\pi\in\Pi} \left| \widehat{v}^\pi - v^\pi \right| \le \epsilon.
\end{equation*}
The focus of research would be to characterizing the \emph{episode complexity}:  the number of episodes $n$ needed as a function of $\epsilon$, failure probability $\delta$, the parameters of the MDP as well as the logging policy $\mu$. 

We highlight that even though uniform convergence is the main workhorse in statistical learning theory \citep[see, e.g.,][]{vapnik1999overview}, few analogous results have been established for the offline reinforcement learning problem. The overarching theme of this work is to understand what a natural complexity measure is for policy classes in reinforcement learning and its dependence in the size of the state-space and planning horizon. 

In addition, uniform OPE has two major consequences (which we elaborate in detail in the following motivation section): (1) allowing any accurate planning algorithm to work as sample efficient offline learning algorithm with our model-based method; (2) providing finite sample guarantee for offline evaluation uniformly for all policies in the policy class.

\noindent\textbf{The Motivation.} Existing research in offline RL usually focuses on designing specific algorithms that learn the optimal policy $\pi^\star:=\argmax_{\pi}v^\pi$ with given static offline data $\mathcal{D}$. In the rich literature of statistical learning theory, however, learning bounds are often obtained via a stronger uniform convergence argument which ensures an arbitrary learner to output a model that generalizes.  Specifically, the \textit{empirical risk minimizer} (ERM) that outputs the \textit{empirical optimal policy} has been shown to be sufficient and necessary for efficiently learning almost all learnable problems \citep{vapnik1999overview,shalev2010learnability}. 

The natural analogy of ERM in the RL setting would be to find the \textit{empirical optimal policy} $\widehat{\pi}^\star:=\argmax_{\pi}\widehat{v}^\pi$ for some OPE estimator $\widehat{v}^\pi$. If we could establish a uniform convergence bound for $\widehat{v}^\pi$, then it implies that $\widehat{\pi}^\star$ is nearly optimal too via
\begin{align*}
0&\leq v^{\pi^\star}-v^{\widehat{\pi}^\star}= v^{\pi^\star}-\widehat{v}^{\widehat{\pi}^\star}+\widehat{v}^{\widehat{\pi}^\star}-v^{\widehat{\pi}^\star}\\
&\leq |v^{\pi^\star}-\widehat{v}^{{\pi}^\star}|+|\widehat{v}^{\widehat{\pi}^\star}-v^{\widehat{\pi}^\star}|\leq 2\sup_\pi|v^\pi-\widehat{v}^\pi|.
\end{align*}
Thus, uniform OPE is a stronger setting than offline learning with the additional benefit of accurately evaluating any other (possibly heuristic) policy optimization algorithms that are used in practice.

From the OPE perspective, there is often a need to evaluate the performance of a \emph{data-dependent} policy, and uniform OPE becomes useful. 
For example, when combined with existing methods, it will allow us to evaluate policies selected by safe-policy improvements, proximal policy optimization, UCB-style exploration-bonus as well as any heuristic exploration criteria such as curiosity, diversity and reward-shaping techniques.


\noindent\textbf{Model-based estimator for OPE.} The OPE estimator we consider in this paper is the standard model-based estimator, i.e., estimating the transition dynamics and immediate rewards, then simply plug in the parameters of empirically estimated MDP $\widehat{M}$ to obtain $\hat{v}^\pi$ for any $\pi$. This model-based approach 
has several benefits. \textbf{1.} It
enables flexible choice of policy search methods since it converts the problem to planning over the estimated MDP $\widehat{M}$. \textbf{2.} Uniform OPE with model-based estimator avoids the use of data-splitting that leads to inefficient data use. 
For example, \citet{sidford2018near} learns the $\epsilon$-optimal policy with the optimal rate in the generative model setting, where in each subroutine new independent data $s_{s,a}^{(1)},...,s^{(m)}_{s,a}$ need to be sampled to estimate $P_{s,a}$ and samples from previous rounds cannot be reused. A uniform convergence result could completely avoid data splitting during the learning procedure. 


\noindent\textbf{Our contribution.} 
Our main contributions are summarized as follows.
\begin{itemize}
	\itemsep1pt
	\item For the global policy class (deterministic or stochastic), we use fully model-based OPEMA estimator to obtain an $\epsilon$-uniform OPE with episode complexity $\widetilde{O}(H^4S/d_m \epsilon^2)$ (Theorem~\ref{thm:rad}) and in some cases this can be reduced to $\widetilde{O}(H^4/d_m \epsilon^2)$, where $d_m$ is minimal marginal state-action occupancy probability depending on logging policy $\mu$. 
	
	\item For the global deterministic policy class, we obtain an $\epsilon$-uniform OPE with episode complexity $\widetilde{O}(H^3S/d_m \epsilon^2)$ with an optimal dependence on $H$ (Theorem~\ref{thm:finer_bound}).  
	
	\item For a (data-dependent) local policy class that cover all policies are in the $O(\sqrt{H}/S)$-neighborhood of the \emph{empirical} optimal policy (see the definition in Section~\ref{sec:problems}), we obtain $\epsilon$-uniform OPE with $\widetilde{O}(H^3/d_m\epsilon^2)$ episodes (Theorem~\ref{thm:local_uni_opt}). 
	
	\item 
	We prove a information-theoretical lower bound of {$\Omega(H^3/d_m\epsilon^2)$} for OPE (Theorem~\ref{thm:uni_lower}) which certifies that results for local policy class is optimal.
	
	
	\item 
	Our uniform OPE over the local policy class implies that ERM (VI or PI with empirically estimated MDP), as well as any sufficiently accurate model-based planning algorithm,  has an optimal episode complexity of $\widetilde{O}(H^3/d_m\epsilon^2)$ (Theorem~\ref{thm:offlinelearning}). To the best of our knowledge, this is the first rate-optimal algorithm in the offline RL setting. 
	
	\item Last but not least, our result can be viewed as an improved analysis of the \emph{simulation lemma}; which demystifies the common misconception that purely model plug-in estimator is inefficient, comparing to their model-free counterpart.
\end{itemize} 
To the best of our knowledge, these results are new and this is the first work that derives uniform convergence analogous to those in the statistical learning theories for offline RL.

\noindent\textbf{Related work.}
Before formally stating our results, we briefly discuss the related literature in three categories.

\textbf{1. OPE:} Most existing work on OPE focuses on the \textit{Importance Sampling} (IS) methods \citep{li2011unbiased,dudik2011doubly,li2015toward,thomas2016data} or their doubly robust variants \citep{jiang2016doubly,farajtabar18a}. These methods are more generally applicable even if the the Markovian assumption is violated or the states are not observable, but has an error (or sample complexity) that depends exponential dependence in horizon $H$. Recently, a family of estimators based on \textit{marginalized importance sampling} (MIS) \citep{liu2018breaking,xie2019towards,kallus2019double,kallus2019efficiently,yin2020asymptotically} have been proposed in order to overcome the ``\textit{curse of horizon}'' under the additional assumption of state observability. In the tabular setting, \citet{yin2020asymptotically} design the Tabular-MIS estimator which matches the  Cramer-Rao lower bound constructed by \citet{jiang2016doubly} up to a low order term for every instance ($\pi,\mu$ and the MDP), which translates into an $O(H^2/d_m\epsilon^2)$ episode complexity in the (pointwise) OPE problem we consider for all $\pi$. Tabular-MIS, however, is identical to the model-based plug-in estimator we use, \emph{off-policy empirical model approximator} (OPEMA), as we discuss further in Section~\ref{subsec:method}. These methods do not address the uniform convergence problem. The only exception is \citep{yin2020asymptotically}, which has a result analogous to Theorem~\ref{thm:local_uni_opt}, but for a data-splitting-type estimator.


\textbf{2. Offline Learning:} For the offline learning, most theoretical work consider the infinite horizon discounted setting with function approximation. \citet{chen2019information,le2019batch} first raises the information-theoretic considerations for offline learning
and uses Fitted Q-Iteration (FQI) to obtain $\epsilon V_{\max}$-optimal policy using sample complexity 
$\widetilde{O}((1-\gamma)^{-4}C_\mu/\epsilon^2)$ where $C_\mu$ is \emph{concentration coefficient} \citep{munos2003error} that is similar to our $1/d_m$. More recently,  \citep{xie2020q} improves the result to $\tilde{O}((1- \gamma)^{-2}C_\mu/\epsilon^2)$. However, these bounds are not tight 
in terms of the dependence on the effective horizon\footnote{The optimal rate should be $(1- \gamma)^{-1}C/\epsilon^2$, analogous to our $H^{3}/d_m\epsilon^2$ bound. The additional $H^2$ is due to scaling ---  we are obtaining $\epsilon$-optimal policy and they obtain $\epsilon V_{\max}$-optimal policy ($V_{\max} = H$ in our case). See Table~\ref{table} for a consistent comparison.} $(1-\gamma)^{-1}$. More recently, \citet{xie2020batch,liu2020provably} explore weaker settings for batch learning but with suboptimal sample complexity dependences.  Our result is the first that achieves the optimal rate (despite focusing on the finite horizon episodic setting).

\textbf{3. Uniform convergence in RL:} There are few existing work that deals with uniform convergence in OPE. However, we notice that the celebrated simulation lemma \citep{kearns2002near} is actually an uniform bound with an episode complexity of $O(H^4S^2/d_m\epsilon^2)$.  Several existing work uses uniform-convergence arguments over value function classes for online RL \citep[see, e.g.,][and the references therein]{jin2020provably}. The closest to our work is perhaps \citep{agarwal2020model}, which studies model-based planning in the generative model setting. We are different in that we are in the offline learning setting. In addition, our local policy class is optimal for a larger region of $\epsilon_{\text{opt}}$ (independent to $n$), while their results (Lemma 10) imply optimal OPE only for empirically optimal policy with $\epsilon_{\text{opt}}\leq \sqrt{(1-\gamma)^{-5}SA/n}$. Lastly, we discovered the thesis of \citet[Ch.3 Theorem 1]{tewari2007reinforcement}, which discusses the pseudo-dimension of policy classes. The setting is not compatible to ours, and does not imply a uniform OPE bound in our setting.







\section{Problem setup and method}\label{sec:formulation}
RL environment is usually modeled as a \textit{Markov Decision Process} (MDP) which is denoted by $M=(\mathcal{S},\mathcal{A}, r,P,d_1,H)$. The MDP consists of a state space $\mathcal{S}$, an action space $\mathcal{A}$ and a transition kernel $P_t : \mathcal{S}\times\mathcal{A}\times\mathcal{S} \mapsto [0, 1]$  with $P_t(s'|s,a)$ representing the probability transition from state $s$, action $a$ to next state $s'$ at time $t$. In particular here we consider non-stationary transition dynamics so $P_t$ varies over time $t$. Besides, $r_t : \mathcal{S} \times{A} \mapsto \mathbb{R}$ is the expected reward function and given $(s_t,a_t)$, $r_t(s_t, a_t)$ specifies the average reward obtained at time $t$. $d_1$ is the initial state distribution and $H$ is the horizon. Moreover, we focus on the case where state space $\mathcal{S}$ and the action space $\mathcal{A}$ are finite, \textit{i.e.} $S:=|\mathcal{S}|<\infty,A:=|\mathcal{A}|<\infty$. A (non-stationary) policy is formulated by $\pi:=(\pi_1,\pi_2,...,\pi_H)$, where $\pi_t$ assigns each state $s_t \in \mathcal{S}$ a probability distribution over actions at each time $t$. Any fixed policy $\pi$ together with MDP $M$ induce a distribution over trajectories of the form $(s_1,a_1,r_1,s_2,...,s_H,a_H,r_H,s_{H+1})$ where $s_1\sim d_1$, $a_t\sim \pi_t(\cdot|s_t)$, $s_{t+1}\sim P_t(\cdot|s_t,a_t)$ and $r_t$ has mean $r_t(s_t, a_t)$ for $t=1,...,H$.\footnote{Here $r_t$ without any argument is random reward and $\E[r_t|s_t,a_t]=r_t(s_t,a_t)$.} 

In addition, we denote $d^\pi_t(s_t,a_t)$ the induced marginal state-action distribution and $d^\pi_t(s_t)$ the marginal state distribution, satisfying $d^{\pi}_t(s_t,a_t)=d^{\pi}_t(s_t)\cdot \pi(a_t|s_t)$. Moreover, $d^\pi_1=d_1$ $\forall \pi$. We use the notation $P^\pi_{t}\in\R^{S\cdot A \times S\cdot A}$ to represent the state-action transition $(P^\pi_{t})_{(s,a),(s',a')}:=P_{t}(s'|s,a)\pi_{t}(a'|s')$, then the marginal state-action vector $d^\pi_t(\cdot,\cdot)\in\R^{S\times A}$ satisfies the expression $d^\pi_{t+1}=P^\pi_{t+1}d^\pi_{t}$. We define the quantity
$
V^\pi_t(s)=\E_\pi[\sum_{t'=t}^H r_{t'}|s_t=s] 
$
and the Q-function
$Q^\pi_t(s,a)=\E_\pi[\sum_{t'=t}^H r_{t'}|s_t=s,a_t=a]$ for all $t=1,...,H$. The ultimate measure of the performance of policy $\pi$ is the value function:
{
\[
v^\pi=\E_\pi\left[\sum_{t=1}^H r_t\right]. 
\]}
\normalsize{}
Lastly, for the standard OPE problem, the goal is to estimate $v^\pi$ for a given $\pi$ while assuming that $n$ episodic data $\mathcal{D}=\left\lbrace (s_t^{(i)},a_t^{(i)},r_t^{(i)},s_{t+1}^{(i)})\right\rbrace_{ i\in[n]}^{t\in[H]} $ are rolling from a \textit{different} behavior policy $\mu$.

\subsection{Uniform convergence problems}\label{sec:problems}

Uniform OPE extends the pointwise OPE to a family of policies. Specifically, for an policy class $\Pi$ of interest, we aim at showing that
$\sup_{\pi\in\Pi}|\widehat{v}^\pi-v^\pi|<\epsilon$
with high probability with optimal dependence in all parameters. In this paper, we consider three policy classes. 

\noindent\textbf{The global policy class.} The policy class $\Pi$ we considered here consists of all the non-stationary policies, deterministic or stochastic. This is the largest possible class we can consider and hence the hardest one. 

\noindent\textbf{The global deterministic policy class.} Here class consists of all the non-stationary deterministic policies. By the standard results in reinforcement learning, there exists at least one deterministic policy that is optimal \citep{sutton2018reinforcement}. Therefore, the deterministic policy class is rich enough for evaluating any learning algorithm (\textit{e.g.} Q-value iteration in \citet{sidford2018near}) that wants to learn to the optimal policy.

\noindent\textbf{The local policy class: in the neighborhood of empirical optimal policy.} Given empirical MDP $\widehat{M}$ (\textit{i.e.} the transition kernel is replaced by $\widehat{P}_t(s_{t+1}|s_{t},a_{t}):=n_{s_{t+1},s_{t},a_{t}}/n_{s_{t},a_{t}}$ if $n_{s_t,a_t}>0$ and $0$ otherwise, where $ n_{s_{t},a_{t}}$ is the number of visitations to $(s_t,a_t)$ among all $n$ episodes\footnote{Similar definition holds for $n_{s_{t+1},s_{t},a_{t}}$.}), it is convenient to learn the empirical optimal policy $\widehat{\pi}^\star:=\argmax_{\pi}\widehat{v}^\pi$ since the full empirical transition $\widehat{P}$ is known. Standard methods like Policy Iteration (PI) and Value Iteration (VI) can be leveraged for finding $\widehat{\pi}^\star$. This observation allows us to consider the following interesting policy class: 
$
\Pi_1:=\lbrace \pi:s.t.\; ||\widehat{V}^\pi_t-\widehat{V}^{\widehat{\pi}^\star}_t ||_\infty\leq\epsilon_\mathrm{opt},\;\forall t=1,...,H\rbrace
$ with $\epsilon_\mathrm{opt}\geq0$ a parameter. Here we consider $\widehat{\pi}^\star$ (instead of $\pi^\star$) since by defining with empirical optimal policy, we can use data $\mathcal{D}$ to really check class $\Pi_1$, therefore this definition is more practical. 

\subsection{Assumptions}
Next we present some mild necessary regularity assumptions for uniform convergence OPE problem.
\begin{assumption}[Bounded rewards]\label{assume1}
	$\forall \;t=1,...,H$ and $i=1,...,n$, $0\leq r^{(i)}_t\leq 1$.
\end{assumption}

\begin{assumption}[Exploration requirement]\label{assume2}
Logging policy $\mu$ obeys that $\min_{t,s_t}d^\mu_t(s_t)>0$, for any state $s_t$ that is ``accessible''. Moreover, we define quantity $d_m:=\min\{d_t^\mu(s_t ,a_t): d^\mu_t (s_t , a_t )>0\}$.
\end{assumption}
\vspace{-5pt}

State $s_t$ is ``accessible'' means there exists a policy $\pi$ so that $d^\pi_t(s_t)>0$. If for any policy $\pi$ we always have $d^\pi_t(s_t)=0$, then state $s_t$ can never be visited in the given MDP. Assumption~\ref{assume2} simply says $\mu$ have the right to explore all ``accessible'' states. This assumption is required for the consistency of uniform convergence estimator since we have ``$\sup_{\pi\in\Pi}$'' and is similar to the standard \textit{concentration coefficient} assumption made by \citet{munos2003error,le2019batch}.  As a short comparison, offline learning problems (\textit{e.g.} offline policy optimization in \citet{liu2019off})  only require $d_t^\mu(s_t)>0$ for any state $s_t$ satisfies $d^{\pi^\star}_t(s_t)>0$. Last but not least, even though our target policy class is deterministic, by above assumptions $\mu$ is always stochastic.  

\subsection{Method: Offline Policy Empirical Model Approximator}\label{subsec:method}
The method we use for doing OPE in uniform convergence is the \emph{offline policy empirical model approximator} (OPEMA). OPEMA uses off-policy data to build the empirical estimators for both the transition dynamic and the expected reward and then substitute the related components in real value function by its empirical counterparts. First recall for any target policy $\pi$, by definition:
{ 
$
v^\pi
=\sum_{t=1}^H\sum_{s_t,a_t}{d}^\pi_t(s_t,a_t){r}_t(s_t,a_t),
$
} where the marginal state-action transitions satisfy $d^\pi_{t+1}=P^\pi_{t+1}d^\pi_{t}$. OPEMA then directly construct empirical estimates for $\widehat{P}_{t+1}(s_{t+1}|s_{t},a_{t})$ and $\widehat{r}_t(s_t,a_t)$ as: 
{\small
	\begin{align*}
	\widehat{P}_{t+1}(s_{t+1}|s_{t},a_{t})&=\frac{\sum_{i=1}^n\mathbf{1}[(s^{(i)}_{t+1},a^{(i)}_t,s^{(i)}_t)=(s_{t+1},s_{t},a_{t})]}{n_{s_{t},a_{t}}},\;\;\widehat{r}_t(s_t,a_t)&=\frac{\sum_{i=1}^n r_t^{(i)}\mathbf{1}[(s^{(i)}_t,a^{(i)}_t)=(s_t,a_t)]}{n_{s_t,a_t}}.
	\end{align*}
}and $\widehat{P}_{t+1}(s_{t+1}|s_{t},a_{t})=0$ and $\widehat{r}_t(s_t,a_t)=0$ if $n_{s_t,a_t}=0$ (recall $n_{s_t,a_t}$ is the visitation frequency to $(s_t,a_t)$ at time $t$) and then the estimates for state-action transition $\widehat{P}^\pi_t$ is defined as:
{$
\widehat{P}^\pi_t(s_{t+1},a_{t+1}|s_{t},a_t)=\widehat{P}_t(s_{t+1}|s_{t},a_{t})\pi(a_{t+1}|s_{t+1}).
$}The initial distribution is also constructed using empirical estimator $\widehat{d}_1^\pi(s_1)=n_{s_1}/n$. Based on the construction, the empirical marginal state-action transition follows $\widehat{d}^\pi_{t+1}=\widehat{P}^\pi_{t+1}\widehat{d}^\pi_{t}$ and the final estimator for $v^\pi$ is:
{
\begin{equation}\label{eqn:estimated_value}
\widehat{v}_\mathrm{OPEMA}^\pi=\sum_{t=1}^H\sum_{s_t,a_t}\widehat{d}^\pi_t(s_t,a_t)\widehat{r}_t(s_t,a_t).
\end{equation}
}
OPEMA is model-based method as it uses plug-in estimators ($\widehat{d}^\pi_t$ and $\widehat{r}_t$) for each model components (${d}^\pi_t$ and ${r}_t$). Traditionally, the error of OPEMA is obtained via the simulation lemma~\citep{kearns2002near}, with $O(H^4S^2/d_m\epsilon^2)$-episode complexity. Recent work \citep{xie2019towards,yin2020asymptotically,duan2020minimax} reveals that there is an importance sampling interpretation of OPEMA {
 	\begin{equation}
 	\widehat{v}_\mathrm{OPEMA}^\pi=\frac{1}{n}\sum_{i=1}^n\sum_{t=1}^H\frac{\hat{d}^\pi(s^{(i)}_t)}{\hat{d}^\mu_t(s^{(i)}_t)}\hat{r}^\pi_t(s^{(i)}),
 	\end{equation}
 }and the effectiveness of MIS of recent work partially explains why OPEMA could work, even for
the Uniform OPE problem.

\section{Main results for Uniform OPE}\label{sec:results}

In this section, we present our results for uniform OPE problems from Section~\ref{sec:problems}. For brevity, we use $\widehat{v}^\pi$ to denote $\widehat{v}^\pi_\mathrm{OPEMA}$ in the rest of paper. Proofs of all technical results are deferred to the appendix. We start with the following Lemma:
\begin{lemma}[martingale decomposition]\label{lem:martingale_decompose} For fixed $\pi$:
	\[
	\sum_{t=1}^H\langle \widehat{d}^\pi_t-{d}^\pi_t,r_t\rangle=\sum_{h=2}^H\langle{V^\pi_h,(\widehat{T}_h-T_h)\widehat{d}^\pi_{h-1}}\rangle+\langle{V^\pi_1,\widehat{d}^\pi_1-d^\pi_1}\rangle
	\]
where $T_{h+1}\in\R^{S\times(SA)}$ be the one step transition matrix, \textit{i.e.} $T_{s_{h+1},(s_h,a_h)}=P_{h+1}(s_{h+1}|s_h,a_h)$. the inner product on the left hand side is taken w.r.t state-action and the inner product on the left hand side is taken w.r.t state only. Proof can be found in appendix (Theorem~\ref{thm:martingale_decompose}).
	
\end{lemma}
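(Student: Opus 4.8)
The plan is to treat this as an exact algebraic telescoping identity (the ``martingale'' structure is only exploited later, when the individual summands are bounded, so at this stage I just need the decomposition to hold term by term). Throughout, I would keep careful track of which inner products live on $\mathcal{S}\times\mathcal{A}$ and which on $\mathcal{S}$, and I would rely only on the operational recursions $d^\pi_{t+1}=P^\pi_{t+1}d^\pi_t$ and $\widehat{d}^\pi_{t+1}=\widehat{P}^\pi_{t+1}\widehat{d}^\pi_t$ together with the Bellman relation $Q^\pi_t = r_t + (P^\pi_{t+1})^\top Q^\pi_{t+1}$, where $(P^\pi_{t+1})^\top$ is the adjoint satisfying $\langle P^\pi_{t+1}\widehat{d}^\pi_t, Q^\pi_{t+1}\rangle = \langle \widehat{d}^\pi_t, (P^\pi_{t+1})^\top Q^\pi_{t+1}\rangle$.

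First I would introduce a hybrid quantity that flows the empirical occupancy forward but closes with the \emph{true} value-to-go: for $t=1,\dots,H+1$,
\[
G_t := \sum_{s=1}^{t-1}\langle \widehat{d}^\pi_s, r_s\rangle + \langle \widehat{d}^\pi_t, Q^\pi_t\rangle,
\]
with $Q^\pi_{H+1}\equiv 0$. The two endpoints are exactly the objects I want: since $\widehat{d}^\pi_1(s,a)=\widehat{d}_1(s)\pi_1(a|s)$ and $V^\pi_1(s)=\sum_a\pi_1(a|s)Q^\pi_1(s,a)$, the empty-sum endpoint collapses to the state-only inner product $G_1=\langle V^\pi_1,\widehat{d}^\pi_1\rangle$, while $G_{H+1}=\sum_{t=1}^H\langle\widehat{d}^\pi_t,r_t\rangle$. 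The strategy is to telescope $G_{H+1}-G_1=\sum_{t=1}^{H}(G_{t+1}-G_t)$ and then subtract $v^\pi=\langle V^\pi_1,d^\pi_1\rangle$, since the left-hand side of the lemma is precisely $\sum_t\langle\widehat{d}^\pi_t,r_t\rangle - v^\pi = G_{H+1}-v^\pi$.

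The key step is a single increment. Substituting the Bellman recursion and moving the adjoint across, the immediate-reward terms cancel and I obtain
\[
G_{t+1}-G_t = \langle \widehat{d}^\pi_{t+1}-P^\pi_{t+1}\widehat{d}^\pi_t,\; Q^\pi_{t+1}\rangle = \langle Q^\pi_{t+1},\,(\widehat{P}^\pi_{t+1}-P^\pi_{t+1})\widehat{d}^\pi_t\rangle,
\]
using $\widehat{d}^\pi_{t+1}=\widehat{P}^\pi_{t+1}\widehat{d}^\pi_t$ in the last equality. Because $Q^\pi_{H+1}=0$, the $t=H$ increment vanishes, so the telescoped sum runs over $h=t+1\in\{2,\dots,H\}$ exactly as in the statement. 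It then remains to collapse each state--action increment to the state-level form: both $P^\pi_h$ and $\widehat{P}^\pi_h$ factor through the action-selection $\pi_h$, so $[(\widehat{P}^\pi_h - P^\pi_h)\widehat{d}^\pi_{h-1}](s',a') = \pi_h(a'|s')\,[(\widehat{T}_h-T_h)\widehat{d}^\pi_{h-1}](s')$, and summing this against $Q^\pi_h(s',a')$ while applying $V^\pi_h(s')=\sum_{a'}\pi_h(a'|s')Q^\pi_h(s',a')$ turns each increment into $\langle V^\pi_h,(\widehat{T}_h-T_h)\widehat{d}^\pi_{h-1}\rangle$. Combining with $G_1 - v^\pi = \langle V^\pi_1, \widehat{d}^\pi_1 - d^\pi_1\rangle$ produces the claimed identity.

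I expect the only real friction to be notational bookkeeping: keeping straight which inner products are over $\mathcal{S}\times\mathcal{A}$ versus $\mathcal{S}$, handling the boundary terms ($Q^\pi_{H+1}=0$ and the conversion of $G_1$ to $V$-form), and justifying the two $Q\!\to\!V$ reductions through the $\pi$-factorization. A fully equivalent but messier-to-typeset alternative would avoid the hybrid $G_t$ entirely: write the one-step error recursion $\widehat{d}^\pi_t - d^\pi_t = (\widehat{P}^\pi_t - P^\pi_t)\widehat{d}^\pi_{t-1} + P^\pi_t(\widehat{d}^\pi_{t-1}-d^\pi_{t-1})$, unroll it, swap the order of summation, and recognize $\sum_{t=h}^{H}\bigl(\prod_{k=h+1}^{t}P^\pi_k\bigr)^\top r_t = Q^\pi_h$; this yields the same decomposition but with more index tracking, so I would present the telescoping argument.
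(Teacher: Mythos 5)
Your proposal is correct, and it establishes the identity by a genuinely different (though closely related) route from the paper. The paper's proof (Theorem~\ref{thm:martingale_decompose}) is exactly the ``alternative'' you sketch in your last paragraph: it writes the one-step error recursion $\widehat{d}^\pi_t-d^\pi_t=\pi_t(\widehat{T}_t-T_t)\widehat{d}^\pi_{t-1}+\pi_t T_t(\widehat{d}^\pi_{t-1}-d^\pi_{t-1})$, unrolls it into $\sum_{h=2}^{t}\Gamma_{h+1:t}\pi_h(\widehat{T}_h-T_h)\widehat{d}^\pi_{h-1}+\Gamma_{1:t}(\widehat{d}^\pi_1-d^\pi_1)$, substitutes into $\sum_t\langle r_t,\cdot\rangle$, swaps the order of summation, and recognizes $\sum_{t=h}^H\pi_h^T\Gamma_{h+1:t}^T r_t=V^\pi_h$. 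Your telescoping of the hybrid $G_t=\sum_{s<t}\langle\widehat{d}^\pi_s,r_s\rangle+\langle\widehat{d}^\pi_t,Q^\pi_t\rangle$ reaches the same identity without the double sum: each increment is computed correctly (the Bellman substitution cancels the reward term, the $t=H$ increment vanishes since $Q^\pi_{H+1}=0$, and the $\pi_h$-factorization collapses each state--action increment to $\langle V^\pi_h,(\widehat{T}_h-T_h)\widehat{d}^\pi_{h-1}\rangle$), and the boundary conversions $G_1=\langle V^\pi_1,\widehat{d}^\pi_1\rangle$ and $v^\pi=\langle V^\pi_1,d^\pi_1\rangle$ are handled properly. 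Your version is arguably cleaner and makes the roll-in/roll-out structure transparent; the paper's unrolling has the side benefit of producing the explicit expansion of $\widehat{d}^\pi_t-d^\pi_t$ itself (their equation for the density difference), which is reused later in the deterministic-policy analysis to bound $\|\widetilde{d}^\pi_t-d^\pi_t\|_\infty$ via Azuma--Hoeffding, so that intermediate formula is not wasted there. One small caution: since the identity is applied to both the raw and the fictitious estimators, you should note that your argument only uses the recursions $\widehat{d}^\pi_{t+1}=\widehat{P}^\pi_{t+1}\widehat{d}^\pi_t$ and $\widehat{d}^\pi_1(s,a)=\widehat{d}_1(s)\pi_1(a|s)$ and never that $\widehat{d}^\pi_t$ is a probability vector, so it remains valid even when some $n_{s_t,a_t}=0$ and $\widehat{P}$ is set to zero.
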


\begin{remark}
	Note when the reward is deterministic, the left hand side is simply $\widehat{v}^\pi-v^\pi$, and the right hand side has a martingale structure which enables the applicability of concentration analysis that gives rise to the following theorems. Moreover, this decomposition is essentially ``primal-dual'' formulation since the LHS can be viewed as the primal form through marginal distribution representation and RHS is the dual form with value function representation.   
\end{remark}

\subsection{Uniform OPE for global policy class} \label{sec:mart_con}

We present the following result Theorem~\ref{thm:rad} for global policy class.
\begin{theorem}\label{thm:rad}
		Let $\Pi$ consists of all policies, then there exists an absolute constant $c$ such that if $n>c\cdot 1/d_m\cdot \log(HSA/\delta)$, then with probability $1-\delta$, we have:
	\begin{equation*}
	\sup_{\pi\in\Pi} \left|\widehat{v}^\pi-v^\pi\right|\leq c \left(\sqrt{\frac{H^4\log(\frac{HSA}{\delta})}{d_m \cdot n}} +\sqrt{\frac{H^4S\log(nHSA)}{d_m \cdot n}}\right).
	\end{equation*}
	
Moreover, if failure probability $\delta<e^{-S}$, then above can be further bounded by $2c \sqrt{\frac{H^4}{d_m\cdot n}\log(\frac{nHSA}{\delta})}$.
\end{theorem}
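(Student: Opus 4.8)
The plan is to start from the exact identity
$\widehat v^\pi - v^\pi = \sum_{t=1}^H\langle \widehat d^\pi_t,\, \widehat r_t - r_t\rangle + \sum_{t=1}^H\langle \widehat d^\pi_t - d^\pi_t,\, r_t\rangle$,
which splits the error into a \emph{reward-estimation} part and a \emph{distribution-shift} part. The second sum is precisely the left-hand side of \Cref{lem:martingale_decompose}, so I would rewrite it as $\sum_{h=2}^H\langle V^\pi_h,(\widehat T_h - T_h)\widehat d^\pi_{h-1}\rangle + \langle V^\pi_1,\widehat d^\pi_1 - d^\pi_1\rangle$, which exposes the true value functions $V^\pi_h$ (crucially data-independent for fixed $\pi$) and the one-step transition errors $\widehat T_h - T_h$. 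The reward part is lower order: since each $\widehat d^\pi_t$ is a probability vector, it is at most $\sum_t\max_{s,a}|\widehat r_t(s,a)-r_t(s,a)| \lesssim H\sqrt{\log(HSA/\delta)/(d_m n)}$, which is dominated by the $H^2$-scaled transition term obtained below, so the two stated terms both come from the transition part.

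The heart of the argument is removing the supremum over the uncountable policy class from the transition term. Here I would exploit that $\widehat d^\pi_{h-1}$ is a distribution on $\cS\times\cA$: writing $\langle V^\pi_h,(\widehat T_h - T_h)\widehat d^\pi_{h-1}\rangle = \sum_{s,a}\widehat d^\pi_{h-1}(s,a)\,\langle V^\pi_h,(\widehat P_h - P_h)(\cdot\mid s,a)\rangle$ and using $\widehat d^\pi_{h-1}\ge 0$ with $\sum_{s,a}\widehat d^\pi_{h-1}(s,a)=1$ gives
$\sup_\pi|\langle V^\pi_h,(\widehat T_h - T_h)\widehat d^\pi_{h-1}\rangle| \le \max_{s,a}\sup_\pi|\langle V^\pi_h,(\widehat P_h - P_h)(\cdot\mid s,a)\rangle| \le H\max_{s,a}\norm{\widehat P_h(\cdot\mid s,a)-P_h(\cdot\mid s,a)}_1$,
where the last step uses $\norm{V^\pi_h}_\infty\le H$. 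This is the decisive reduction: uniform control over all policies is traded for the $\ell_1$ accuracy of each empirical conditional transition, a purely data-driven quantity free of $\pi$.

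It then remains to bound $\max_{t,s,a}\norm{\widehat P_t(\cdot\mid s,a)-P_t(\cdot\mid s,a)}_1$. I would invoke the standard $\ell_1$ deviation inequality for multinomials, giving $\norm{\widehat P_t(\cdot\mid s,a)-P_t(\cdot\mid s,a)}_1 \lesssim \sqrt{(S+\log(1/\delta'))/n_{s,a}}$ conditionally on the count $n_{s,a}$, followed by a union bound over the $HSA$ triples and the realized counts (where the extra $\log n$ enters). The random counts are handled separately: a multiplicative Chernoff bound shows $n_{s,a}\ge \tfrac12 n\, d^\mu_t(s,a)\ge \tfrac12 n d_m$ simultaneously for all accessible $(s,a,t)$ whenever $n\gtrsim \log(HSA/\delta)/d_m$, which is exactly the hypothesis on $n$. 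Substituting $n_{s,a}\gtrsim d_m n$, summing the $H$ martingale terms each carrying a factor $H$ from $\norm{V^\pi_h}_\infty$, and using $\sqrt{a+b}\le\sqrt a+\sqrt b$ to separate the $S$-part from the $\log(1/\delta)$-part yields the two stated terms $\sqrt{H^4\log(HSA/\delta)/(d_m n)}$ and $\sqrt{H^4 S\log(nHSA)/(d_m n)}$.

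The main obstacle is the uniformity over the policy class: because $\Pi$ contains all stochastic policies, both $V^\pi_h$ and $\widehat d^\pi_{h-1}$ vary continuously with $\pi$ and are coupled, so a naive union bound is hopeless. The simplex reduction in the second step is what defuses this, trading the $\pi$-supremum for the dimension factor $S$ (equivalently, this is the cost of covering the value functions $\{V^\pi_h\}\subseteq[0,H]^{\cS}$ at resolution $\sim 1/n$, which is the cleanest source of the $S\log n$ factor). A secondary technical point is that the denominators $n_{s,a}$ are themselves random and data-dependent, so the Chernoff step and the hypothesis $n\gtrsim\log(HSA/\delta)/d_m$ are needed before any concentration can be applied. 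Finally, the \emph{moreover} claim is a routine simplification: when $\delta<e^{-S}$ we have $S\le\log(1/\delta)\le\log(nHSA/\delta)$, so the $S$-dependent term is absorbed into the $\log(1/\delta)$-dependent term and the two collapse into the single bound $2c\sqrt{H^4\log(nHSA/\delta)/(d_m n)}$.
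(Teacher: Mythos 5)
Your proposal is correct, but it reaches the bound by a genuinely different route than the paper. The paper's proof (Section~\ref{sec:the_uni_crude}) works with the fictitious estimator built from the truncated dataset $\mathcal{D}'$ and splits the task in two: McDiarmid's inequality with bounded differences $2H^2/N$ controls the deviation of $\sup_\pi|\widehat v^\pi-v^\pi|$ from its mean (this is where $\sqrt{H^4\log(HSA/\delta)/(d_m n)}$ comes from), and then $\E[\sup_\pi|\cdot|]$ is bounded separately by splitting the entries of $\widehat T_h-T_h$ according to whether $P_h(s_h|s_{h-1},a_{h-1})$ exceeds a threshold $\gamma$, applying Bernstein in relative form on the large entries and in absolute form on the small ones, and choosing $\gamma=1/S$ (the source of $\sqrt{H^4S\log(nHSA)/(d_m n)}$). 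You instead bound the supremum directly in high probability: after the decomposition of Lemma~\ref{lem:martingale_decompose} you use that $\widehat d^\pi_{h-1}$ is a (sub)probability vector and $\|V^\pi_h\|_\infty\le H$ to get $\sup_\pi|\langle V^\pi_h,(\widehat T_h-T_h)\widehat d^\pi_{h-1}\rangle|\le H\max_{s,a}\|\widehat P_h(\cdot|s,a)-P_h(\cdot|s,a)\|_1$, a $\pi$-free quantity, and then invoke the Weissman-type multinomial $\ell_1$ bound $\lesssim\sqrt{(S+\log(1/\delta'))/n_{s,a}}$ together with the Chernoff bound on the counts; the two stated terms then fall out of $\sqrt{S+\log(1/\delta)}\le\sqrt S+\sqrt{\log(1/\delta)}$ rather than from two separate concentration arguments. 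This is essentially the sharpened simulation lemma that the paper itself concedes (Section~\ref{sec:app_sl}) already yields the $\widetilde O(H^4S/(d_m\epsilon^2))$ rate, so your argument is a legitimate and more elementary derivation; indeed the martingale structure is not even load-bearing in your version (a termwise H\"older bound in the standard value-difference recursion would do), and your form of the bound makes the ``moreover'' step cleaner, since $\delta<e^{-S}$ gives $S+\log(nHSA/\delta)\le 2\log(nHSA/\delta)$ directly, whereas absorbing the paper's $S\log(nHSA)$ factor into $\log(nHSA/\delta)$ requires more squinting. What the paper's route buys is the statistical-learning-style decomposition (concentration of the supremum plus a bound on its expectation) that the authors want to showcase; what yours buys is brevity and a slightly cleaner logarithmic factor on the $S$ term. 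The only points requiring care, both of which you flag, are the randomness of the denominators $n_{s,a}$ (the paper handles this with the fictitious estimator and the first-$N$-samples device; conditioning on the Chernoff event and union-bounding over realized counts works equally well) and the fact that conditional on the visit counts the observed next states are i.i.d.\ draws from $P_t(\cdot|s,a)$, which is what licenses the multinomial deviation bound.
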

The first term in the bound reflects the concentration of $\sup_{\pi\in\Pi} \left|\widehat{v}^\pi-v^\pi\right|$ around its mean, via McDiarmid inequality. The second term is a bound of $\E[\sup_{\pi\in\Pi} \left|\widehat{v}^\pi-v^\pi\right|]$. The analysis of both terms rely on the Martingale decomposition from Lemma~\ref{lem:martingale_decompose}. 
 
%
Our result improves over the simulation lemma by a factor of $HS$ but is suboptimal by another factor $HS$ comparing to the lower bound (Theorem~\ref{thm:uni_lower}). In the small failure probability regime ( $\delta<e^{-S}$) we can get rid of the dependence on $S$ except for the implicit dependence through $d_m$. This is meaningful since we usually consider deriving results with high confidence. 

 
 \subsection{Uniform OPE for deterministic policies} \label{sec:deter_policy}
 The Martingale decomposition also allows us to derive a high-probability OPE bound via a concentration argument, which complements the optimal bounds on mean square error from \citep{yin2020asymptotically}.
\begin{lemma}[Convergence for fixed policy]
  \label{thm:single_finer_bound}
	Fix any policy $\pi$. Then there exists absolute constants $c,c_1,c_2 $ such that if $n>c\cdot 1/d_m\cdot \log(HSA/\delta)$, then with probability $1-\delta$, we have:{
		\begin{align*}
	&\left|\widehat{v}^\pi-v^\pi\right|\leq c_1\sqrt{\frac{H^2\log(\frac{c_2HSA}{\delta})}{n\cdot d_m}} + \tilde{O}\left(\frac{H^2\sqrt{SA}}{n \cdot d_m}\right).
	\end{align*}}
\end{lemma}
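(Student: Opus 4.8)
The plan is to start from the martingale decomposition of Lemma~\ref{lem:martingale_decompose} and bound each piece by a Bernstein-type concentration inequality for martingales, with the crucial variance bookkeeping supplied by the law of total variance. Since rewards are stochastic under Assumption~\ref{assume1}, I first split $\widehat v^\pi - v^\pi = \sum_{t}\langle \widehat d^\pi_t - d^\pi_t, r_t\rangle + \sum_t \langle \widehat d^\pi_t, \widehat r_t - r_t\rangle$, applying Lemma~\ref{lem:martingale_decompose} to the first (transition) group and handling the reward-estimation group separately. I will argue the reward group is strictly lower order: its total conditional variance scales like $H/(n d_m)$ rather than $H^2/(n d_m)$, because $\sum_t \langle d^\pi_t, \Var(r_t\mid s,a)\rangle \le H/4$. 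Hence the dominant contribution is the transition term $\sum_{h=2}^H X_h$ with $X_h := \langle V^\pi_h, (\widehat T_h - T_h)\widehat d^\pi_{h-1}\rangle$, together with the initial-distribution term $\langle V^\pi_1, \widehat d^\pi_1 - d^\pi_1\rangle$, which concentrates at the faster rate $H/\sqrt n$.

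Before concentrating I condition on the high-probability good event that, under $n > c\,\log(HSA/\delta)/d_m$, a multiplicative Chernoff bound makes every accessible $(s,a)$ satisfy $n_{s,a} \ge \tfrac12 n\, d^\mu_{t}(s,a) \ge \tfrac12 n d_m$; this both removes the $n_{s,a}=0$ branch of OPEMA (so the plug-in estimates are conditionally unbiased) and supplies the count lower bounds needed below. On this event I order the martingale by the time index $h$ with filtration $\cF_{h-1}$ generated by all transition samples feeding into times $\le h-1$ together with the time-$(h-1)$ visitation counts. The key observation is that $\widehat d^\pi_{h-1}$ is $\cF_{h-1}$-measurable while $\widehat T_h$ uses the fresh $(h-1)\!\to\! h$ samples, and $V^\pi_h$ is the \emph{true} value function; therefore $\E[X_h\mid \cF_{h-1}] = 0$ and $\{X_h\}$ is a martingale-difference sequence.

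The heart of the argument is the conditional-variance bound. Because the rows $\widehat P_h(\cdot\mid s,a)$ are independent across $(s,a)$, $\Var(X_h\mid \cF_{h-1}) = \sum_{s,a} \widehat d^\pi_{h-1}(s,a)^2\,\sigma^2_h(s,a)/n_{s,a}$, where $\sigma^2_h(s,a) := \Var_{s'\sim P_h(\cdot\mid s,a)}(V^\pi_h(s'))$. I will write $\widehat d^\pi_{h-1}(s,a)^2/n_{s,a} = \widehat d^\pi_{h-1}(s,a)\cdot \widehat d^\pi_{h-1}(s,a)/n_{s,a}$ and, using the good-event count bound together with an inductive control of $\widehat d^\pi_{h-1}$ by $d^\pi_{h-1}$, replace it by essentially $d^\pi_{h-1}(s,a)\cdot d^\pi_{h-1}(s,a)/(n\,d^\mu_{h-1}(s,a))$. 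Bounding the importance ratio $d^\pi_{h-1}(s,a)/d^\mu_{h-1}(s,a)\le 1/d_m$ and summing over $h$ yields $\sum_h \Var(X_h\mid \cF_{h-1}) \lesssim \tfrac{1}{n d_m}\sum_h \langle d^\pi_{h-1}, \sigma^2_h\rangle$, and the law of total variance gives $\sum_h \langle d^\pi_{h-1}, \sigma^2_h\rangle = \Var_\pi(\sum_t r_t) \le H^2$. The total conditional variance is thus $O(H^2/(n d_m))$, which is exactly what produces the leading term $\sqrt{H^2\log(c_2 HSA/\delta)/(n d_m)}$ after Freedman's inequality; the companion linear-in-$\tfrac1n$ deviation term, the fluctuation of $n_{s,a}$ around $n d^\mu$, the small residual bias controlled on the good event, and the reward/initial-distribution pieces collect into the stated higher-order $\tilde O(H^2\sqrt{SA}/(n d_m))$.

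I expect the main obstacle to be the variance step: the clean $H^2$ (rather than $H^3$) scaling relies on the law-of-total-variance cancellation, and invoking it requires legitimately swapping the data-dependent weights $\widehat d^\pi_{h-1}(s,a)^2/n_{s,a}$ for their population surrogates $d^\pi_{h-1}(s,a)^2/(n\, d^\mu_{h-1}(s,a))$. Making this swap rigorous is delicate because $\widehat d^\pi_{h-1}$ is a nonlinear function of the entire pre-$h$ dataset and is correlated with the counts, so I will need a uniform (over $h,s,a$) multiplicative concentration of both $\widehat d^\pi_{h-1}/d^\pi_{h-1}$ and $n_{s,a}/(n d^\mu)$; the $\sqrt{SA}$ and logarithmic factors enter precisely through the union bound in that step.
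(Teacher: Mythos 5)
Your proposal follows essentially the same route as the paper's proof: reduce to the good event where all counts satisfy $n_{s_t,a_t}\geq n d^\mu_t(s_t,a_t)/2$ (the paper's fictitious estimator plus Lemma~\ref{lem:sufficient_sample}), peel off the reward-estimation term (Lemma~\ref{lem:app_1}), treat $\sum_h\langle V^\pi_h,(\widetilde T_h-T_h)\widetilde d^\pi_{h-1}\rangle$ as a martingale in $h$, compute the conditional variance exactly as in Lemma~\ref{lem:cond_var}, invoke the law of total variance (Lemma~\ref{lem:H3_to_H2}) to get the $H^2$ rather than $H^3$ scaling, and finish with Freedman's inequality (Theorem~\ref{thm:finer_difference_bound}).

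The one step where your plan as literally stated would not go through is the proposed \emph{multiplicative} concentration of $\widehat d^\pi_{h-1}/d^\pi_{h-1}$: no such bound is available with polynomially many samples when $d^\pi_{h-1}(s,a)$ is very small or zero, and the target policy $\pi$ places no lower bound on its own occupancy measure (only $\mu$'s occupancies are bounded below by $d_m$). The paper instead proves a uniform \emph{additive} bound $|\widetilde d^\pi_t(s,a)-d^\pi_t(s,a)|\leq \tilde O(\sqrt{H/(nd_m)})$ via Azuma--Hoeffding applied to the per-coordinate martingale $\sum_h(\Gamma_{h+1:t}\pi_h(\widetilde T_h-T_h)\widetilde d^\pi_{h-1})(s,a)$, which gives $\widetilde d^\pi_t(s,a)^2\leq 2d^\pi_t(s,a)^2+\tilde O(H/(nd_m))$ as in \eqref{eqn:bound_d}. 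Substituting this into the conditional variance, the $d^\pi$-part telescopes through the law of total variance into $O(H^2/(nd_m))$, while the additive remainder, summed over the $SA$ state-action pairs and multiplied by the pointwise variance bound $H^2$, produces the $\tilde O(H^4SA/(n^2d_m^2))$ term of Lemma~\ref{lem:var_order} --- this sum over pairs, not the union bound alone, is what generates the $\sqrt{SA}$ in the stated higher-order term. With that substitution your argument matches the paper's.
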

Note if we absorb the higher order term, our result implies sample complexity of $\widetilde{O}({H^2}/{d_m\epsilon^2})$ for evaluating any fixed target policy $\pi$. Notice that the total number of deterministic policies is $A^{HS}$ in our problem, a standard union bound over all deterministic policies yields the following result.
\begin{theorem}\label{thm:finer_bound}
	Let $\Pi$ consists of all deterministic policies, then there exists absolute constants $c,c_1,c_2$ such that if $n>c\cdot 1/ d_m\cdot \log(HSA/\delta)$, then with probability $1-\delta$, we have:
		\begin{align*}
	&\sup_{\pi\in\Pi}\left|\widehat{v}^\pi-v^\pi\right|
	\leq c_1\sqrt{\frac{H^3S\log(\frac{c_2HSA}{\delta})}{n\cdot d_m}}
	+ \tilde{O}\left(\frac{H^3S^{1.5}A^{0.5}}{n \cdot d_m}\right).
	\end{align*}
\end{theorem}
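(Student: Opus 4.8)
The plan is to reduce the statement to the fixed-policy bound of Lemma~\ref{thm:single_finer_bound} by a union bound over the finite class of deterministic policies, taking care to keep the sample-size requirement at $\log(HSA/\delta)$ rather than letting it inflate by the cardinality of $\Pi$. Since the heavy lifting (the martingale decomposition of Lemma~\ref{lem:martingale_decompose} together with the Bernstein/martingale concentration that yields the fixed-$\pi$ high-probability bound) is already packaged in Lemma~\ref{thm:single_finer_bound}, the remaining work is essentially bookkeeping.

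First I would count the class: a non-stationary deterministic policy assigns, at each of the $H$ time steps and each of the $S$ states, one of $A$ actions, so $|\Pi| = A^{HS}$. The natural move is to invoke Lemma~\ref{thm:single_finer_bound} at the reduced failure level $\delta' = \delta/A^{HS}$ for each fixed $\pi\in\Pi$ and union bound. Substituting $\delta'$ into the leading term turns $\log(c_2 HSA/\delta)$ into $\log(c_2 HSA/\delta') = \log(c_2 HSA/\delta) + HS\log A$, so the first term becomes $c_1\sqrt{H^2(\log(c_2 HSA/\delta) + HS\log A)/(n d_m)}$. Because $HS\log A$ dominates and $\log(c_2 HSA/\delta)\ge \log A$, this is at most $c_1\sqrt{H^3 S\log(c_2 HSA/\delta)/(n d_m)}$ after absorbing an absolute constant, which is exactly the claimed leading term. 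For the higher-order term, the $\tilde{O}(H^2\sqrt{SA}/(n d_m))$ of Lemma~\ref{thm:single_finer_bound} hides a $\log(1/\delta')$ factor; replacing $\delta$ by $\delta/A^{HS}$ multiplies this polylog by $\log(A^{HS}) = HS\log A = \tilde{O}(HS)$, upgrading the bound to $\tilde{O}(H^2\sqrt{SA}\cdot HS/(n d_m)) = \tilde{O}(H^3 S^{1.5}A^{0.5}/(n d_m))$, matching the stated second term.

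The one point that needs care — and the only real obstacle — is that a naive union bound would also push the sample-size requirement up to $n > c/d_m\cdot(\log(HSA/\delta) + HS\log A)$, which is stronger than the stated $n > c/d_m\cdot\log(HSA/\delta)$. I would avoid this by splitting the single-policy failure event into two pieces. The first is a policy-\emph{independent} ``good-count'' event, namely that every visitation count $n_{s_t,a_t}$ concentrates around $n\,d^\mu_t(s_t,a_t)$; its failure is controlled by a union bound over only the $HSA$ state-action-time triples, and this is precisely what the $n$-condition guarantees. The second is a policy-\emph{dependent} martingale-concentration event for the value error, conditioned on the good-count event. Only the latter is union-bounded over the $A^{HS}$ policies, so only it contributes the $HS\log A$ factor to the $\epsilon$-bound; the former keeps the sample-size requirement at $\log(HSA/\delta)$.

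Concretely, I would allocate probability $\delta/2$ to the good-count event (requiring $n > c/d_m\cdot\log(HSA/\delta)$ for an appropriate absolute constant $c$) and $\delta/(2A^{HS})$ per policy to the value-error event, so that the total failure probability is at most $\delta$ by summing. On the intersection of these events the fixed-$\pi$ bound of Lemma~\ref{thm:single_finer_bound} holds simultaneously for all $\pi\in\Pi$ at level $\delta' = \delta/(2A^{HS})$, and taking the supremum over $\pi$ together with the two simplifications above yields the stated bound. The argument is therefore routine once the failure events are separated correctly; the genuine analytic content lives entirely in the fixed-policy lemma, and the deterministic structure of $\Pi$ is what makes the finite cover (hence the $\log|\Pi| = HS\log A$ penalty) available at all.
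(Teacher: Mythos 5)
Your proposal is correct and follows essentially the same route as the paper: a union bound over the $|\Pi|=A^{HS}$ deterministic policies applied to the fixed-policy bound of Lemma~\ref{thm:single_finer_bound}, which inflates the leading term by $\sqrt{HS\log A}$ and the higher-order term (whose hidden polylog is a product of two logarithms) by $HS$. Your separation of the policy-independent visitation-count event from the policy-dependent concentration event is exactly how the paper keeps the sample-size condition at $n>c\,d_m^{-1}\log(HSA/\delta)$ (via Lemma~\ref{lem:sufficient_sample} and the fictitious estimator), so no further changes are needed.
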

Theorem~\ref{thm:finer_bound} implies an episode complexity of $\widetilde{O}(H^3S/d_m\epsilon^2)$, which is optimal in $H$ but suboptimal by a factor of $S$.  While the deterministic policy class seems restrictive, it could be useful in many cases because the optimal policy is deterministic, and many exploration-bonus based exploration methods use deterministic policy throughout.

\begin{remark}\label{remark:diff_dp}
	The similar high-probability OPE bound in Lemma~\ref{thm:single_finer_bound} was proven before by \citep{yin2020asymptotically} through the data-splitting type estimator. However their theory does not imply efficient offline learning, see Section~\ref{sec:alg} in appendix for discussion.
\end{remark}

\subsection{Uniform OPE for the local (near \emph{empirically optimal}) policy class}
For the local (near \emph{empirically optimal}) policy class  we described in Section~\ref{sec:problems}, the following theorem obtains the optimal episode complexity.
\begin{theorem}
	\label{thm:local_uni_opt}
	Suppose $\epsilon_{\text{opt}}\leq \sqrt{H}/S$ and $
	\Pi_1:=\lbrace \pi:s.t.\; ||\widehat{V}^\pi_t-\widehat{V}^{\widehat{\pi}^\star}_t ||_\infty\leq\epsilon_\text{opt},\;\forall t=1,...,H\rbrace
	$. Then there exists constant $c_1,c_2$ such that for any $0<\delta<1$, when $n>c_1H^2\log(HSA/\delta)/d_m$, we have with probability $1-\delta$,
	\[
	\sup_{\pi\in\Pi_1}\norm{\widehat{Q}^{{\pi}}_1-Q^{{\pi}}_1}_\infty\leq
	c_2\sqrt{\frac{H^3\log(HSA/\delta)}{n\cdot d_m}}.
	\]
      \end{theorem}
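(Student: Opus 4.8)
The plan is to control $\widehat{Q}^\pi_1-Q^\pi_1$ at every fixed start $(s,a)$ through a telescoping (simulation-lemma) identity, and then pass to the supremum over $\Pi_1$ by exploiting that $\epsilon_{\text{opt}}\le\sqrt{H}/S$ forces the entire local class to behave like a single policy. First I would expand, for fixed $\pi$ and fixed $(s,a)$,
\[
\widehat{Q}^\pi_1(s,a)-Q^\pi_1(s,a)=\sum_{h=1}^H \widehat{\E}^\pi\!\left[(\widehat{r}_h-r_h)(s_h,a_h)+\big((\widehat{P}_h-P_h)V^\pi_{h+1}\big)(s_h,a_h)\,\middle|\,s_1=s,a_1=a\right],
\]
where $\widehat{\E}^\pi$ is the rollout under the empirical kernel $\widehat{P}$ and $\pi$; this is the $Q$-level analogue of the martingale decomposition of Lemma~\ref{lem:martingale_decompose}. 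Since the rollout is an average, controlling $\norm{\cdot}_\infty$ over start states reduces to bounding, for each $h$, the per-cell quantities $\max_{s,a}|(\widehat{r}_h-r_h)(s,a)|$ and $\max_{s,a}|((\widehat{P}_h-P_h)V^\pi_{h+1})(s,a)|$.

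The reward term is disposed of by a union bound of Hoeffding over the $SA$ cells (this is $\pi$-independent); with $n_{s,a}\gtrsim n d_m$ guaranteed with high probability once $n>c_1H^2\log(HSA/\delta)/d_m$ (multiplicative Chernoff), it contributes at most $\widetilde{O}(\sqrt{H^2/(nd_m)})$, already below the target. The crux is the transition term, and here the optimal $H^3$ (rather than $H^4$) comes from a Bernstein bound paired with the law of total variance: for a \emph{fixed} value vector $W$ one has $|((\widehat{P}_h-P_h)W)(s,a)|\lesssim \sqrt{\Var_{P_h(\cdot|s,a)}(W)\log(HSA/\delta)/(nd_m)}+H\log(HSA/\delta)/(nd_m)$, and summing the leading pieces along a trajectory by Cauchy--Schwarz gives $\sqrt{H}\,\big(\sum_h\widehat{\E}^\pi[\Var_{P_h}(V^\pi_{h+1})]\big)^{1/2}\lesssim \sqrt{H}\cdot\sqrt{H^2}=H^{3/2}$, because the inner sum is a total-return variance bounded by $H^2$. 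This delivers $\sqrt{H^3\log(HSA/\delta)/(nd_m)}$ for each fixed $\pi$.

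The passage to $\sup_{\pi\in\Pi_1}$ is where $\epsilon_{\text{opt}}\le\sqrt{H}/S$ enters. I would replace the $\pi$-dependent vector $V^\pi_{h+1}$ by a single pivot, writing $V^\pi_{h+1}=\widehat{V}^{\widehat{\pi}^\star}_{h+1}+(V^\pi_{h+1}-\widehat{V}^{\widehat{\pi}^\star}_{h+1})$. The defining constraint $\norm{\widehat{V}^\pi_t-\widehat{V}^{\widehat{\pi}^\star}_t}_\infty\le\epsilon_{\text{opt}}$ together with a crude uniform bound $\Delta:=\sup_\pi\norm{\widehat{V}^\pi_t-V^\pi_t}_\infty$ (available from the simulation lemma or the analysis behind Theorem~\ref{thm:finer_bound}) controls the residual by $\epsilon_{\text{opt}}+2\Delta$ in $\norm{\cdot}_\infty$; bounding its effect by $\norm{\widehat{P}_h-P_h}_1\cdot(\epsilon_{\text{opt}}+2\Delta)\lesssim \sqrt{S/(nd_m)}\,(\epsilon_{\text{opt}}+2\Delta)$ per step and summing over $h$ yields, thanks precisely to $\epsilon_{\text{opt}}\le\sqrt{H}/S$, a contribution $\lesssim\sqrt{H^3/(Snd_m)}$ plus a genuinely higher-order $\Delta$-term, both dominated by the target. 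This is exactly the step that removes the extra $\sqrt{S}$ present in the global-class bounds of Theorems~\ref{thm:rad} and~\ref{thm:finer_bound}.

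The main obstacle is the pivot term $\max_{s,a}|((\widehat{P}_h-P_h)\widehat{V}^{\widehat{\pi}^\star}_{h+1})(s,a)|$: because $\widehat{\pi}^\star$ and $\widehat{V}^{\widehat{\pi}^\star}$ are functions of the same data defining $\widehat{P}_h$, the Bernstein step is not directly applicable, as the summands are no longer conditionally centered and independent. I would resolve this by a decorrelation argument --- a leave-one-out / absorbing-MDP construction that replaces $\widehat{V}^{\widehat{\pi}^\star}$ by a version independent of the transitions out of the cell under consideration, at the cost of only a $\log S$ factor and a higher-order perturbation --- after which the total-variance Bernstein bound applies to the now-independent pivot and the \emph{exact} total-variance identity on $\widehat{M}$ can be invoked. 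The remaining bookkeeping is to verify that $n>c_1H^2\log(HSA/\delta)/d_m$ renders every $O(1/n)$ Bernstein remainder and every $\Delta$-dependent cross term lower-order than $\sqrt{H^3\log(HSA/\delta)/(nd_m)}$, and to discharge the good events ($n_{s,a}\gtrsim nd_m$ for all cells together with the concentration inequalities) by a final union bound absorbed into the $\log(HSA/\delta)$.
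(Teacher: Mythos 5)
Your skeleton matches the paper's: a Bellman-difference (simulation-lemma) telescoping, a pivot onto $\widehat{V}^{\widehat{\pi}^\star}$, Bernstein plus the law of total variance to get $H^{3/2}$ instead of $H^2$, and the condition $\epsilon_{\text{opt}}\leq\sqrt{H}/S$ to absorb the $S$ coming from the $\ell_1$ bound on $\widehat{P}_h-P_h$. But there is a genuine gap in how you pass to the supremum over $\Pi_1$. Because your decomposition keeps the \emph{true} value $V^\pi_{h+1}$ inside, the residual after pivoting is $V^\pi_{h+1}-\widehat{V}^{\widehat{\pi}^\star}_{h+1}$, which you bound by $\epsilon_{\text{opt}}+2\Delta$ with $\Delta:=\sup_\pi\norm{\widehat{V}^\pi_t-V^\pi_t}_\infty$. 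The $\Delta$ cross term is \emph{not} lower order under the stated condition $n>c_1H^2\log(HSA/\delta)/d_m$: with the crude $\Delta\lesssim\sqrt{H^4S^2\log(\cdot)/(nd_m)}$ from the simulation lemma, the accumulated contribution is of order $H\cdot\sqrt{S^2\log(\cdot)/(nd_m)}\cdot\Delta \asymp H^3S^2\log(\cdot)/(nd_m)$, which is dominated by $\sqrt{H^3\log(\cdot)/(nd_m)}$ only when $n\gtrsim H^3S^4/d_m$; even a self-bounding absorption (noting $\Delta\leq\max_h\norm{\widehat{Q}^\pi_h-Q^\pi_h}_\infty$ and moving it to the left-hand side) requires $n\gtrsim H^2S^2\log(\cdot)/d_m$. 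The paper sidesteps this entirely by writing the mirror-image identity $\widehat{Q}^\pi_t-Q^\pi_t=\sum_{h>t}\Gamma^\pi_{t+1:h-1}(\widehat{P}_h-P_h)\widehat{V}^\pi_h$ with the \emph{empirical} value inside: the residual after pivoting is then $\widehat{V}^{\widehat{\pi}^\star}_h-\widehat{V}^\pi_h$, which is at most $\epsilon_{\text{opt}}$ in $\ell_\infty$ \emph{by the definition of} $\Pi_1$, with no $\Delta$ appearing. (A secondary consequence of your orientation: your total-variance step mixes an empirical rollout outside with a true-kernel variance inside, which needs an extra comparison of $\widehat{d}^\pi_h$ to $d^\pi_h$; the paper keeps everything under the true dynamics and handles the resulting $\norm{\widehat{Q}-Q}_\infty$ feedback by a backward-induction recursion.)

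On the pivot term $(\widehat{P}_h-P_h)\widehat{V}^{\widehat{\pi}^\star}_h$, the leave-one-out / absorbing-MDP decorrelation you propose is unnecessary machinery here. In the time-inhomogeneous setting, after trimming the data so each cell $(s_t,a_t)$ retains exactly $N$ transitions, $\widehat{P}_h$ is built from the step-$(h-1)\to h$ slice of the data while $\widehat{V}^{\widehat{\pi}^\star}_h$ (including the greedy policy itself, computed by backward induction) depends only on the slices from step $h$ onward; conditioned on $N$ these slices are independent, so Bernstein applies directly to the data-dependent pivot. The paper points out explicitly that this is how it avoids the absorbing-MDP technicality of the stationary/infinite-horizon analysis; your route could be made to work but would import that technicality for no benefit.
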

This uniform convergence result is presented with $l_\infty$ norm over $(s,a)$. A direct corollary is $\sup_{\pi\in\Pi_1}\norm{\widehat{V}^{{\pi}}_1-V^{{\pi}}_1}_\infty$ achieves the same rate. Theorem~\ref{thm:local_uni_opt} provides the sample complexity of $O(H^3\log(HSA/\delta)/d_m\epsilon^2)$ and the dependence of all parameters are optimal up to the logarithmic term. Note that our bound does not explicitly depend on $\epsilon_{\text{opt}}$, which is an improvement over \citep{agarwal2020model} as they have an additional $O(\epsilon_{\text{opt}}/(1-\gamma))$ error in the infinite horizon setting. Besides, our assumption on $\epsilon_{\text{opt}}$ is mild since the required upper bound is proportional to $\sqrt{H}$. Lastly, this result implies a $O(\epsilon+\epsilon_{\text{opt}})$-optimal policy for offline/batch learning of the optimal order $O(H^3\log(HSA/\delta)/d_m\epsilon^2)$ (Theorem~\ref{thm:offlinelearning}), which means statistical learning result enables offline learning.  

\subsection{Information-theoretical lower bound}\label{subsec:lower}
Finally, we present a fine-grained sample complexity lower bound of the uniform OPE problem that captures the dependence of all parameters including $d_m$.
\begin{theorem}[Minimax lower bound for uniform OPE]\label{thm:uni_lower}
	For all $0<d_m\leq\frac{1}{SA}$. Let the class of problems be
		$$\mathcal{M}_{d_m} :=\big\{(\mu,M) \; \big| \;\min_{t,s_t,a_t} d_t^\mu(s_t,a_t)  \geq d_m\big\}.$$
	There exists universal constants $c_1,c_2,c_3,p$	(with $H,S,A\geq c_1$ and $0<\epsilon<c_2$)  such that 
	\[
	\inf_{\widehat{v}}\sup_{(\mu,M)\in\mathcal{M}_{d_m}}\P_{\mu,M}\left(\sup_{\pi\in\Pi}|\widehat{v}^{{\pi}}-v^{{\pi}}|\geq \epsilon\right)\geq p
	\]
	if $n\leq c_3H^3/d_m\epsilon^2$. Here $\Pi$ consists of all deterministic policies. 
\end{theorem}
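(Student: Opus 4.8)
The plan is to prove the lower bound by a hypothesis-testing reduction (Le~Cam / Fano), using the elementary fact that for any fixed deterministic $\pi^\dagger\in\Pi$ one has $\sup_{\pi\in\Pi}|\widehat v^\pi-v^\pi|\ge|\widehat v^{\pi^\dagger}-v^{\pi^\dagger}|$, so it suffices to force a large error at a suitably chosen \emph{witness} policy. The conceptual heart of the matter is that a single two-point argument cannot reach $H^3$: by the efficiency/variance calculation underlying Lemma~\ref{thm:single_finer_bound}, the largest value gap a single fixed policy can exhibit between two $O(1)$-KL-close instances is controlled by $\sqrt{\sum_h (d^\pi_h)^2(H-h)^2/(n\,d^\mu_h)}\lesssim\sqrt{H^2/(n\,d_m)}$, because reaching the informative layer-$h$ transition with occupancy $\Theta(1)$ and simultaneously having downstream value spread of order $H-h$ there are in tension (the high/low branch that creates the spread is what prevents the trajectory from surviving to the next informative layer). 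This is exactly why pointwise OPE is $\widetilde O(H^2/d_m\epsilon^2)$, and the extra factor of $H$ for the uniform problem must be purchased by the \emph{supremum}: the construction must expose a whole family of planted perturbations together with a matched family of deterministic witnesses, so that although no single policy is hard for a fixed estimator, no estimator can be uniformly accurate over $\Pi$.

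Concretely, I would build a finite family of instances $(\mu,M_b)$ indexed by sign vectors $b\in\{\pm1\}^{H}$ sharing one logging policy $\mu$. At each time $h$ I plant a near-fair informative transition $\tfrac12+b_h\delta_h$ deciding between a high-value and a low-value continuation, whose outcome moves the value of the matching witness by a remaining-horizon-scaled weight $\rho_h=\Theta(H-h)$; I surround the core chain with $\Theta(1/d_m)$ dummy state-action pairs so that $\mu$ is forced to spread its mass and attains $\min_{t,s,a}d^\mu_t(s,a)=d_m$ (which also forces $d_m\le 1/(SA)$, as assumed), while each informative transition is visited by $\mu$ only with probability $\Theta(d_m)$ and hence observed $\Theta(n\,d_m)$ times. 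For each configuration $b$ the associated deterministic witness routes so as to collect the planted perturbations with occupancy aligned to their signs; the induced value then depends on $b$ through $\sum_h \rho_h\,\delta_h(\cdot)$, while the per-episode KL between neighboring instances is $\Theta\!\big(n\,d_m\sum_h \delta_h^2\big)$. Allocating the KL budget by Cauchy--Schwarz, $\delta_h\propto\rho_h$, the realized value gap scales like $\sqrt{\big(\sum_h \rho_h^2\big)/(n\,d_m)}$, and choosing $\rho_h=\Theta(H-h)$ so that $\sum_h\rho_h^2=\Theta(H^3)$ makes this gap
\[
\asymp \sqrt{\frac{H^3}{n\,d_m}},
\]
which exceeds $2\epsilon$ precisely when $n\lesssim H^3/(d_m\epsilon^2)$, with the matching bound $\sum_h\delta_h^2\lesssim 1/(n\,d_m)$ keeping every pairwise KL at $O(1)$.

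To finish, I would invoke a testing lower bound (Assouad's lemma over the hypercube $\{\pm1\}^H$, or Fano over a suitable packing): since neighboring instances are $O(1)$-indistinguishable in KL from $n$ i.i.d.\ episodes under $\mu$, no estimator can correctly track all the witnesses, and on at least one instance $\sup_{\pi\in\Pi}|\widehat v^\pi-v^\pi|\ge\epsilon$ holds with a universal constant probability $p$. Tensorization of KL over the $n$ episodes, Pinsker/Le~Cam, and the bookkeeping needed to enforce $H,S,A\ge c_1$ and $0<\epsilon<c_2$ are then routine.

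The main obstacle is precisely the $\ell_2$ aggregation: I must thread between the pointwise floor of $H^2$ (achieved by any coherent single-policy two-point) and the inadmissible ceiling of $H^4$ that a naive Assouad argument with fully independently sign-alignable per-layer errors would falsely suggest. The resolution is that the occupancies of a \emph{single} deterministic witness across the $H$ informative layers are coupled through survival along its trajectory, which caps the alignment at an $\ell_2$ (rather than $\ell_1$) aggregate of the weights $\rho_h$, yielding $\sqrt{\sum_h\rho_h^2}=\Theta(H^{3/2})$ and hence the correct rate. Designing the gadget so that (a) for each configuration some deterministic witness reaches and coherently accumulates $\Theta(H)$ informative transitions each of remaining-horizon weight, (b) $\mu$ still visits every state-action with probability at least $d_m$, and (c) the first-order (coherent) value-gap accounting remains valid to the claimed order, is where essentially all the work lies; the hypothesis-testing machinery layered on top is standard.
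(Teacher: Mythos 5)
There is a genuine gap, and it sits exactly at the step you flag as ``where essentially all the work lies.'' Your central quantitative move --- plant a bias $\delta_h$ at each layer, let its outcome shift the continuation value of the matching witness by $\rho_h=\Theta(H-h)$, impose the KL budget $n\,d_m\sum_h\delta_h^2=O(1)$, and conclude a value gap of $\sqrt{\sum_h\rho_h^2/(n\,d_m)}=\sqrt{H^3/(n\,d_m)}$ --- is a two-point (Le~Cam) bound at a \emph{single fixed} witness policy between two $O(1)$-KL-close instances. That conclusion contradicts the pointwise upper bound $\widetilde O\bigl(\sqrt{H^2/(n\,d_m)}\bigr)$ of Lemma~\ref{thm:single_finer_bound}, which you yourself invoke in your first paragraph; by the law of total variance (Lemma~\ref{lem:H3_to_H2}), a witness with occupancy $\Theta(1)$ at a layer-$h$ transition whose outcome spreads its value by $\Theta(H-h)$ cannot exist for all $h$ simultaneously, since $\sum_h(\text{occupancy})_h\cdot\rho_h^2\le\mathrm{Var}_\pi[\sum_t r_t]\le H^2$. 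The ``survival coupling'' you mention in your last paragraph is precisely the obstruction that kills this construction, not a resolution of it. The alternative reading --- that $\sum_h\rho_h\delta_h$ is an Assouad-style aggregate over the hypercube rather than a single two-point gap --- does not rescue the argument either, because Assouad lower-bounds a \emph{sum} of per-coordinate errors while the uniform OPE risk is a \emph{sup} over policies; the only inequality you supply to connect them, $\sup_{\pi}|\widehat v^\pi-v^\pi|\ge|\widehat v^{\pi^\dagger}-v^{\pi^\dagger}|$ for a single witness, either loses a factor of $H$ (via $\max_h\ge\frac1H\sum_h$) or forces all coordinate errors onto one policy and runs back into the variance ceiling.

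The paper takes a different route that is designed around exactly this obstruction. It first reduces uniform OPE to offline learning via $0\le v^{\pi^\star}-v^{\widehat\pi^\star}\le 2\sup_\pi|\widehat v^\pi-v^\pi|$ (a step absent from your proposal), and then proves the learning lower bound $\Omega(H^3SA/\epsilon^2)$ (Theorem~\ref{thm:learn_low_bound_all}) by embedding $\Theta(HS)$ \emph{independent} best-arm-identification problems, each contributing only $\Theta(\tau/(HS))$ to the value of any one policy, so no single policy's variance budget is ever strained. The extra $H^2$ over the bandit count comes from signal dilution rather than from $\ell_2$-aggregating remaining-horizon weights: the informative transition at each bandit state fires only with probability $1/H$ (the $1-1/H$ self-loop is action-independent, so each episode yields at most \emph{one} informative observation across all $H$ layers of its column), and the uninformative second half of the doubled horizon rescales the effective arm gap to $\tau/H$, so each of the $\Theta(HS)$ bandits needs $\Theta(AH^2/\tau^2)$ samples (Lemma~\ref{lem:bandit}); the learner's suboptimality then aggregates the misidentified coordinates in $\ell_1$ through the occupancy of its single output policy. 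The $d_m$ dependence is obtained by a dilution gadget similar in spirit to yours (Theorem~\ref{thm:learn_low_bound}). To repair your proposal you would need either to adopt this reduction-to-learning structure, or to supply a genuinely new mechanism by which the sup over deterministic policies recovers the full $\sqrt{H^3/(n\,d_m)}$ on a single instance without any single policy exhibiting that gap against a KL-$O(1)$ alternative; as written, no such mechanism is given.
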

The proof uses a reduction argument that shows if a stronger uniform OPE bound exists, then it implies an algorithm that breaks an offline learning lower bound (Theorem~\ref{thm:learn_low_bound}), which itself is proven by embedding many stochastic multi-armed bandits problems in a family of hard MDPs. 
Our construction is inspired by the MDPs in \citep{jiang2017contextual} and a personal communication with Christopher Dann but involve substantial modifications to account for the differences in the assumption about rewards. The part in which we obtain explicit dependence on $d_m$ is new and it certifies that the offline learning (and thus uniform OPE) problem strictly more difficult than their online counterpart. 

\noindent\textbf{On optimality.} The above result provides the minimax lower bound of complexity $\Omega(H^3/d_m\epsilon^2)$. As a comparison, Theorem~\ref{thm:finer_bound} gives $\widetilde{O}(H^3 S /d_m\epsilon^2)$ is a factor of $S$ away from the lower bound and Theorem~\ref{thm:local_uni_opt} has the same rate of the lower bound up to logarithmic factor.


\section{Main results for offline learning}\label{sec:offline}

In this section we discuss the implication of our results on offline learning. As we discussed earlier in the introduction, a uniform OPE bound of $\epsilon$ implies that the corresponding ERM algorithm finds a $2\epsilon$-suboptimal policy.  But it also implies that all other offline policy-learning algorithms that are not ERM, we could gracefully decompose their error into optimization error and statistical  (generalization) error. 
\begin{theorem}\label{thm:offlinelearning}
	Let $\hat{\pi}^* = \argmax_{\pi} \hat{v}^{\pi}$ --- the empirically optimal policy. Let $\hat{\pi}$ be any data-dependent choice of policy such that $\hat{v}^{\hat{\pi}^*} - \hat{v}^{\hat{\pi}} \leq \epsilon_\text{opt}$, then. There is a universal constant $c$ such that w.p. $\geq 1-\delta$ 
	\begin{enumerate}
		\itemsep1pt
		\item 	$
		v^{\pi^*}  - v^{\hat{\pi}} \leq c\sqrt{\frac{H^4S\log(HSA/\delta)}{d_m \cdot n}} + \epsilon_\text{opt}.
		$
		\item If  $\delta< e^{-S}$, the bound improves to $c\sqrt{\frac{H^4S\log(HSA/\delta)}{d_m \cdot n}} +  \epsilon_\text{opt}$. And if in addition $\hat{\pi}$ is deterministic, the bound further improves to 
		$c\sqrt{\frac{H^3\min\{H,S\}\log(HSA/\delta)}{d_m \cdot n}} +  \epsilon_\text{opt}$.
		\item If $\epsilon_\text{opt}\leq \sqrt{H}/S$ and that 
		$||\widehat{V}^{\hat{\pi}}_t-\widehat{V}^{\widehat{\pi}^\star}_t ||_\infty\leq\epsilon_\text{opt},\;\forall t=1,...,H$
		, then 
		$
		v^{\pi^*}  - v^{\hat{\pi}} \leq c\sqrt{\frac{H^3\log(HSA/\delta)}{d_m \cdot n}} + \epsilon_\text{opt}.
		$
	\end{enumerate}
\end{theorem}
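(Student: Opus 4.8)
The plan is to reduce all three claims to the uniform OPE bounds already established, via a single ``regret decomposition'' that isolates the optimization error $\epsilon_\text{opt}$ from the statistical error. Writing $\pi^*$ for the true optimal policy and inserting empirical value terms, I would decompose
\[
v^{\pi^*} - v^{\hat{\pi}}
= \big(v^{\pi^*} - \hat{v}^{\pi^*}\big)
+ \big(\hat{v}^{\pi^*} - \hat{v}^{\hat{\pi}^*}\big)
+ \big(\hat{v}^{\hat{\pi}^*} - \hat{v}^{\hat{\pi}}\big)
+ \big(\hat{v}^{\hat{\pi}} - v^{\hat{\pi}}\big).
\]
The second term is $\leq 0$ because $\hat{\pi}^*$ maximizes $\hat{v}$ over all policies, and the third is $\leq \epsilon_\text{opt}$ by hypothesis. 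Hence $v^{\pi^*} - v^{\hat{\pi}} \leq |v^{\pi^*} - \hat{v}^{\pi^*}| + |\hat{v}^{\hat{\pi}} - v^{\hat{\pi}}| + \epsilon_\text{opt}$, and everything comes down to controlling the OPE error at the \emph{fixed} policy $\pi^*$ and at the \emph{data-dependent} policy $\hat{\pi}$. The sample-size preconditions are inherited directly from the theorems invoked below, and all the high-probability events will be intersected with a union bound so that the total failure probability stays $\delta$.

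For Part 1, both $\pi^*$ and $\hat{\pi}$ lie in the global policy class, so I would bound each statistical term by $\sup_{\pi\in\Pi}|\hat{v}^\pi - v^\pi|$ and invoke Theorem~\ref{thm:rad}, whose dominant term is of order $\sqrt{H^4 S\log(HSA/\delta)/(d_m n)}$; the factor $2$ is absorbed into $c$. Part 2 is identical except that I would use the small-failure-probability refinement of Theorem~\ref{thm:rad} (valid for $\delta < e^{-S}$), which removes the explicit $\sqrt{S}$ and yields the $\sqrt{H^4}$ rate. When $\hat{\pi}$ is additionally deterministic, both $\pi^*$ and $\hat{\pi}$ belong to the deterministic class, so I can alternatively apply Theorem~\ref{thm:finer_bound} (rate $\sqrt{H^3 S}$); taking the better of the two bounds gives the stated $\sqrt{H^3\min\{H,S\}}$.

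Part 3 is where the only genuine obstacle appears. The local uniform bound of Theorem~\ref{thm:local_uni_opt} applies only to $\Pi_1$, and although the hypothesis $\norm{\widehat{V}^{\hat{\pi}}_t - \widehat{V}^{\hat{\pi}^*}_t}_\infty \leq \epsilon_\text{opt}$ places $\hat{\pi}$ (and trivially $\hat{\pi}^*$) in $\Pi_1$ — and moreover implies $\hat{v}^{\hat{\pi}^*} - \hat{v}^{\hat{\pi}} \leq \epsilon_\text{opt}$ since $\hat{v}$ is the $\hat{d}_1$-average of $\widehat{V}_1$ — the true optimal policy $\pi^*$ need \emph{not} lie in $\Pi_1$, so $|v^{\pi^*} - \hat{v}^{\pi^*}|$ cannot be controlled by the local uniform bound. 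The resolution is that $\pi^*$ is a \emph{fixed}, data-independent policy, so I would bound $|v^{\pi^*} - \hat{v}^{\pi^*}|$ by the pointwise Lemma~\ref{thm:single_finer_bound}, whose leading term scales as $\sqrt{H^2\log(c_2 HSA/\delta)/(n d_m)}$. Since this is of strictly lower order in $H$ than the $\sqrt{H^3}$ rate coming from $|\hat{v}^{\hat{\pi}} - v^{\hat{\pi}}| \leq \sup_{\pi\in\Pi_1}|\hat{v}^\pi - v^\pi|$ (Theorem~\ref{thm:local_uni_opt}), it is absorbed into the final constant, leaving the claimed $c\sqrt{H^3\log(HSA/\delta)/(d_m n)} + \epsilon_\text{opt}$. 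I expect the main subtlety to be exactly this asymmetric treatment — a \emph{uniform} bound for the data-dependent $\hat{\pi}$ but only a \emph{pointwise} bound for the fixed $\pi^*$ — which is precisely what lets the local class (rather than the global class) suffice while keeping the sharp $H^3$ dependence.
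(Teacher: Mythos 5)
Your proposal is correct, and for Parts 1 and 2 it is exactly the paper's argument: the paper dismisses these as ``direct corollaries'' of the uniform bounds via the decomposition $v^{\pi^\star}-v^{\hat\pi}\le |v^{\pi^\star}-\hat v^{\pi^\star}|+|\hat v^{\hat\pi}-v^{\hat\pi}|+\epsilon_\text{opt}\le 2\sup_\pi|v^\pi-\hat v^\pi|+\epsilon_\text{opt}$ stated in the introduction, and your $\min\{H,S\}$ reasoning (take the better of Theorem~\ref{thm:finer_bound} and the small-$\delta$ refinement of Theorem~\ref{thm:rad}) is the intended one.

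For Part 3 you use the same four-term decomposition (the paper writes it at the level of $Q_1$ rather than $v$, which is immaterial) and the same key observation --- bound the data-dependent term uniformly over $\Pi_1$ via Theorem~\ref{thm:local_uni_opt}, but treat $\pi^\star$, which need not lie in $\Pi_1$, separately as a fixed policy. The one place you diverge is how the fixed-policy term is controlled: the paper re-runs the Section~\ref{sec:E} Bernstein/variance-recursion argument for the singleton $\{\pi^\star\}$ (independence of $\widehat V^{\pi^\star}_h$ from $\widehat P_h$ holds trivially, and the $\epsilon_\text{opt}$ correction term disappears), obtaining $O\bigl(\sqrt{H^3\log(HSA/\delta)/(n d_m)}\bigr)$ directly with no extra $S,A$ dependence. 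You instead invoke the pointwise Lemma~\ref{thm:single_finer_bound}. That is legitimate and more economical, but note that its bound carries a higher-order term $\tilde O\bigl(H^2\sqrt{SA}/(n d_m)\bigr)$, which is dominated by $\sqrt{H^3\log(HSA/\delta)/(n d_m)}$ only when $n\gtrsim HSA/d_m$ (equivalently, in the small-$\epsilon$ regime $\epsilon\lesssim H/\sqrt{SA}$), a slightly stronger requirement than the $n\gtrsim H^2\log(HSA/\delta)/d_m$ precondition inherited from Theorem~\ref{thm:local_uni_opt}. If you want the statement exactly as written, with a single $c\sqrt{H^3\log(HSA/\delta)/(d_m n)}$ term and no residual $S,A$ dependence, you should follow the paper and bound $|v^{\pi^\star}-\hat v^{\pi^\star}|$ by the Section~\ref{sec:E} machinery rather than by Lemma~\ref{thm:single_finer_bound}.
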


\begin{table*}
	\caption{A comparison of related offline policy learning results. }
	\label{table}
	\centering
	\resizebox{\linewidth}{!}{%
		\begin{tabular}{llll}
			\toprule
			\cmidrule(r){1-3}
			Method/Analysis     &Setting& Guarantee     & Sample complexity$^b$ \\
			\midrule
			\cite{agarwal2020model} &Generative model &$\epsilon+O(\epsilon_{\text{opt}}/(1-\gamma))$-optimal  & {$\widetilde{O}(SA/(1-\gamma)^3 \epsilon^2)$}    \\
			\cite{le2019batch,chen2019information} &$\infty$-horizon offline & $\epsilon$-optimal policy  & $\widetilde{O}((1-\gamma)^{-6}C_\mu/ \epsilon^2)$   \\
			\cite{xie2020q} & $\infty$-horizon offline & $\epsilon$-optimal policy  & $\widetilde{O}((1-\gamma)^{-4}C_\mu/ \epsilon^2)$   \\
			{SIMPLEX} for exact empirical optimal$^a$ & $H$-horizon offline  &$\epsilon$-optimal policy & \mypink{$\widetilde{O}(H^3/d_m \epsilon^2)$}    \\
			{PI/VI} for $\epsilon_{\text{opt}}$-empirical optimal  &$H$-horizon offline   & $(\epsilon+\epsilon_{\text{opt}})$-optimal policy & \mypink{$\widetilde{O}(H^3/d_m \epsilon^2)$}    \\
			Minimax lower bound (Theorem~\ref{thm:learn_low_bound})  &$H$-horizon offline    & over class $\mathcal{M}_{d_m}$     & $\Omega(H^3/d_m\epsilon^2)$  \\
			\bottomrule
		\end{tabular}
	}
	\hspace{-4cm}\footnotesize{ $^a$ PI/VI or SIMPLEX is not essential and can be replaced by any efficient empirical MDP solver.}\\
	
	\footnotesize{ $^b$ \emph{Episode} complexity in $H$-horizon setting is comparable to \emph{step} complexity in $\infty$-horizon setting because our finite-horizon MDP is \emph{time-inhomogeneous}.  Informally, we can just take $(1-\gamma)^{-1} \asymp H$ and $C_\mu\asymp 1/d_m$. }
\end{table*}

The third statement implies that all sufficiently accurate planning algorithms based on the empirically estimated MDP are optimal. For example, we can run value iteration or policy iteration to the point that $\epsilon_{\text{opt}} \leq O(H^3/{n d_m})$.  

\noindent\textbf{Comparing to existing work.} Previously no algorithm is known to achieve the optimal sample complexity in the offline setting. Our result also applies to the related generative model setting by replacing $1/d_m$ with $SA$, which avoids the data-splitting procedure usually encountered by specific algorithm design \citep[e.g.,][]{sidford2018near}. 
The analogous policy-learning results In the generative model setting \citep[Theorem 1]{agarwal2020model} , 
 achieves a suboptimality of $\tilde{O}(  (1-\gamma)^{-3}SA/n + (1-\gamma)^{-1} \epsilon_{\text{opt}}  )$ with no additional assumption on $ \epsilon_{\text{opt}}$. Informally, if we replace $(1-\gamma)^{-1}$ with $H$, then our result improves the bound from $H \epsilon_{\text{opt}}$ to just $\epsilon_{\text{opt}}$ for $\epsilon_{\text{opt}}\leq \sqrt{H}/S$. 
  These results are summarized in Table~\ref{table}. 
  
  \noindent\textbf{Sparse MDP estimate.}
  We highlight that the result does not require the estimated MDP to be an accurate approximation in any sense. Recall that the true MDP has $O(S^2)$ parameters (ignoring the dependence on $H,A$ and logarithmic terms), but our result is valid provided that  $n = \tilde{\Omega}(1/d_m)$ which is $\Omega(S)$. This suggests that we may not even exhaustively visit all pairs to state-transitions and that the estimator of $\hat{P}_t$ is allowed to be zero in many coordinates.

  \noindent\textbf{Optimal computational complexity.}
  Lastly, from the computational perspective, we can leverage the best existing solutions for solving optimization $\widehat{\pi}^\star:=\text{argmax}_{\pi\in\Pi} \widehat{v}^\pi$. For example, with $\epsilon_{\text{opt}}>0$, as explained by \citet{agarwal2020model}, value iteration ends in $O(H\log \epsilon_{\text{opt}}^{-1})$ iteration 
  and takes at most $O(HSA)$ time after the model has been estimated with one pass of the data ($O(nH)$ time). We have a total computational complexity of $O(H^4/(d_m \epsilon^2) + H^2SA\log(1/\epsilon))$ time algorithm for obtaining the $\epsilon$-suboptimal policy using $n=O(H^4/(d_m \epsilon^2) $ episodes. This is essentially optimal because the leading term $H^4SA/\epsilon^2$ is required even to just process the data needed for the result to be information-theoretically possible. In comparison, the algorithm that obtains an exact empirical optimal policy $\widehat{\pi}^\star$, the SIMPLEX policy iteration runs in time $O(\text{poly}(H,S,A,n))$ \citep{ye2011simplex}.

\section{Proof overview }\label{sec:discussion} 



Our uniform convergence analysis in Section~\ref{sec:mart_con}, relies on creating an unbiased version of $\widehat{v}_\text{OPEMA}$ (which we call it $\widetilde{v}_\text{OPEMA}$) artificially and use concentration (Lemma~\ref{lem:sufficient_sample}) to guarantee $\widehat{v}_\text{OPEMA}$ is identical to $\widetilde{v}_\text{OPEMA}$ in most situations. By doing so we can reduce our analysis from $\sup_{\pi\in\Pi} \left|\widehat{v}^\pi-v^\pi\right|$ to $\sup_{\pi\in\Pi} \left|\widetilde{v}^\pi-v^\pi\right|$. Specifically, $\widetilde{v}^\pi$ replaces  $\widehat{P}_{t}$, $\widehat{r}_t$ in $\widehat{v}^\pi$ by its fictitious counterparts $\widetilde{P}_{t}$, $\widetilde{r}_t$, defined as:
{\small
	\begin{align*}
	\widetilde{r}_t(s_t,a_t)&=\widehat{r}_t(s_t,a_t)\mathbf{1}(E_t)+{r}_t(s_t,a_t)\mathbf{1}(E_t^c),\\
	\widetilde{P}_{t+1}(\cdot|s_t,a_t)&=\widehat{P}_{t+1}(\cdot|s_t,a_t)\mathbf{1}(E_t)+{P}_{t+1}(\cdot|s_t,a_t)\mathbf{1}(E_t^c).
	\end{align*}
}where $E_t$ denotes the event $\{n_{s_t,a_t}\geq nd^\mu_t(s_t,a_t)/2\}$. This is saying, if observation $n_{s_t,a_t}$ is large enough ($E_t$ is true), we use $\widehat{P}$; otherwise we directly use $P$ instead. This track helps dealing with out-of-sample state-action pairs. The next key is the martingale decomposition (Lemma~\ref{lem:martingale_decompose}). On one hand, by using the structure of {\small $\sup_{\pi\in\Pi}\langle{V^\pi_h,(\widetilde{T}_h-T_h)\widetilde{d}^\pi_{h-1}}\rangle$} we can relax it into a ``Rademacher-type complexity'' which corresponds to $\tilde{O}(\sqrt{H^4S/d_m n})$ term in Theorem~\ref{thm:rad}. On the other hand, this decomposition has a natural martingale structure so martingale concentration inequalities can be appropriately applied, \emph{i.e.} Theorem~\ref{thm:single_finer_bound}. In addition, each term {\small $\langle{V^\pi_h,(\widetilde{T}_h-T_h)\widetilde{d}^\pi_{h-1}}\rangle$} separates the non-stationary policy into two parts with empirical distribution only depends on $\pi_{1:h-1}$ that governs how the data ``roll in'' and the long term value function $V^\pi_h$ only depends on $\pi_{h:H}$ that governs how the reward ``roll out''. 

For local uniform convergence, by Bellman equations we can obtain a similar decomposition on $Q$-function:
{
\begin{align*}
\widehat{Q}^\pi_t-Q^\pi_t&=\sum_{h=t+1}^H \Gamma_{t+1:h-1}^\pi(\widehat{P}_{h}-{P}_{h}) \widehat{V}^\pi_{h},
\end{align*}}
where $\Gamma_{t:h}^\pi=\prod_{i=t}^hP^\pi_i$ is the multi-step state-action transition and $\Gamma_{t+1:t}^\pi:=I$. Since $\pi$ is any policy in ${\Pi}_1$ which may dependent on $\mathcal{D}'$ so we cannot directly apply concentration inequalities on $(\widehat{P}_{h}-{P}_{h}) \widehat{V}^\pi_{h}$. Instead, we overcome this hurdle by doing concentration on $(\widehat{P}_{h}-{P}_{h}) \widehat{V}^{\widehat{\pi}^*}_{h}$ since $\widehat{V}^{\widehat{\pi}^*}_{h}$ and $\widehat{P}_{h}$ are independent, and we connect $\widehat{V}^{\widehat{\pi}^*}_{h}$ back to $\widehat{V}^\pi_{h} $ by using they are $\epsilon_\mathrm{opt}$ close (Theorem~\ref{thm:local_uni_opt}). This idea helps avoiding the technicality of absorbing MDP used in \cite{agarwal2020model} for infinite horizon case because of our non-stationary transition setting. For the uniform convergence lower bound, our analysis relies on reducing the problem to identifying $\epsilon$-optimal policy and proving any algorithm that learns a $\epsilon$-optimal policy requires at least $\Omega(H^3/d_m\epsilon^2)$ episodes in the non-stationary episodic setting. Previously, \cite{jiang2017contextual} proves the $\Omega(HSA/\epsilon^2)$ lower bound with assumption $\sum_{i=1}^Hr_i\leq 1$. Our proof uses a modified version of their hard-to-learn MDP instance to achieve the desired result. To produce extra $H^2$ dependence, we leverage the Assumption~\ref{assume1} that $\sum_{i=1}^Hr_i$ may be of order $O(H)$. We only present the high-level ideas here due the space constraint, detailed proofs are explicated in order in Appendix~\ref{sec:the_uni_crude}, \ref{app_sec:mar}, \ref{sec:E}, \ref{sec:lower}.


\section{Numerical simulation}\label{sec:simulation} 


In this section we use a simple simulated environment to empirically demonstrate the correct scaling in $H$. Direct evaluating $\sup_{\pi\in\Pi} |\widehat{v}^\pi-v^\pi|$ empirically is computationally infeasible since the policy classes we considered here contains either $A^{HS}$ or $\infty$ many policies. 
Instead, in the experiment we will plot the sub-optimality gap $ |v^\star-v^{\widehat{\pi}^\star}|$ with $\hat{\pi}^\star$ being the outputs of policy planning algorithms. The sub-optimality gap is considered as a surrogate for the lower bound of $\sup_{\pi\in\Pi} |\widehat{v}^\pi-v^\pi|$.
Concretely, the non-stationary MDP has $2$ states $s_0,s_1$ and $2$ actions $a_1,a_2$ where action $a_1$ has probability $1$ going back the current state and for action $a_2$, there is one state s.t. after choosing $a_2$ the dynamic transitions to both states with equal probability $\frac{1}{2}$ and the other one has asymmetric probability assignment ($\frac{1}{4}$ and $\frac{3}{4}$). The transition after choosing $a_2$ is changing over different time steps therefore the MDP is non-stationary and the change is decided by a sequence of pseudo-random numbers (Figure~\ref{fig:mdp} shows the transition kernel at a particular time step). Moreover, to make the learning problem non-trivial we use non-stationary rewards with $4$ categories, \emph{i.e.} $r_t(s,a)\in\{\frac{1}{4},\frac{2}{4},\frac{3}{4},1\}$ and assignment of $r_t(s,a)$ for each value is changing over time (see Section~\ref{sec:simluation_detail} in appendix for more details). Lastly, the logging policy in Figure~\ref{fig:different_H} is uniform with $\mu_t(a_1|s)=\mu_t(a_2|s)=\frac{1}{2}$ for both states.

Figure~\ref{fig:different_H} use a fixed number of episodes $n=2048$ while varying $H$ to examine the horizon dependence for uniform OPE. We can see for fixed pointwise OPE with OPEMA (blue line), $|v^\pi-\widehat{v}^\pi|$ scales as $O(\sqrt{H^2})$ which reflects the bound of Lemma~\ref{thm:single_finer_bound}; for the model-based planning, we ran both VI and PI until they converge to the empirical optimal policy $\widehat{\pi}^\star$. 
The figure shows that for this MDP example $|v^\star-v^{\widehat{\pi}^\star}|$ scales as $O(\sqrt{H^3/d_m})$ for fixed $n$ since it is parallel to the reference magenta line. This fact empirically shows $O(\sqrt{H^3/d_m})$ bound is required 
confirms the scaling of our theoretical results. 

\begin{figure}
	\centering     
	\subfigure[A non-stationary MDP]{\label{fig:mdp}\includegraphics[width=0.45\linewidth]{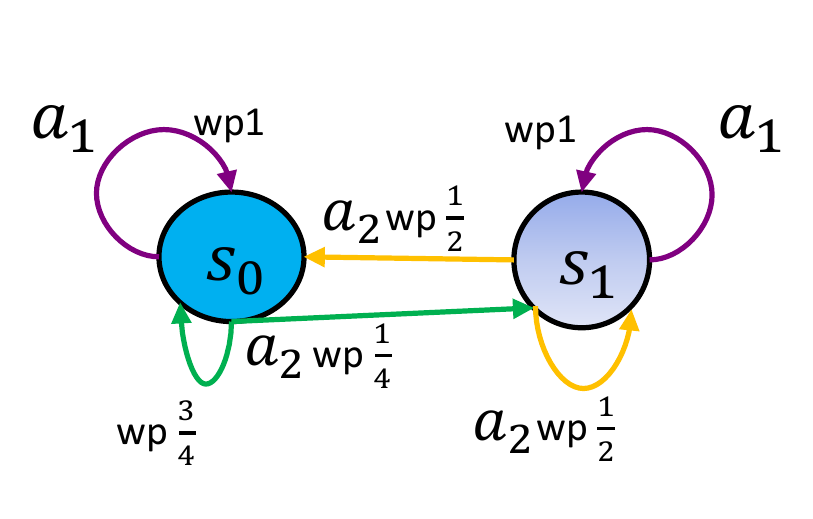}}
	\subfigure[RMSE vs. Horizon $H$]{\label{fig:different_H}\includegraphics[width=0.5\linewidth]{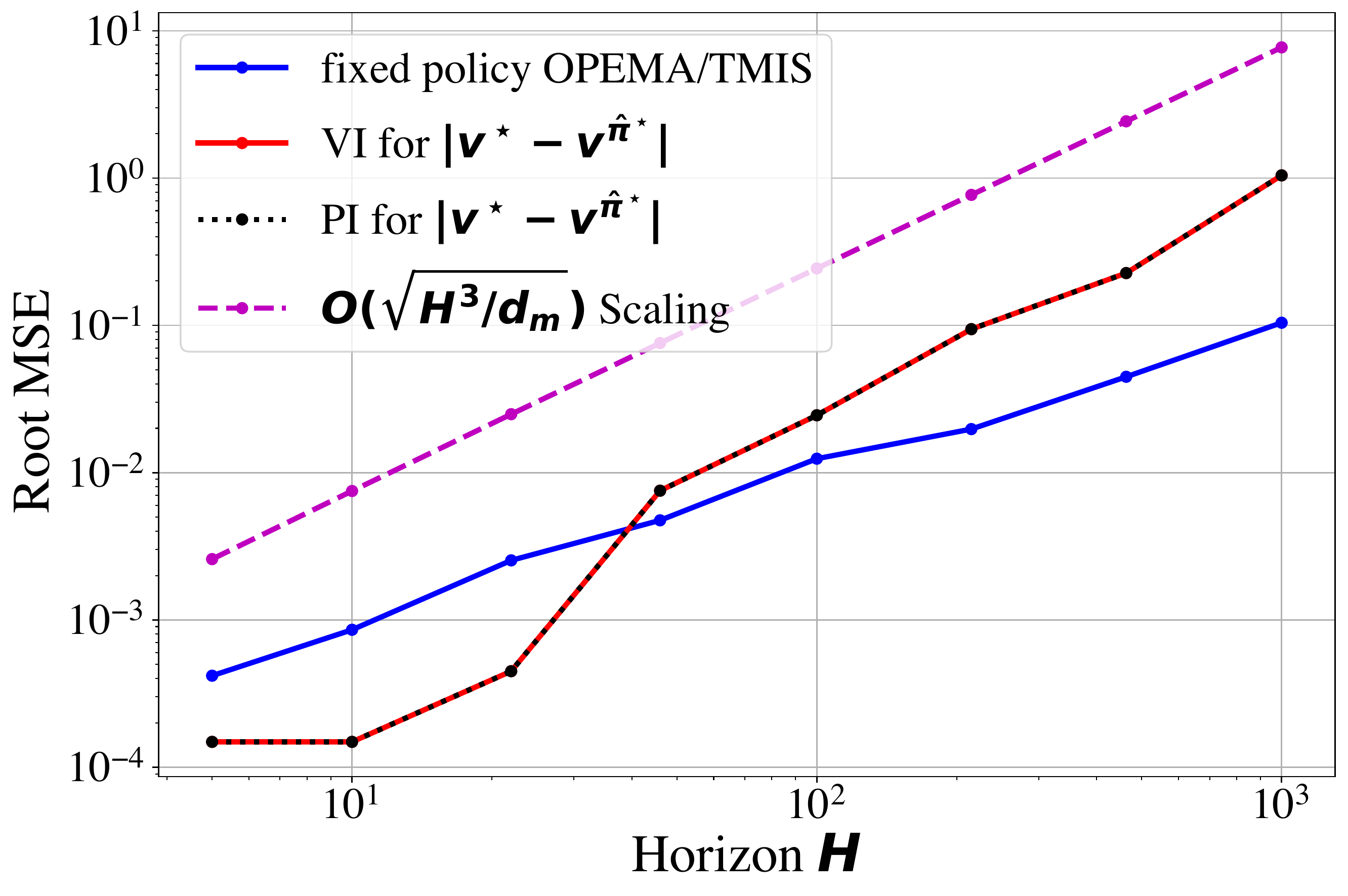}}
	\caption{Log-log plot  showing the dependence on horizon of uniform OPE and pointwise OPE via learning ($|{v}^{\star}-v^{\widehat{\pi}^\star}|$) over a non-stationary MDP example.}
	\label{fig:main}
\end{figure}


\section{Discussion }\label{sec:dis}


\noindent\textbf{The efficiency of model-based methods.}
There had been a long-lasting debate about model-based vs model-free methods in RL. The model-based methods were considered inefficient in both space and sample complexity, due to the need to represents the transition kernel in $O(HS^2A)$. Most sample-efficient methods with the right dependence in $S$ are model-free methods that directly represents and updates the $Q$-function. Our analysis reveals that direct model-based plug-in estimator is optimal in both pointwise and uniform prediction problems, which helps to correct the commonly held misunderstanding that purely model plug-in estimator is loose due to simulation lemma.


\noindent\textbf{Uniform OPE that depends on $\pi$.}
In this paper, we primarily consider obtaining uniform bound independent to $\pi$, however, given a logging policy $\mu$, it is often easier  to evaluate some policies than others, as is revealed in the pointwise OPE bound of \citep{yin2020asymptotically}. Specifically, obtaining a high probability bound of the form $\sup_\pi  \frac{\sqrt{n}|\hat{v}^\pi -  v^\pi|}{ \gamma(\pi,\mu,M,\delta)} \leq C $ for some function $\gamma$ and constant $C$ would  be of great interest. We could already get such a bound by applying union bound to the data-dependent high probability pointwise convergence of either \citep{yin2020asymptotically} or \citep{duan2020minimax} but it comes with an additional $O(S)$ factor. Characterizing the optimal per-instance OPE bound is an interesting future direction.

\noindent\textbf{Simulation Lemma.} Our result can be viewed as a strengthened version of the \emph{simulation lemma} \citep{kearns2002near} (see also the exposition in \citep{jiang2018notes}, which uses similar notations to us).  The OPE bound that can be obtained by applying the simulation lemma is
{\small\begin{align*}
|\widehat{v}^\pi-v^\pi|&\leq H^2\sup_{t,s_t,a_t}\norm{\widehat{P}(\cdot|s_t,a_t)-P(\cdot|s_t,a_t)}_1\leq \widetilde{O}\left(\sqrt{\frac{H^4S^2}{nd_m}}\right)
\end{align*}} which implies an episode complexity\footnote{See Section~\ref{sec:app_sl} for more calculation details.} of $\widetilde{O}(H^4S^2/d_m\epsilon^2)$. The main limitation of the simulation lemma is that it does not distinguish between pointwise / uniform convergence (and their bound is in fact a uniform OPE bound), thus will suffer from a loose bound when applied to fixed policies or data-dependent policies that qualify for the smaller policy classes that we considered. For example, our Lemma~\ref{thm:single_finer_bound} shows that for the same plug-in estimator, the bound improves to $\widetilde{O}(H^2/d_m\epsilon^2)$ for pointwise OPE and Theorem~\ref{thm:local_uni_opt} shows that we can knock out a factor of $HS^2$ in the uniform convergence of \emph{near empirically optimal} policies.  Finally,  there is a factor of $S$improvement in the global policy class unconditionally. These savings can be used as drop-in replacements to many instances where the simulation lemma is applied to improve the parameters of the analysis therein.

\section{Conclusion}

This work represents the first systematic study of uniform convergence in offline policy evaluation. We derive near optimal results for three representative policy classes. By viewing offline policy evaluation from the uniform convergence perspective, we are able to unify two central topics in offline RL, OPE and offline learning while establishing optimal rates in a subset of these settings including the first rate-optimal offline reinforcement learning method. 
The work focuses on the episodic tabular MDP with nonstationary transitions. Carrying out the same analysis for the stationary transition case, infinite horizon case, as well as the linear MDP setting is highly tractable with the techniques presented. Formalizing these is left as a future work. More generally, a natural complexity measure for the policy class of RL remains elusive. We hope the work could inspire a more general statistical learning theory for RL in the near future. 

\subsection*{Acknowledgment}
The research is partially supported by NSF Awards \#2007117 and \#1934641.  The authors thank Christopher Dann for a discussion related to sample complexity lower bounds in the pointwise bounded reward case; and Tengyang Xie and Nan Jiang for clarifying the scaling in the sample complexity of their results in \citep{xie2020q} with us.

\bibliographystyle{apa-good.bst}
\bibliography{sections/stat_uniform_rl}

\appendix

\clearpage
\begin{center}
	{\LARGE Appendix}
\end{center}


\section{Technical lemmas}

\begin{lemma}[Multiplicative Chernoff bound \cite{chernoff1952measure}]\label{lem:chernoff_multiplicative}
	Let $X$ be a Binomial random variable with parameter $p,n$. For any $\delta>0$, we have that 
	$$
	\P[X <(1-\delta)pn] < \left( \frac{e^{-\delta}}{(1-\delta)^{1-\delta}} \right)^{np}.  
	$$
\end{lemma}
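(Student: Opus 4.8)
The plan is to prove this lower-tail bound by the standard exponential-moment (Chernoff) method, writing $X=\sum_{i=1}^n X_i$ as a sum of i.i.d.\ Bernoulli$(p)$ indicators and optimizing a one-parameter exponential bound. First I would fix $\delta\in(0,1)$ (so that $(1-\delta)^{1-\delta}$ is well defined and $(1-\delta)pn>0$) and observe that for any $t>0$ the map $x\mapsto e^{-tx}$ is strictly decreasing, so the event $\{X<(1-\delta)pn\}$ coincides with $\{e^{-tX}>e^{-t(1-\delta)pn}\}$. Applying Markov's inequality to the nonnegative random variable $e^{-tX}$ then gives
\[
\P\big[X<(1-\delta)pn\big]\le e^{t(1-\delta)pn}\,\E\big[e^{-tX}\big].
\]

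Next I would compute the moment generating function using independence, $\E[e^{-tX}]=(1-p+pe^{-t})^n=(1+p(e^{-t}-1))^n$, and bound it via the elementary inequality $1+u\le e^u$ (strict whenever $u\neq 0$) applied with $u=p(e^{-t}-1)$, yielding $\E[e^{-tX}]\le e^{np(e^{-t}-1)}$. Combining, the right-hand side becomes $\exp\!\big(t(1-\delta)pn+np(e^{-t}-1)\big)$, so the remaining task is to minimize the exponent $f(t)=np\big[(1-\delta)t+e^{-t}-1\big]$ over $t>0$.

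Differentiating gives $f'(t)=np\big[(1-\delta)-e^{-t}\big]$, which vanishes exactly at $t^\star=\ln\tfrac{1}{1-\delta}=-\ln(1-\delta)$; since $\delta\in(0,1)$ this $t^\star$ is strictly positive, and it is a genuine minimizer because $f''(t)=np\,e^{-t}>0$. Substituting $e^{-t^\star}=1-\delta$ back in, the exponent collapses to $f(t^\star)=np\big[-\delta-(1-\delta)\ln(1-\delta)\big]$, and exponentiating reproduces exactly $\big(e^{-\delta}/(1-\delta)^{1-\delta}\big)^{np}$, since $\exp(-np\delta)\cdot\exp(-np(1-\delta)\ln(1-\delta))=e^{-np\delta}(1-\delta)^{-np(1-\delta)}$.

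The computation is entirely routine; the only points requiring a little care are (i) recovering the \emph{strict} inequality asserted in the statement, which I would pin to the strict form $1+u<e^u$ for $u=p(e^{-t^\star}-1)\neq 0$ (valid for $0<p<1$, with the degenerate cases $p\in\{0,1\}$ handled separately), and (ii) verifying that the optimizing $t^\star$ lies in the admissible range $t>0$, which is precisely where the restriction $\delta<1$ is used. I do not anticipate any substantive obstacle beyond bookkeeping the optimization and the sign conventions of the lower tail.
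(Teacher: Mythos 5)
Your proof is correct: the standard exponential-moment argument (Markov's inequality applied to $e^{-tX}$, the moment generating function $(1+p(e^{-t}-1))^n$ bounded via $1+u\le e^u$, and optimization at $t^\star=-\ln(1-\delta)$) reproduces the stated bound exactly, and your care about strictness (using $1+u<e^u$ for $u=-p\delta\neq 0$) and the degenerate cases $p\in\{0,1\}$ is sound. Note that the paper states this lemma without proof, citing \cite{chernoff1952measure}, so there is no in-paper argument to compare against; your derivation is the standard one for this classical result, and your restriction to $\delta\in(0,1)$ loses nothing, since for $\delta\ge 1$ the event $\{X<(1-\delta)pn\}$ is empty and the bound holds trivially.
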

A slightly looser bound that suffices for our propose:
$$
\P[X < (1-\delta)pn] <  e^{-\frac{\delta^2 pn}{2}}.
$$

\begin{lemma}[Hoeffding’s Inequality \cite{sridharan2002gentle}]\label{lem:hoeffding_ineq}
	Let $x_1,...,x_n$ be independent bounded random variables such that $\E[x_i]=0$ and $|x_i|\leq \xi_i$ with probability $1$. Then for any $\epsilon >0$ we have 
	$$
	\P\left( \frac{1}{n}\sum_{i=1}^nx_i\geq \epsilon\right) \leq e^{-\frac{2n^2\epsilon^2}{\sum_{i=1}^n\xi_i^2}}.
	$$
\end{lemma}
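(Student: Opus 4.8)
The plan is to prove this by the classical Chernoff (exponential-moment) method, whose only nontrivial ingredient is a bound on the moment generating function (MGF) of a single bounded, mean-zero variable. First I would fix an arbitrary $\lambda>0$ and apply Markov's inequality to the exponentiated sum:
\[
\P\left(\frac{1}{n}\sum_{i=1}^n x_i \ge \epsilon\right) = \P\left(e^{\lambda \sum_{i=1}^n x_i} \ge e^{\lambda n \epsilon}\right) \le e^{-\lambda n \epsilon}\, \E\!\left[e^{\lambda \sum_{i=1}^n x_i}\right].
\]
Because the $x_i$ are independent, the expectation factorizes as $\E[e^{\lambda\sum_i x_i}] = \prod_{i=1}^n \E[e^{\lambda x_i}]$, so it suffices to control each one-dimensional MGF $\E[e^{\lambda x_i}]$ separately and then multiply.

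The key step, and the only place real work is needed, is the sub-Gaussian MGF bound (Hoeffding's lemma): for a mean-zero variable with $|x_i|\le \xi_i$ almost surely, one has $\E[e^{\lambda x_i}] \le e^{c\lambda^2 \xi_i^2}$ for an absolute constant $c$. I would establish this by a convexity argument: since $x_i$ lies in the interval $[-\xi_i,\xi_i]$, the convexity of $u\mapsto e^{\lambda u}$ lets me dominate $e^{\lambda x_i}$ by the linear interpolation of the endpoint values, and after taking expectations and using $\E[x_i]=0$ the problem reduces to analyzing the scalar function $\varphi(\lambda)=\log\E[e^{\lambda x_i}]$. This $\varphi$ satisfies $\varphi(0)=\varphi'(0)=0$, while $\varphi''(\lambda)$ equals the variance under a tilted measure of a random variable supported in $[-\xi_i,\xi_i]$ and is hence bounded by $\xi_i^2$; a second-order Taylor expansion in $\lambda$ then yields the claimed quadratic bound.

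Combining the two steps gives $\P\big(\tfrac{1}{n}\sum_i x_i \ge \epsilon\big) \le \exp\big(-\lambda n\epsilon + c\lambda^2\sum_{i=1}^n \xi_i^2\big)$, and the final step is to optimize the free parameter $\lambda$. Minimizing the exponent over $\lambda>0$ gives $\lambda^\star \propto n\epsilon/\sum_i \xi_i^2$, and substituting back produces an exponent of the form $-\,c'\,n^2\epsilon^2/\sum_{i=1}^n\xi_i^2$, matching the stated bound (the precise numerical constant is pinned down by the tightest version of Hoeffding's lemma). The main obstacle is purely the single-variable MGF bound, with the correct constant; once that lemma is in hand the rest is the routine Chernoff optimization. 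Since the inequality is a standard textbook fact cited from \cite{sridharan2002gentle}, in the paper itself I would simply invoke it, but the above is the self-contained route.
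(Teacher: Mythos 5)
Your route --- Markov's inequality on the exponentiated sum, factorization by independence, Hoeffding's lemma for the single-variable MGF via the convexity/tilted-variance argument, then optimization over $\lambda$ --- is the standard and correct way to prove a Hoeffding-type tail bound, and the paper itself offers no proof to compare against: it simply cites the lemma from the reference.

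There is, however, a genuine gap at exactly the point you defer, namely ``the precise numerical constant.'' Under the hypothesis $|x_i|\le \xi_i$, i.e. $x_i\in[-\xi_i,\xi_i]$, the tightest form of Hoeffding's lemma gives $\E[e^{\lambda x_i}]\le e^{\lambda^2 (2\xi_i)^2/8}=e^{\lambda^2\xi_i^2/2}$, and optimizing $\lambda$ in the Chernoff bound then yields
\[
\P\left(\frac{1}{n}\sum_{i=1}^n x_i\ge \epsilon\right)\le \exp\left(-\frac{n^2\epsilon^2}{2\sum_{i=1}^n\xi_i^2}\right),
\]
which is weaker by a factor of $4$ in the exponent than the claimed $\exp\left(-2n^2\epsilon^2/\sum_{i=1}^n\xi_i^2\right)$. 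No refinement can close this factor, because the statement as printed is false: take $n=1$, $x_1=\pm 1$ with probability $1/2$ each, $\xi_1=1$, $\epsilon=1$; the left-hand side is $1/2$ while the claimed bound is $e^{-2}<1/2$. The constant $2$ is the correct one when Hoeffding's inequality is stated for variables with \emph{range} $\xi_i$ (i.e. $b_i-a_i=\xi_i$); the paper, like many, carries that constant over to the symmetric-interval hypothesis, where the range is $2\xi_i$. So your proposal is the right proof, but of the version with $1/2$ in place of $2$ in the exponent; since the paper only ever invokes this lemma inside $O(\cdot)$ bounds, the discrepancy is harmless downstream, but you should state the inequality you can actually establish rather than promising the printed constant.
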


\begin{lemma}[Bernstein’s Inequality]\label{lem:bernstein_ineq}
	Let $x_1,...,x_n$ be independent bounded random variables such that $\E[x_i]=0$ and $|x_i|\leq \xi$ with probability $1$. Let $\sigma^2 = \frac{1}{n}\sum_{i=1}^n \mathrm{Var}[x_i]$, then with probability $1-\delta$ we have 
	$$
	\frac{1}{n}\sum_{i=1}^n x_i\leq \sqrt{\frac{2\sigma^2\cdot\log(1/\delta)}{n}}+\frac{2\xi}{3n}\log(1/\delta)
	$$
\end{lemma}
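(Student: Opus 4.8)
The plan is to prove this via the classical Cram\'er--Chernoff (exponential moment) method, where the only nonstandard input is a moment-generating-function bound tailored to bounded, centered variables. Writing $S_n=\sum_{i=1}^n x_i$ and $v=n\sigma^2=\sum_{i=1}^n\Var[x_i]$, for any $\lambda>0$ Markov's inequality applied to $e^{\lambda S_n}$ together with independence gives $\P(S_n\geq t)\leq e^{-\lambda t}\prod_{i=1}^n\E[e^{\lambda x_i}]$, so the entire problem reduces to controlling each factor $\E[e^{\lambda x_i}]$.

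First I would establish the single-variable MGF bound: for $0<\lambda<3/\xi$, $\E[e^{\lambda x_i}]\leq\exp\big(\frac{\lambda^2\Var[x_i]/2}{1-\lambda\xi/3}\big)$. The argument expands $e^{\lambda x_i}=1+\lambda x_i+\sum_{k\geq 2}\frac{\lambda^k x_i^k}{k!}$ and takes expectations; the linear term drops because $\E[x_i]=0$. The key moment estimate is $\E[x_i^k]\leq\E[|x_i|^k]\leq\xi^{k-2}\E[x_i^2]=\xi^{k-2}\Var[x_i]$ for $k\geq 2$, which holds because $|x_i|\leq\xi$ almost surely. Summing the resulting series, $\sum_{k\geq 2}\frac{(\lambda\xi)^k}{k!}\leq\frac{(\lambda\xi)^2/2}{1-\lambda\xi/3}$ for $\lambda\xi<3$, and then $1+u\leq e^u$ yields the claimed exponential form.

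Multiplying over $i$ gives $\E[e^{\lambda S_n}]\leq\exp\big(\frac{\lambda^2 v/2}{1-\lambda\xi/3}\big)$, hence $\P(S_n\geq t)\leq\exp\big(-\lambda t+\frac{\lambda^2 v/2}{1-\lambda\xi/3}\big)$ for all $0<\lambda<3/\xi$. Rather than optimize exactly, I would simply plug in $\lambda=\frac{t}{v+\xi t/3}$, which lies in $(0,3/\xi)$ whenever $v>0$ (the case $v=0$ forces $x_i\equiv 0$ and is trivial); a direct computation then collapses the exponent to the standard Bernstein tail $\P(S_n\geq t)\leq\exp\big(-\frac{t^2/2}{v+\xi t/3}\big)$. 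Finally I would invert this: setting the right-hand side equal to $\delta$ and solving the quadratic $t^2-\frac{2\xi}{3}\log(1/\delta)\,t-2v\log(1/\delta)=0$ for $t$, then using $\sqrt{a+b}\leq\sqrt a+\sqrt b$, gives $t\leq\sqrt{2v\log(1/\delta)}+\frac{2\xi}{3}\log(1/\delta)$. Dividing through by $n$ and substituting $v=n\sigma^2$ produces exactly the stated bound on $\frac1n\sum_i x_i$.

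The main obstacle is the single-variable MGF estimate and its bookkeeping: verifying $\E[x_i^k]\leq\xi^{k-2}\Var[x_i]$, summing the tail of the exponential series into the closed form $\frac{(\lambda\xi)^2/2}{1-\lambda\xi/3}$ valid only for $\lambda\xi<3$, and applying $1+u\leq e^u$. Everything downstream --- the independence product, the explicit choice $\lambda=\frac{t}{v+\xi t/3}$ (which sidesteps a messy exact optimization and automatically respects the admissible range), and the quadratic inversion with the $\sqrt{a+b}\leq\sqrt a+\sqrt b$ relaxation that cleanly splits the bound into its $\sqrt{\sigma^2\log(1/\delta)/n}$ sub-Gaussian term and its $\xi\log(1/\delta)/n$ sub-exponential term --- is routine once this bound is in hand.
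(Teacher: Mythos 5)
Your proof is correct. The paper states this lemma as a standard technical tool and gives no proof of its own, so there is nothing to compare against; your Cram\'er--Chernoff derivation --- the moment estimate $\E[|x_i|^k]\leq \xi^{k-2}\Var[x_i]$, the series bound $\sum_{k\geq 2}(\lambda\xi)^k/k!\leq \frac{(\lambda\xi)^2/2}{1-\lambda\xi/3}$ (via $k!\geq 2\cdot 3^{k-2}$), the choice $\lambda=\frac{t}{v+\xi t/3}$ collapsing the exponent to $-\frac{t^2/2}{v+\xi t/3}$, and the quadratic inversion with $\sqrt{a+b}\leq\sqrt{a}+\sqrt{b}$ --- is the classical argument, and every step checks out, including the constants.
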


\begin{lemma}[Mcdiarmid’s Inequality \citep{sridharan2002gentle}]\label{lem:mcdiarmid_ineq}
		Let $x_1,...,x_n$ be independent random variables and $S : X^n \rightarrow \R$ be a measurable function which is invariant under permutation and let the random variable $Z$ be given by $Z = S(x_1, x_2, ..., x_n)$. Assume $S$ has bounded difference: \textit{i.e.}
	\[
	\sup_{x_1,...,x_n,x'_i}|S(x_1,...,x_i,...,x_n)-S(x_1,...,x_i',...,x_n)|\leq \xi_i,
	\]
	then for any $\epsilon>0$ we have
	\[
	\P(|Z-\E[Z]|\geq\epsilon)\leq 2e^{-\frac{2\epsilon^2}{\sum_{i=1}^n \xi_i^2}}.
	\]
\end{lemma}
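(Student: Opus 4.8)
The plan is to establish this bounded-differences inequality via the \emph{method of bounded differences}: construct a Doob martingale from $Z=S(x_1,\dots,x_n)$ and apply the Azuma--Hoeffding argument to it. First I would introduce the filtration $\mathcal{F}_i=\sigma(x_1,\dots,x_i)$ and define $Z_i=\E[Z\mid\mathcal{F}_i]$ for $i=0,1,\dots,n$, so that $Z_0=\E[Z]$ and $Z_n=Z$ (since $Z$ is $\mathcal{F}_n$-measurable). Writing $D_i=Z_i-Z_{i-1}$ for the martingale differences, we obtain the telescoping identity $Z-\E[Z]=\sum_{i=1}^n D_i$, and by construction each difference has conditional mean zero, $\E[D_i\mid\mathcal{F}_{i-1}]=0$.

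The core step is to show that, conditionally on $\mathcal{F}_{i-1}$, each $D_i$ lives in an interval of length at most $\xi_i$. For fixed $x_1,\dots,x_{i-1}$, define
\[
A_i=\inf_{x}\;\E[Z\mid x_1,\dots,x_{i-1},x_i=x]-Z_{i-1},\qquad B_i=\sup_{x}\;\E[Z\mid x_1,\dots,x_{i-1},x_i=x]-Z_{i-1}.
\]
Then $A_i\le D_i\le B_i$ almost surely, and the key claim is $B_i-A_i\le\xi_i$: the bounded-differences hypothesis guarantees that swapping the $i$-th coordinate changes $S$ by at most $\xi_i$, and this pointwise control survives averaging over $x_{i+1},\dots,x_n$, so the conditional range of $D_i$ does not exceed $\xi_i$.

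Given a conditional-mean-zero variable confined to a range of width $\xi_i$, Hoeffding's lemma yields the conditional MGF bound $\E[e^{\lambda D_i}\mid\mathcal{F}_{i-1}]\le e^{\lambda^2\xi_i^2/8}$ for every $\lambda\in\R$. I would then chain these bounds by iterated conditioning (the tower property), peeling off one difference at a time from $\E[e^{\lambda(Z-\E Z)}]=\E\!\big[e^{\lambda\sum_{i<n}D_i}\,\E[e^{\lambda D_n}\mid\mathcal{F}_{n-1}]\big]$, to reach $\E[e^{\lambda(Z-\E Z)}]\le e^{\lambda^2\sum_i\xi_i^2/8}$. A Chernoff/Markov bound gives $\P(Z-\E[Z]\ge\epsilon)\le\exp(-\lambda\epsilon+\lambda^2\sum_i\xi_i^2/8)$, and optimizing at $\lambda=4\epsilon/\sum_i\xi_i^2$ produces the one-sided tail $\exp(-2\epsilon^2/\sum_i\xi_i^2)$. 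Applying the identical argument to $-S$ controls the lower tail, and a union bound over the two events delivers the stated two-sided inequality with the factor $2$.

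The main obstacle is the conditional range bound $B_i-A_i\le\xi_i$: the delicate point is that $D_i$ is itself an \emph{average} over the future coordinates rather than a single evaluation of $S$, so one must verify that the per-coordinate control provided by $\xi_i$ passes through the conditional expectation. This works precisely because the $x_j$ are independent, which lets us couple the two conditional expectations defining $A_i$ and $B_i$ using a common realization of $(x_{i+1},\dots,x_n)$ and then invoke the bounded-differences hypothesis pointwise inside the expectation. (Permutation invariance of $S$ is not actually needed for the argument; only independence of the $x_i$ and the bounded-differences property are used.)
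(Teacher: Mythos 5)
Your proof is correct: this is the canonical proof of McDiarmid's inequality via the Doob martingale $Z_i=\E[Z\mid\mathcal{F}_i]$, the conditional range bound $B_i-A_i\le\xi_i$ (using independence to couple the future coordinates), Hoeffding's lemma, and a Chernoff bound, exactly as in the reference the paper cites; the paper itself states this lemma as a standard tool without proof, so there is no in-paper argument to diverge from. Your closing remark is also accurate --- permutation invariance of $S$ plays no role in the argument, and its appearance in the statement is superfluous.
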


\begin{lemma}[Azuma-Hoeffding inequality]\label{lem:azuma_hoeff}
Suppose $X_k$, $k=1,2,3,...$ is a martingale and $|X_k-X_{k-1}|\leq c_k$ almost surely. Then for all positive integers $N$ and any $\epsilon>0$, 
\[
\P(|X_N-X_0|\geq\epsilon)\leq 2 e^{-\frac{\epsilon^2}{2\sum_{i=1}^Nc_i^2}}.
\]
\end{lemma}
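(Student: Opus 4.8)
The plan is to use the exponential moment (Chernoff) method together with a conditional version of Hoeffding's lemma applied to the martingale differences. First I would set $D_k := X_k - X_{k-1}$ and let $\cF_k$ denote the natural filtration generated by $X_1,\dots,X_k$; the martingale property gives $\E[D_k \mid \cF_{k-1}] = 0$, while the hypothesis gives $|D_k| \le c_k$ almost surely. I would then bound the upper tail via Markov's inequality applied to the exponential: for any $\lambda > 0$,
\[
\P(X_N - X_0 \ge \epsilon) \le e^{-\lambda \epsilon}\, \E\bigl[e^{\lambda(X_N - X_0)}\bigr] = e^{-\lambda\epsilon}\,\E\Bigl[e^{\lambda \sum_{k=1}^N D_k}\Bigr].
\]

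The central step is to control the moment generating function by peeling off one difference at a time. Conditioning on $\cF_{N-1}$ and using the tower property,
\[
\E\bigl[e^{\lambda \sum_{k=1}^N D_k}\bigr] = \E\Bigl[e^{\lambda \sum_{k=1}^{N-1} D_k}\,\E[e^{\lambda D_N}\mid \cF_{N-1}]\Bigr].
\]
Here I would invoke the conditional form of Hoeffding's lemma: since $\E[D_N \mid \cF_{N-1}] = 0$ and $D_N \in [-c_N, c_N]$ almost surely, convexity of $x\mapsto e^{\lambda x}$ on $[-c_N,c_N]$ yields the deterministic bound $\E[e^{\lambda D_N}\mid \cF_{N-1}] \le e^{\lambda^2 c_N^2/2}$. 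Substituting this and iterating for $k = N-1, \dots, 1$ gives
\[
\E\bigl[e^{\lambda(X_N-X_0)}\bigr] \le \exp\Bigl(\tfrac{\lambda^2}{2}\sum_{k=1}^N c_k^2\Bigr).
\]
Combining the two displays and optimizing $\lambda$ — the minimizer of $-\lambda\epsilon + \tfrac{\lambda^2}{2}\sum_k c_k^2$ is $\lambda = \epsilon/\sum_k c_k^2$ — produces the one-sided bound $\P(X_N - X_0 \ge \epsilon) \le \exp(-\epsilon^2/(2\sum_k c_k^2))$. Finally I would note that $-X_k$ is also a martingale with differences bounded by the same $c_k$, so the identical argument bounds the lower tail $\P(X_N - X_0 \le -\epsilon)$; a union bound over the two events yields the factor of $2$ and the claimed inequality.

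The main obstacle is establishing Hoeffding's lemma in its conditional form: one must show that conditional mean-zero together with almost-sure boundedness of $D_N$ suffice to bound the conditional MGF by $e^{\lambda^2 c_N^2/2}$. The clean route is to write each realization of $D_N$ in $[-c_N,c_N]$ as a convex combination of the endpoints $\pm c_N$, use convexity to dominate $e^{\lambda D_N}$ by the corresponding linear interpolant, take conditional expectations (the linear term vanishes because $\E[D_N\mid\cF_{N-1}]=0$), and then reduce to the one-variable cumulant estimate $\log\psi(t)\le t^2/8$ for the resulting $\{\pm c_N\}$-type profile via a second-order Taylor bound. Once this deterministic bound is in hand, the remaining iteration and optimization are routine bookkeeping.
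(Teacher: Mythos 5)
Your proof is correct and complete: the Chernoff exponential-moment argument, the conditional Hoeffding lemma $\E[e^{\lambda D_k}\mid \cF_{k-1}]\le e^{\lambda^2 c_k^2/2}$ obtained from convexity plus conditional mean zero, the iterated peeling via the tower property, the optimization $\lambda=\epsilon/\sum_k c_k^2$, and the reflection $X_k\mapsto -X_k$ for the two-sided bound with factor $2$ all fit together exactly as needed to yield $\P(|X_N-X_0|\geq\epsilon)\leq 2e^{-\epsilon^2/(2\sum_{i=1}^N c_i^2)}$. Note that the paper itself states this lemma as a standard technical tool without any proof, so there is no in-paper argument to compare against; your write-up is the classical textbook proof and would serve as a valid self-contained justification of the lemma as stated.
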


\begin{lemma}[Freedman's inequality \cite{tropp2011freedman}]\label{lem:freedman}
	Let $X$ be the martingale associated with a filter $\mathcal{F}$ (\textit{i.e.} $X_i=\E[X|\mathcal{F}_i]$) satisfying $|X_i-X_{i-1}|\leq M$ for $i=1,...,n$. Denote $W:=\sum_{i=1}^n\Var(X_i|\mathcal{F}_{i-1})$  then we have 
	\[
	\P(|X-\E[X]|\geq\epsilon,W\leq \sigma^2)\leq 2 e^{-\frac{\epsilon^2}{2(\sigma^2+M\epsilon/3)}}.
	\]
	Or in other words, with probability $1-\delta$,
	\[
		|X-\E[X]|\leq \sqrt{{8\sigma^2\cdot\log(1/\delta)}}+\frac{2M}{3}\cdot\log(1/\delta), \quad\text{Or} \quad W\geq \sigma^2.
	\]
\end{lemma}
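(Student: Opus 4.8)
The plan is to prove this as a standard instance of the exponential-supermartingale (Chernoff) method adapted to martingales, following the classical route to Bernstein--Freedman bounds. Write $d_i \defeq X_i - X_{i-1}$ for the martingale difference sequence, so that $\E[d_i \mid \cF_{i-1}] = 0$, $|d_i| \le M$, and $V_i \defeq \Var(d_i \mid \cF_{i-1}) = \E[d_i^2 \mid \cF_{i-1}]$, with $W = \sum_{i=1}^n V_i$. By replacing $X$ with $-X$ at the end it suffices to bound the one-sided tail $\P(X_n - X_0 \ge \epsilon,\, W \le \sigma^2)$ and then double the resulting probability to cover both signs.

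First I would establish a conditional moment-generating-function bound. For $|x| \le M$ one has the elementary inequality $e^{\lambda x} \le 1 + \lambda x + g(\lambda) x^2$ with $g(\lambda) \defeq (e^{\lambda M} - 1 - \lambda M)/M^2 \ge 0$, which holds because $u \mapsto (e^{u} - 1 - u)/u^2$ is increasing so its value at $u=\lambda x$ is dominated by its value at $u = \lambda M$. Taking conditional expectation and using $\E[d_i \mid \cF_{i-1}] = 0$ gives $\E[e^{\lambda d_i} \mid \cF_{i-1}] \le 1 + g(\lambda) V_i \le \exp(g(\lambda) V_i)$, where the last step is $1+u\le e^u$.

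Next I would build the process
$$S_k \defeq \exp\Big(\lambda (X_k - X_0) - g(\lambda) \sum_{i=1}^k V_i\Big), \qquad S_0 = 1,$$
and verify it is a supermartingale for any fixed $\lambda \ge 0$: since $\sum_{i=1}^k V_i$ is predictable (i.e.\ $\cF_{k-1}$-measurable),
$$\E[S_k \mid \cF_{k-1}] = S_{k-1}\, e^{-g(\lambda) V_k}\, \E[e^{\lambda d_k} \mid \cF_{k-1}] \le S_{k-1}$$
by the MGF bound, so $\E[S_n] \le S_0 = 1$. Because $g(\lambda) \ge 0$, on the event $\{W \le \sigma^2\}$ we have $-g(\lambda) W \ge -g(\lambda)\sigma^2$; hence whenever in addition $X_n - X_0 \ge \epsilon$ it follows that $S_n \ge \exp(\lambda\epsilon - g(\lambda)\sigma^2)$, and Markov's inequality yields
$$\P(X_n - X_0 \ge \epsilon,\ W \le \sigma^2) \le e^{g(\lambda)\sigma^2 - \lambda\epsilon}.$$

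Finally I would optimize over $\lambda$. Using the standard estimate $g(\lambda) \le \tfrac{\lambda^2/2}{1 - \lambda M/3}$, valid for $0 \le \lambda < 3/M$, the choice $\lambda = \epsilon/(\sigma^2 + M\epsilon/3)$ drives the exponent to $-\epsilon^2/\big(2(\sigma^2 + M\epsilon/3)\big)$, which is the claimed one-sided bound; symmetrizing supplies the factor $2$. The high-probability reformulation then follows by setting the right-hand side equal to $\delta$, solving the self-referential quadratic $\epsilon^2 = 2(\sigma^2 + M\epsilon/3)\log(2/\delta)$ for $\epsilon$, and applying $\sqrt{a+b} \le \sqrt{a} + \sqrt{b}$ to separate the $\sqrt{8\sigma^2\log(1/\delta)}$ and $\tfrac{2M}{3}\log(1/\delta)$ contributions (the slack factor $8$ versus $2$ absorbs the $\log 2$ and the quadratic self-reference into constants). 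The step I would be most careful about is the interaction between the random compensator $W$ and the deterministic threshold $\sigma^2$ in the event under control: the argument closes cleanly precisely because $g(\lambda) \ge 0$ permits replacing $W$ by its upper bound $\sigma^2$ inside the exponent on the relevant event, so that no further knowledge of the law of $W$ is required.
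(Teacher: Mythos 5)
Your proposal is a correct rendition of the classical proof of Freedman's inequality, and there is nothing in the paper to compare it against: the paper states this as a quoted technical lemma (citing Tropp) and never proves it. Your argument is the standard one — the elementary bound $e^{\lambda x}\le 1+\lambda x+g(\lambda)x^2$ for $|x|\le M$ via monotonicity of $u\mapsto(e^u-1-u)/u^2$, the conditional MGF bound $\E[e^{\lambda d_i}\mid\cF_{i-1}]\le e^{g(\lambda)V_i}$ (using predictability of $V_i$), the supermartingale $S_k=\exp\bigl(\lambda(X_k-X_0)-g(\lambda)\sum_{i\le k}V_i\bigr)$ with $\E[S_n]\le 1$, Markov's inequality restricted to the event $\{X_n-X_0\ge\epsilon,\,W\le\sigma^2\}$ (where $g(\lambda)\ge 0$ lets you replace $W$ by $\sigma^2$), and the optimization $\lambda=\epsilon/(\sigma^2+M\epsilon/3)$ with $g(\lambda)\le\frac{\lambda^2/2}{1-\lambda M/3}$, which indeed yields exponent $-\epsilon^2/\bigl(2(\sigma^2+M\epsilon/3)\bigr)$; symmetrization supplies the factor $2$. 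All of these steps check out.

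One small bookkeeping caveat in the final reformulation: solving $2e^{-\epsilon^2/(2(\sigma^2+M\epsilon/3))}=\delta$ gives, after the quadratic formula and $\sqrt{a+b}\le\sqrt a+\sqrt b$, the bound $\sqrt{2\sigma^2\log(2/\delta)}+\tfrac{2M}{3}\log(2/\delta)$. The slack $\sqrt 8$ versus $\sqrt 2$ absorbs the $\log 2$ in the variance term, but in the regime where the $M$-term dominates (e.g.\ $\sigma^2\to 0$) the stated coefficient $\tfrac{2M}{3}\log(1/\delta)$ has no slack, so strictly one ends up with $\tfrac{2M}{3}\log(2/\delta)$ there, i.e.\ failure probability $2\delta$ rather than $\delta$ with the paper's exact constants. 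This is a constant-factor informality already present in the paper's own statement of the lemma and is immaterial to every place the lemma is used (each invocation absorbs constants into $O(\cdot)$), but if you want your derivation to match the displayed form verbatim you should either enlarge the $M$-coefficient to $\tfrac{4M}{3}$ or restrict to $\delta\le 1/2$ and state the bound with $\log(2/\delta)$ throughout.
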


\begin{lemma}[Best arm identification lower bound \cite{krishnamurthy2016pac}]\label{lem:bandit}
	For any $A \geq 2$ and $\tau \leq \sqrt{1/8}$ and any best arm identification algorithm that produces an estimate $\hat{a}$, there exists a multi-arm bandit problem for which the best arm $a^\star$ is $\tau$  better than all others, but $\P[\hat{a}\neq a^\star]\geq 1/3$ unless the number of samples $T$ is at least $\frac{A}{72\tau^2}$ .
\end{lemma}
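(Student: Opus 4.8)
The plan is to prove this as a standard information-theoretic (change-of-measure) lower bound, by reducing best-arm identification to distinguishing a family of nearly indistinguishable bandit instances. First I would exhibit the hard family: fix the baseline mean $1/2$ and, for each $j\in[A]$, let $P_j$ be the $A$-armed Bernoulli instance in which arm $j$ has mean $1/2+\tau$ and every other arm has mean $1/2$; let $P_0$ be the ``null'' instance in which all arms have mean $1/2$. In $P_j$ the unique best arm is $j$ and it is exactly $\tau$ better than every other arm, so this family realizes the promised gap. The argument is by contradiction: suppose the algorithm runs for $T$ samples and attains error $<1/3$ on \emph{every} member of the family, i.e.\ $\P_j[\hat a = j]\ge 2/3$ for all $j$; I will show this forces $T\ge A/(72\tau^2)$, and the contrapositive is exactly the stated claim (some instance defeats any algorithm using fewer samples).

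The key step is the divergence-decomposition identity for interactive bandit protocols. Writing $N_j$ for the random number of pulls of arm $j$ and $T_j:=\E_0[N_j]$ for its expectation under the null, the laws of the full interaction transcript satisfy
\[
\mathrm{KL}(P_0\,\Vert\,P_j)=T_j\cdot \mathrm{KL}\!\big(\mathrm{Ber}(1/2)\,\Vert\,\mathrm{Ber}(1/2+\tau)\big),
\]
because $P_0$ and $P_j$ differ only in the reward law of arm $j$: along the filtration generated by the (adaptive, randomized) algorithm all contributions of arms $\ne j$ cancel, leaving only the per-pull divergence of arm $j$ accumulated $N_j$ times in expectation. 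The per-pull divergence computes exactly to $-\tfrac12\ln(1-4\tau^2)$, and since $\tau^2\le 1/8$ gives $4\tau^2\le 1/2$, this is bounded by $4\tau^2$. Because the algorithm commits to exactly $T$ pulls, $\sum_{j}T_j=T$.

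Next I would run a Le Cam / averaging argument. For each $j$, Pinsker's inequality and the decomposition give $\P_j[\hat a=j]-\P_0[\hat a=j]\le \TV(P_j,P_0)\le\sqrt{\tfrac12\mathrm{KL}(P_0\Vert P_j)}\le \tau\sqrt{2T_j}$. Summing over $j$, using that the algorithm names at most one arm so $\sum_j\P_0[\hat a=j]\le 1$, together with Cauchy--Schwarz $\sum_j\sqrt{T_j}\le\sqrt{A\sum_jT_j}=\sqrt{AT}$, I obtain
\[
\sum_{j=1}^A \P_j[\hat a=j]\ \le\ 1+\tau\sqrt{2AT}.
\]
Combining with the assumed success $\sum_j\P_j[\hat a=j]\ge 2A/3$ yields $2A/3-1\le \tau\sqrt{2AT}$. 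Since $A\ge 2$ implies $A/6\le 2A/3-1$, squaring $A/6\le \tau\sqrt{2AT}$ gives precisely $T\ge A/(72\tau^2)$, the desired contradiction with $T<A/(72\tau^2)$.

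I expect the main obstacle to be making the divergence-decomposition step fully rigorous for an \emph{adaptive, possibly randomized} algorithm: $N_j$ is not fixed in advance, so one must set up the filtration of the interaction and verify through the chain rule that the conditional reward laws of every arm other than $j$ coincide under $P_0$ and $P_j$, so that only $\E_0[N_j]$ copies of the single-pull Bernoulli divergence survive. A secondary point is the careful bookkeeping of constants (the $-\tfrac12\ln(1-4\tau^2)\le 4\tau^2$ bound via $\tau^2\le 1/8$, the Pinsker factor, and the $A\ge 2$ slack) so that the threshold emerges as exactly $A/(72\tau^2)$ rather than merely the correct order $A/\tau^2$.
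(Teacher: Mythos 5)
You should know that the paper never proves this lemma: it is imported as a black box from \citep{krishnamurthy2016pac} and invoked inside the lower-bound constructions (Theorems~\ref{thm:learn_low_bound_all} and~\ref{thm:learn_low_bound}), so there is no internal proof to compare against --- your proposal in effect supplies the missing proof. Your argument is correct, and it is the standard change-of-measure route to exactly this statement (the same technique used in the cited source). The hard family (for each $j\in[A]$ an instance $P_j$ with arm $j$ distributed $\mathrm{Ber}(1/2+\tau)$ and all other arms $\mathrm{Ber}(1/2)$, plus the all-$\mathrm{Ber}(1/2)$ null $P_0$), the divergence decomposition $\mathrm{KL}(P_0\Vert P_j)=\E_0[N_j]\,\mathrm{KL}(\mathrm{Ber}(1/2)\Vert \mathrm{Ber}(1/2+\tau))$, Pinsker's inequality, and the averaging step are precisely the classical ingredients, and your constant bookkeeping checks out: the per-pull divergence equals $-\frac{1}{2}\ln(1-4\tau^2)\le 4\tau^2$ once $4\tau^2\le 1/2$ (this is exactly where $\tau\le\sqrt{1/8}$ is used), Pinsker gives $\P_j[\hat a=j]-\P_0[\hat a=j]\le\tau\sqrt{2\E_0[N_j]}$, disjointness of the events $\{\hat a=j\}$ under $P_0$ contributes the additive $1$, Cauchy--Schwarz gives $\sum_j\sqrt{\E_0[N_j]}\le\sqrt{AT}$, and the slack $2A/3-1\ge A/6$ for $A\ge 2$ closes the computation at exactly $T\ge A/(72\tau^2)$. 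The one step you flag as delicate --- the divergence decomposition for adaptive, randomized strategies --- is indeed the crux, but it is standard: apply the chain rule for KL along the interaction transcript, note that the conditional law of the next observation differs between $P_0$ and $P_j$ only on rounds where arm $j$ is pulled, so only $\E_0[N_j]$ copies of the single-pull divergence survive, and the algorithm's internal randomization cancels since it is common to both measures. In short, your proof is a valid, self-contained replacement for the citation rather than a genuinely different method.
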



\section{On error metric for OPE}

In this section, we discuss the metric considered in this work. Traditionally, most works directly use \textit{Mean Square Error} (MSE) $\E[(\widehat{v}^\pi-v^\pi)^2]$ as the criterion for measuring OPE methods \emph{e.g.} \cite{thomas2016data,thomas2015safe,thomas2017predictive,farajtabar18a}, or equivalently, by proposing unbiased estimators and discussing its variance \emph{e.g.} \cite{jiang2016doubly}. Alternately, one can consider bounding the absolute difference between $v^\pi$ and $\widehat{v}^\pi$ with high probability (\emph{e.g.} \citet{duan2020minimax}), \textit{i.e.} $|\widehat{v}^\pi-v^\pi|\leq \epsilon_\mathrm{prob}$ \textit{w.h.p.} Generally speaking, high probability bound can be seen as a stricter criterion compared to MSE  since 
\begin{align*}
\E[(\widehat{v}^\pi-v^\pi)^2]&=\E[(\widehat{v}^\pi-v^\pi)^2\mathds{1}_{E}]+\E[(\widehat{v}^\pi-v^\pi)^2\mathds{1}_{E^c}]\\
&\leq \epsilon_\mathrm{prob}(\delta)^2\cdot (1-\delta)+H^2\cdot \delta,
\end{align*}
where $E$ is the event that $\epsilon_\mathrm{prob}$ error holds and $\delta$ is the failure probability. As a result, if both $\delta$ and $\epsilon_\mathrm{prob}(\delta)$ can be controlled small, then the high probability bound implies a result for MSE bound. This is realistic, since $\delta$ mostly appears inside the logarithmic term of $\epsilon_\mathrm{prob}(\delta)$ so the second term can be scaled to sufficiently small without affecting the polynomial dependence for the first term.

\newpage
\begin{table}[H]
	\caption{Summary of Uniform OPE results for $H$-horizon non-stationary setting}
	\centering
	\resizebox{\linewidth}{!}{%
		\begin{tabular}{llll}
			\toprule
			\cmidrule(r){1-3}
			Method/Analysis     &Policy class& Guarantee     & Sample complexity \\
			\midrule
			
			Simulation Lemma & All policies &$\epsilon$-uniform convergence & $O(H^4S^2/d_m\epsilon^2)$    \\
			Theorem~\ref{thm:rad} &All policies & $\epsilon$-uniform convergence  & $O(H^4S/d_m\epsilon^2)$     \\
			Theorem~\ref{thm:finer_bound} &All deterministic policies  & $\epsilon$-uniform convergence  & $O(H^3S/d_m\epsilon^2)$      \\
			Theorem~\ref{thm:local_uni_opt} &local policies   & $\epsilon$-uniform convergence  & $O(H^3/d_m\epsilon^2)$      \\
			Minimax lower bound (Theorem~\ref{thm:uni_lower})  & ---------------   & over class $\mathcal{M}_{d_m}$     & $\Omega(H^3/d_m\epsilon^2)$  \\
			\bottomrule
		\end{tabular}
	}

\end{table}

\section{Some preparations}

In this section we present some results that are critical for proving the main theorems.

\begin{lemma}\label{lem:sufficient_sample} For any $0<\delta<1$, there exists an absolute constant $c_1$ such that when total episode $n>c_1 \cdot 1/d_m\cdot \log(HSA/\delta)$, then with probability $1-\delta$,
	\[
	n_{s_t,a_t}\geq n\cdot d^\mu_t(s_t,a_t)/2,\quad\forall \; s_t,a_t.
	\]
\end{lemma}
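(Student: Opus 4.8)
The plan is to recognize that everything reduces to a multiplicative Chernoff bound plus a union bound, once the right distributional structure is identified. For a fixed time index $t$ and a fixed pair $(s_t,a_t)$, the visitation count $n_{s_t,a_t}$ is a sum of $n$ independent indicators, one per episode, each equal to $1$ exactly when that episode passes through $(s_t,a_t)$ at step $t$. Since the $n$ episodes are i.i.d.\ rollouts of the logging policy $\mu$, each indicator is $\mathrm{Bernoulli}$ with success probability exactly $d^\mu_t(s_t,a_t)$, so $n_{s_t,a_t}\sim\mathrm{Binomial}(n,\,d^\mu_t(s_t,a_t))$. This single structural observation is what the whole argument rests on; there is no dependence or martingale issue because the episodes are independent and the count is marginally Binomial.

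First I would dispose of the trivial pairs: for any $(s_t,a_t)$ with $d^\mu_t(s_t,a_t)=0$ the target inequality reads $n_{s_t,a_t}\geq 0$, which holds automatically, so it suffices to handle pairs with $p:=d^\mu_t(s_t,a_t)>0$, for which Assumption~\ref{assume2} guarantees $p\geq d_m$. Next I would apply the looser form of the multiplicative Chernoff bound (Lemma~\ref{lem:chernoff_multiplicative}) with deviation parameter $1/2$ to each such pair, which is precisely the choice that produces the factor $1/2$ appearing in the statement:
\[
\P\!\left[\,n_{s_t,a_t}<\tfrac{1}{2}\,p\,n\,\right]<e^{-p n/8}\leq e^{-d_m n/8},
\]
where the last step uses $p\geq d_m$ and monotonicity of the exponential.

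Finally I would take a union bound over all at most $HSA$ triples $(t,s_t,a_t)$, giving a total failure probability of at most $HSA\cdot e^{-d_m n/8}$. Requiring this to be at most $\delta$ and solving for $n$ yields the threshold $n\geq \tfrac{8}{d_m}\log(HSA/\delta)$, so the claim holds with the absolute constant $c_1=8$. The ``hard part'' here is really only bookkeeping rather than any genuine difficulty: one must pick the deviation parameter so the $1/2$ in the conclusion comes out, and size the union bound so that the resulting logarithmic factor is exactly $\log(HSA/\delta)$ to match the stated sample-size threshold. No deeper machinery is needed, and this lemma is exactly the high-probability ``good event'' $\bigcap_{t,s_t,a_t}E_t$ that the later proofs condition on in order to treat $\widehat{P}$ and $\widehat{r}$ as well-estimated on all visited pairs.
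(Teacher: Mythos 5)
Your proposal is correct and follows essentially the same route as the paper: a multiplicative Chernoff bound (the looser form of Lemma~\ref{lem:chernoff_multiplicative} with deviation parameter $1/2$) applied to the Binomial count $n_{s_t,a_t}$, followed by a union bound over the $HSA$ triples and solving $HSA\cdot e^{-nd_m/8}\leq\delta$ for $n$. Your explicit dismissal of the pairs with $d^\mu_t(s_t,a_t)=0$ is handled in the paper by the remark immediately following the lemma, so the two arguments coincide in substance.
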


If state $s_t$ is not accessible, then $n_{s_t,a_t}=d^\mu_t(s_t,a_t)=0$ so the lemma holds trivially.\footnote{In general, non-accessible state will not affect our results so to make our presentation succinct we will not mention non-accessible state for the rest of paper unless necessary.}

\begin{proof}[Proof of Lemma~\ref{lem:sufficient_sample}]
	Define $E:=\{\exists t, s_t,a_t \;\text{s.t.}\; n_{s_t,a_t} <  n d_t^\mu(s_t,a_t)/2 \}$. Then combining the multiplicative Chernoff bound (Lemma~\ref{lem:chernoff_multiplicative} in the Appendix) and a union bound over each $t$,$s_t$ and $a_t$, we obtain
	\begin{align*}
	\P[E] &\leq \sum_{t}\sum_{s_t}\sum_{a_t} \P[n_{s_t,a_t} < n d^{\mu}_t(s_t,a_t)/2] \\
	&\leq HSA\cdot e^{-\frac{ n\cdot \min_{t,s_t,a_t} d^{\mu}_t(s_t,a_t)}{8}}=HSA\cdot e^{-\frac{ n\cdot d_m}{8}}:=\delta
	\end{align*}
	solving this for $n$ then provides the stated result.
\end{proof}

Now we define: $N:=\min_{t,s_t,a_t}n_{s_t,a_t}$, then above implies $N\geq nd_m/2$ (recall $d_m$ in Assumption~\ref{assume2}). Now we aggregate only the first $N$ pieces of data in each state-action $(s_t,a_t)$\footnote{Note we can do this since by definition $N\leq n_{s_t,a_t}$ for all $s_t,a_t$.} of off-policy data $\mathcal{D}$ and they consist of a new dataset $\mathcal{D}^\prime=\lbrace(s_t,a_t,s_{t+1}^{(i)},r_t^{(i)}):i=1,...,N;t\in[H];s_t\in\mathcal{S},a_t\in\mathcal{A}\rbrace$, and is a subset of $\mathcal{D}$. For the rest of paper, we will use either $\mathcal{D}^\prime$ or the original $\mathcal{D}$ to create OPEMA $\widehat{v}^\pi$ (only for theoretical analysis purpose). Whether $\mathcal{D}$ or $\mathcal{D}'$ is used will be stated clearly in each context. 

\begin{remark}
	It is worth mentioning that when use $\mathcal{D}'$ to construct $\widehat{v}^\pi$, $n_{s_t,a_t}^{\mathcal{D}'}=N$ for all $s_t,a_t$. Also, $N:=\min n^\mathcal{D}_{s_t,a_t}$ (note $n^\mathcal{D}_{s_t,a_t}$ is the count from $\mathcal{D}$) itself is a random variable and in the extreme case we could have $N=0$ and if that happens $\widehat{v}^\pi=0$ (since in that case $\widehat{P}_t\equiv 0$ and $\widehat{d}_t^\pi$ is degenerated). However, there is only tiny probability $N$ will be small, as guaranteed by Lemma~\ref{lem:sufficient_sample}.
\end{remark}

We wanted to point out that this technique of dropping certain amount of data, is not uncommon for analyzing model-based method in RL: e.g.
Rmax exploration \citep{brafman2002r} for online episodic setting (see [\cite{jiang2018notes}, Notes on Rmax exploration] Section 2 Algorithm for
tabular MDP. The data they use is the “known set” $K$ with parameter $m$, in step3 data pairs observed more than $m$ times are not recorded).

\subsection{Fictitious OPEMA estimator.}

Similar to \cite{xie2019towards,yin2020asymptotically}, we introduce an unbiased version of $\widehat{v}^\pi$ to fill in the gap at $(s_t,a_t)$ where $n_{s_t,a_t}$ is small. Concretely, every component in $\widehat{v}^\pi$ is substituted by the fictitious counterpart, \emph{i.e.} $\widetilde{v}^\pi :=  \sum_{t=1}^H \langle \widetilde{d}_t^\pi,  \widetilde{r}_t^\pi\rangle$, with $\widetilde{d}_t^\pi =  \widetilde{P}_{t}^\pi   \widetilde{d}_{t-1}^\pi$ and $\widetilde{P}^\pi_t(s_t|s_{t-1})=\sum_{a_{t-1}}\widetilde{P}_t(s_t|s_{t-1},a_{t-1})\pi(a_{t-1}|s_{t-1})$.
 In particular, consider the high probability event in Lemma~\ref{lem:sufficient_sample}, \textit{i.e.} let $E_t$ denotes the event $\{n_{s_t,a_t}\geq nd^\mu_t(s_t,a_t)/2\}$\footnote{More rigorously, $E_t$ depends on the specific pair $s_t,a_t$ and should be written as $E_t(s_t,a_t)$. However, for brevity we just use $E_t$ and this notation should be clear in each context.}, then we define
\begin{align*}
\widetilde{r}_t(s_t,a_t)&=\widehat{r}_t(s_t,a_t)\mathbf{1}(E_t)+{r}_t(s_t,a_t)\mathbf{1}(E_t^c)\\
\widetilde{P}_{t+1}(\cdot|s_t,a_t)&=\widehat{P}_{t+1}(\cdot|s_t,a_t)\mathbf{1}(E_t)+{P}_{t+1}(\cdot|s_t,a_t)\mathbf{1}(E_t^c).
\end{align*}

Similarly, for the OPEMA estimator uses data $\mathcal{D}'$, the fictitious estimator is set to be 
\begin{align*}
\widetilde{r}_t(s_t,a_t)&=\widehat{r}_t(s_t,a_t)\mathbf{1}(E)+{r}_t(s_t,a_t)\mathbf{1}(E^c)\\
\widetilde{P}_{t+1}(\cdot|s_t,a_t)&=\widehat{P}_{t+1}(\cdot|s_t,a_t)\mathbf{1}(E)+{P}_{t+1}(\cdot|s_t,a_t)\mathbf{1}(E^c)
\end{align*}
where $E$ denote the event $\{N\geq nd_m/2\}$. 

$\widetilde{v}^\pi$ creates a bridge between $\widehat{v}^\pi$ and $v^\pi$ because of its unbiasedness and it is also bounded by $H$ (see Lemma B.3 and Lemma~B.5 in \cite{yin2020asymptotically} for those preliminary results). Also, $\widetilde{v}^\pi$ is identical to $\widehat{v}^\pi$ with high probability, as stated by the following lemma.  

\begin{lemma}\label{lem:zero_diff}
For any $0<\delta<1$, there exists an absolute constant $c_1$ such that when total episode $n>c_1 d_m\cdot \log(HSA/\delta)$, then with probability $1-\delta$,
\[
\sup_{\pi\in\Pi}|\widehat{v}^\pi-\widetilde{v}^\pi|=0.
\]
\end{lemma}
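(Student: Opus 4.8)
The plan is to reduce the entire claim to the single high-probability event already established in Lemma~\ref{lem:sufficient_sample}, exploiting the crucial fact that this event is \emph{independent of the policy} $\pi$. First I would invoke Lemma~\ref{lem:sufficient_sample}: under the stated lower bound on $n$ (the same condition, up to the absolute constant $c_1$, as in that lemma), with probability at least $1-\delta$ the event $\mathcal{E} := \bigcap_{t,s_t,a_t} E_t = \{n_{s_t,a_t}\geq n\, d^\mu_t(s_t,a_t)/2 \text{ for all } t,s_t,a_t\}$ holds. The key structural observation is that $\mathcal{E}$ is a statement about the visitation counts $n_{s_t,a_t}$ in the dataset $\mathcal{D}$ alone, and hence carries no dependence on the target policy.

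Next I would record the pointwise identity of the two sets of model components on $\mathcal{E}$. By the very definition of the fictitious quantities, on $\mathcal{E}$ every indicator $\mathbf{1}(E_t)$ equals $1$ (and $\mathbf{1}(E_t^c)=0$), so $\widetilde{r}_t(s_t,a_t)=\widehat{r}_t(s_t,a_t)$ and $\widetilde{P}_{t+1}(\cdot|s_t,a_t)=\widehat{P}_{t+1}(\cdot|s_t,a_t)$ for every $t,s_t,a_t$. Since these are exactly the components that $\widetilde{v}^\pi$ substitutes for those of $\widehat{v}^\pi$, and since the initial distribution $\widehat{d}_1^\pi$ is common to both estimators, the two estimators are assembled from identical transition kernels and reward vectors. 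I would then propagate this through the recursion: because $\widehat{d}_t^\pi$ and $\widetilde{d}_t^\pi$ share the same base case and the same one-step operators $\widehat{P}_t^\pi=\widetilde{P}_t^\pi$ on $\mathcal{E}$, a one-line induction on $t$ gives $\widehat{d}_t^\pi=\widetilde{d}_t^\pi$ for all $t$ and every fixed $\pi$. Combined with $\widetilde{r}_t=\widehat{r}_t$, this yields $\widehat{v}^\pi=\sum_{t=1}^H\langle \widehat{d}_t^\pi,\widehat{r}_t\rangle=\sum_{t=1}^H\langle \widetilde{d}_t^\pi,\widetilde{r}_t\rangle=\widetilde{v}^\pi$.

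Finally I would note that this equality holds simultaneously for \emph{every} $\pi\in\Pi$, precisely because $\mathcal{E}$ never references $\pi$; the derivation above is valid verbatim for each policy once $\mathcal{E}$ is fixed. Hence on $\mathcal{E}$ we have $\sup_{\pi\in\Pi}|\widehat{v}^\pi-\widetilde{v}^\pi|=0$, and $\mathcal{E}$ occurs with probability at least $1-\delta$, which is the claim. The only point requiring any care---and the place where a ``$\sup$ over an infinite policy class'' might have looked like an obstacle---is exactly this decoupling: all the probabilistic content is absorbed into the $\pi$-free event $\mathcal{E}$, so there is no stochastic process indexed by $\pi$ to control, and no extra union bound or complexity term is incurred beyond Lemma~\ref{lem:sufficient_sample}.
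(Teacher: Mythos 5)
Your proposal is correct and follows essentially the same route as the paper: both reduce the claim to the policy-independent event of Lemma~\ref{lem:sufficient_sample}, on whose complement every indicator $\mathbf{1}(E_t)$ equals one so that $\widehat{v}^\pi$ and $\widetilde{v}^\pi$ coincide for all $\pi$ simultaneously. Your write-up merely makes explicit (via the induction on $\widehat{d}_t^\pi=\widetilde{d}_t^\pi$) what the paper states in one line, and correctly identifies the decoupling of the probabilistic event from $\pi$ as the reason no union bound over the policy class is needed.
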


\begin{proof}
	This Lemma is a direct corollary of Lemma~\ref{lem:sufficient_sample} by considering the event $E_1:=\{\exists t, s_t,a_t \;\text{s.t.}\; n_{s_t,a_t} <  n d_t^\mu(s_t,a_t)/2 \}$ or $\{N <  n d_m/2 \}$ since $\widehat{v}^\pi$ and $\widetilde{v}^\pi$ are identical on $E_1^c$. 
\end{proof}

Note $\widehat{v}^\pi$ and $\widetilde{v}^\pi$ even equal to each other uniformly over all $\pi$ in $\Pi$. This is not surprising since only logging policy $\mu$ will decide if they are equal or not. This lemma shows how close $\widehat{v}^\pi$ and $\widetilde{v}^\pi$ are. Therefore in the following it suffices to consider the uniform convergence of $\sup_{\pi\in\Pi}|\widetilde{v}^\pi-{v}^\pi|$.

Next by using a fictitious analogy of state-action expression as in equation \eqref{eqn:estimated_value}, we have:

\begin{equation}\label{eqn:naive_decompose}
\begin{aligned}
\sup_{\pi\in\Pi}|\widetilde{v}^\pi-{v}^\pi|&=\sup_{\pi\in\Pi}|\sum_{t=1}^H\langle \widetilde{d}^\pi_t,\widetilde{r}_t\rangle-\sum_{t=1}^H\langle {d}^\pi_t,{r}_t\rangle|\\
&=\sup_{\pi\in\Pi}|\sum_{t=1}^H\langle \widetilde{d}^\pi_t,\widetilde{r}_t\rangle-\sum_{t=1}^H\langle \widetilde{d}^\pi_t,{r}_t\rangle+\sum_{t=1}^H\langle \widetilde{d}^\pi_t,{r}_t\rangle-\sum_{t=1}^H\langle {d}^\pi_t,{r}_t\rangle|\\
&\leq \underbrace{\sup_{\pi\in\Pi}|\sum_{t=1}^H\langle \widetilde{d}^\pi_t-d^\pi_t,{r}_t\rangle|}_{(*)}+\underbrace{\sup_{\pi\in\Pi}|\sum_{t=1}^H\langle \widetilde{d}^\pi_t, \widetilde{r}_t-r_t\rangle|}_{(**)}
\end{aligned}
\end{equation}

We first deal with $(**)$ by the following lemma.

\begin{lemma}\label{lem:app_1}
	We have with probability $1-\delta$:
	\[
	\sup_{\pi\in\Pi}|\sum_{t=1}^H\langle \widetilde{d}^\pi_t, \widetilde{r}_t-r_t\rangle|\leq O(\sqrt{\frac{H^2\log(HSA/\delta)}{n\cdot d_m}})
	\]
\end{lemma}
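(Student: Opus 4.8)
The plan is to exploit the fact that the reward-estimation error $\widetilde{r}_t - r_t$ carries \emph{no} dependence on $\pi$, so the policy enters only through the marginal $\widetilde{d}^\pi_t$, which is a probability vector. First I would split across $t$ by the triangle inequality and apply Hölder's inequality at each step. Since each fictitious kernel $\widetilde{P}_t$ is a genuine Markov kernel — it equals $\widehat{P}_t$ on $E_t$, where $n_{s_t,a_t}>0$ forces its rows to sum to one, and $P_t$ on $E_t^c$ — the marginal $\widetilde{d}^\pi_t(s_t,a_t)=\widetilde{d}^\pi_t(s_t)\pi(a_t|s_t)$ satisfies $\norm{\widetilde{d}^\pi_t}_1=1$. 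Hence, for every $\pi$,
\[
\left|\langle \widetilde{d}^\pi_t, \widetilde{r}_t - r_t\rangle\right| \le \norm{\widetilde{d}^\pi_t}_1 \, \norm{\widetilde{r}_t - r_t}_\infty = \max_{s_t,a_t}\left|\widetilde{r}_t(s_t,a_t) - r_t(s_t,a_t)\right|,
\]
and the right-hand side is already free of $\pi$. Summing over $t$ collapses the supremum entirely, reducing the task to the $\pi$-independent bound $\sup_{\pi}\left|\sum_t \langle \widetilde{d}^\pi_t,\widetilde{r}_t - r_t\rangle\right| \le \sum_{t=1}^H \max_{s_t,a_t}|\widetilde{r}_t - r_t|$.

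Next I would bound $|\widetilde{r}_t(s_t,a_t) - r_t(s_t,a_t)|$ uniformly over the $HSA$ triples. Here the construction of the fictitious estimator does the heavy lifting: on $E_t^c$ we have $\widetilde{r}_t = r_t$ exactly so the difference vanishes, and on $E_t$ we have $\widetilde{r}_t=\widehat{r}_t$ together with the guarantee $n_{s_t,a_t}\ge n d^\mu_t(s_t,a_t)/2\ge nd_m/2$. Conditioning on the set of episodes that visit $(s_t,a_t)$ at step $t$, the observed rewards are i.i.d., lie in $[0,1]$, and have mean $r_t(s_t,a_t)$, so Hoeffding's inequality (Lemma~\ref{lem:hoeffding_ineq}) applied conditionally on the count gives, for each fixed triple, $\P(|\widehat{r}_t - r_t|\ge\epsilon,\,E_t)\le 2e^{-nd_m\epsilon^2}$. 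A union bound over the $HSA$ triples with $\epsilon=\sqrt{\log(2HSA/\delta)/(nd_m)}$ then yields, with probability $1-\delta$, that $\max_{s_t,a_t}|\widetilde{r}_t-r_t|\le\sqrt{\log(2HSA/\delta)/(nd_m)}$ for all $t$ simultaneously. Combined with the first paragraph this produces $\sum_{t=1}^H\sqrt{\log(2HSA/\delta)/(nd_m)}=\sqrt{H^2\log(2HSA/\delta)/(nd_m)}$, which is the claimed bound.

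The step I expect to be the main obstacle is the concentration in the presence of the \emph{random} visitation count $n_{s_t,a_t}$: Hoeffding cannot be applied naively because both the number of summands and the selection of which rewards enter the average are data-dependent. The resolution is precisely the indicator construction of $\widetilde{r}_t$ — restricting to $E_t$ installs a deterministic lower bound $nd_m/2$ on the effective sample size, so the conditional tail $2e^{-2n_{s_t,a_t}\epsilon^2}$ can be dominated by $2e^{-nd_m\epsilon^2}$ on $E_t$, while $E_t^c$ contributes nothing. A secondary point to verify carefully is the conditional-independence claim licensing the conditional Hoeffding bound, namely that given the visiting episodes the rewards $r_t^{(i)}$ stay i.i.d.\ with the correct mean; this follows from the conditional independence of $r_t$ from the trajectory given $(s_t,a_t)$ under the MDP's reward model together with Assumption~\ref{assume1}.
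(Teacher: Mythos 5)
Your proposal is correct and follows essentially the same route as the paper's proof: the Hölder step $|\langle \widetilde{d}^\pi_t,\widetilde{r}_t-r_t\rangle|\le\norm{\widetilde{d}^\pi_t}_1\norm{\widetilde{r}_t-r_t}_\infty$ with $\norm{\widetilde{d}^\pi_t}_1=1$ to eliminate the supremum over $\pi$, followed by a conditional Hoeffding bound on $|\widehat{r}_t-r_t|\mathbf{1}(E_t)$ using the deterministic lower bound $n_{s_t,a_t}\ge nd_m/2$ on the event $E_t$ and a union bound over the $HSA$ triples. Your discussion of why the indicator construction resolves the random-count issue matches the paper's use of the tower property $\P(A)=\E[\E[\mathds{1}_A|X]]$ exactly.
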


\begin{proof}[Proof of Lemma~\ref{lem:app_1}] 

	Since $|\langle \widetilde{d}^\pi_t, \widetilde{r}_t-r_t\rangle|\leq ||\widetilde{d}^\pi_t||_1\cdot||\widetilde{r}_t-r_t||_\infty $, we obtain 
	\[
	|\sum_{t=1}^H\langle \widetilde{d}^\pi_t, \widetilde{r}_t-r_t\rangle|\leq \sum_{t=1}^H ||\widetilde{d}^\pi_t||_1\cdot||\widetilde{r}_t-r_t||_\infty=\sum_{t=1}^H ||\widetilde{r}_t-r_t||_\infty,
	\]
	where we used $\widetilde{d}^\pi_t(\cdot)$ is a probability distribution. Therefore above expression further indicates $\sup_{\pi\in\Pi}|\sum_{t=1}^H\langle \widetilde{d}^\pi_t, \widetilde{r}_t-r_t\rangle|\leq \sum_{t=1}^H ||\widetilde{r}_t-r_t||_\infty$. Now by a union bound and Hoeffding inequality (Lemma~\ref{lem:hoeffding_ineq}),  
	\begin{align*}
	\P(\sup_t||\widetilde{r}_t-r_t||_\infty>\epsilon)&=\P(\sup_{t,s_t,a_t}|\widetilde{r}_t(s_t,a_t)-r_t(s_t,a_t)|>\epsilon)\\
	&\leq HSA\cdot\P(|\widetilde{r}_t(s_t,a_t)-r_t(s_t,a_t)|>\epsilon)\\
	&= HSA\cdot\P(|\widehat{r}_t(s_t,a_t)-r_t(s_t,a_t)|\mathds{1}(E_t)>\epsilon)\\
	&\leq 2HSA \cdot \E[\E[e^{-{2n_{s_t,a_t}\epsilon^2}}|E_t]]\\
	&\leq 2HSA\cdot \E[\E[e^{-{nd_m\epsilon^2}}|E_t]]= 2HSA\cdot e^{-{nd_m\epsilon^2}}:=\frac{\delta}{2}.
	\end{align*}
	where we use $\P(A)=\E[\mathds{1}_A]=\E[\E[\mathds{1}_A|X]]$. Solving for $\epsilon$, then it follows:
	
	\[
	\sup_{\pi\in\Pi}|\sum_{t=1}^H\langle \widetilde{d}^\pi_t, \widetilde{r}_t-r_t\rangle|\leq \sum_{t=1}^H ||\widetilde{r}_t-r_t||_\infty\leq O(\sqrt{\frac{H^2\log(HSA/\delta)}{n\cdot d_m}})
	\]
	with probability $1-\delta$. The case for $E=\{N\geq nd_m/2\}$ can be proved easily in a similar way.
	
\end{proof}
Note that in order to measure the randomness in reward, sample complexity $n$ only has dependence of order $H^2$, this result implies random reward will only cause error of lower order dependence in $H$. Therefore, in many RL literature deterministic reward is directly assumed. Next we consider $(*)$ in \eqref{eqn:naive_decompose} by decomposing $\sum_{t=1}^H\langle \widetilde{d}^\pi_t-d^\pi_t,{r}_t\rangle$ into a martingale type representation. This is the key for our proof since with it we can use either uniform concentration inequalities or martingale concentration inequalities to prove efficiency.

\subsection{Decomposition of $\sum_{t=1}^H\langle \widetilde{d}^\pi_t-d^\pi_t,{r}_t\rangle$}

Let $\widetilde{d}^\pi_t\in\R^{S\cdot A}$ denote the marginal state-action probability vector, $\pi_t\in\R^{(S\cdot A)\times S}$ is the policy matrix with $(\pi_t)_{(s_t,a_t),s_t}=\pi_t(a_t|s_t)$ and $(\pi_t)_{(s_t,a_t),s}=0$ for $s\neq s_t$. Moreover, let state-action transition matrix $T_t\in\R^{S\times(S\cdot A)}$ to be $(T_t)_{s_{t},(s_{t-1},a_{t-1})}=P_t(s_{t}|s_{t-1},a_{t-1})$, then we have 
\begin{align}
\widetilde{d}^\pi_t&=\pi_t\widetilde{T}_t\widetilde{d}^\pi_{t-1}\\
{d}^\pi_t&=\pi_t{T}_t{d}^\pi_{t-1}.
\end{align}
Therefore we have 
\begin{align}\label{eq:decomp}
\widetilde{d}^\pi_t-{d}^\pi_t&=\pi_t(\widetilde{T}_t-T_t)\widetilde{d}^\pi_{t-1}+\pi_t T_t(\widetilde{d}^\pi_{t-1}-d^\pi_{t-1})
\end{align}
recursively apply this formula, we have 
\begin{equation}\label{difference_density}
\widetilde{d}^\pi_t-{d}^\pi_t=\sum_{h=2}^{t}\Gamma_{h+1:t}\pi_h(\widetilde{T}_h-T_h)\widetilde{d}^\pi_{h-1}+\Gamma_{1:t}(\widetilde{d}^\pi_1-d^\pi_1)
\end{equation}
where $\Gamma_{h:t}=\prod_{v=h}^t\pi_vT_v$ and $\Gamma_{t+1:t}:=1$. Now let $X=\sum_{t=1}^H\langle r_t,\widetilde{d}^\pi_t-{d}^\pi_t\rangle$, then we have the following:

\begin{theorem}[martingale decomposition of $X$: Restate of the fictitious version of Lemma~\ref{lem:martingale_decompose}]\label{thm:martingale_decompose} We have:
	\[
	X=\sum_{h=2}^H\langle{V^\pi_h(s),((\widetilde{T}_h-T_h)\widetilde{d}^\pi_{h-1}})(s)\rangle+\langle{V^\pi_1(s),(\widetilde{d}^\pi_1-d^\pi_1)(s)}\rangle,
	\]
	where the inner product is taken w.r.t states.
	
\end{theorem}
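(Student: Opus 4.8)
The plan is to substitute the telescoping identity \eqref{difference_density} into $X=\sum_{t=1}^H\langle r_t,\widetilde d^\pi_t-d^\pi_t\rangle$ and then interchange the order of the two summations over $t$ and $h$. Writing $y_h:=(\widetilde T_h-T_h)\widetilde d^\pi_{h-1}\in\R^S$ for $h\ge 2$ and collecting terms, \eqref{difference_density} expresses $\widetilde d^\pi_t-d^\pi_t$ as $\sum_{h=2}^t\Gamma_{h+1:t}\pi_h y_h$ plus the initial-distribution term $\Gamma_{2:t}\pi_1 y_1$, where $y_1:=\widetilde d_1-d_1$ (using $\widetilde d^\pi_1-d^\pi_1=\pi_1(\widetilde d_1-d_1)$). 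Substituting and swapping $\sum_{t=1}^H\sum_{h=2}^t=\sum_{h=2}^H\sum_{t=h}^H$ gives
\[
X=\sum_{h=2}^H\Big\langle \sum_{t=h}^H(\Gamma_{h+1:t}\pi_h)\T r_t,\; y_h\Big\rangle+\Big\langle \sum_{t=1}^H(\Gamma_{2:t}\pi_1)\T r_t,\; y_1\Big\rangle,
\]
so the theorem reduces to the single claim that for every $1\le h\le H$, under the convention $\Gamma_{h+1:h}=I$, one has $\sum_{t=h}^H(\Gamma_{h+1:t}\pi_h)\T r_t=V^\pi_h$.

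Next I would prove this adjoint identity by a downward induction on $h$ using the Bellman recursion. The two matrix facts I will use are $V^\pi_h=\pi_h\T Q^\pi_h$ and $Q^\pi_h=r_h+T_{h+1}\T V^\pi_{h+1}$, which follow directly from the definitions of $\pi_h$ and $T_{h+1}$ (with $Q^\pi_H=r_H$ and $V^\pi_{H+1}=0$). Setting $W_h:=\sum_{t=h}^H(\Gamma_{h+1:t}\pi_h)\T r_t$, the base case $W_H=\pi_H\T r_H=V^\pi_H$ is immediate. For the inductive step I will use the factorization $\Gamma_{h+1:t}=\Gamma_{h+2:t}\,\pi_{h+1}T_{h+1}$ valid for $t\ge h+1$, which upon transposing peels off a leading $\pi_h\T T_{h+1}\T$ from every $t\ge h+1$ term and yields $W_h=\pi_h\T r_h+\pi_h\T T_{h+1}\T W_{h+1}$. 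Plugging in the hypothesis $W_{h+1}=V^\pi_{h+1}$ gives $W_h=\pi_h\T(r_h+T_{h+1}\T V^\pi_{h+1})=\pi_h\T Q^\pi_h=V^\pi_h$, closing the induction. Applying this with the vectors $y_h$ and $y_1$ then produces exactly the stated decomposition, where the final state-only inner product $\langle V^\pi_1,\widetilde d^\pi_1-d^\pi_1\rangle$ is read through the marginal identity $\sum_a(\widetilde d^\pi_1-d^\pi_1)(s,a)=(\widetilde d_1-d_1)(s)$.

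I expect the main obstacle to be purely the bookkeeping of transpose conventions and matrix dimensions --- keeping straight that $T_t\in\R^{S\times(SA)}$ maps state-action distributions to state distributions while $\pi_t\in\R^{(SA)\times S}$ does the reverse, so that each $y_h$ genuinely lives in state space and the emerging operator $\sum_t(\Gamma_{h+1:t}\pi_h)\T$ lands the reward-to-go back in state space as a value function. Conceptually the only substantive point is recognizing that the cumulative (true-transition) operator $\Gamma_{h+1:t}\pi_h$ transported from time $h$ to $t$ is, after transposition, exactly the expectation map that turns the reward stream $\{r_t\}_{t\ge h}$ into $V^\pi_h$; once the factorization $\Gamma_{h+1:t}=\Gamma_{h+2:t}\pi_{h+1}T_{h+1}$ is in hand the induction is routine. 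A secondary point worth stating carefully is the off-by-one in the initial term --- that the base of \eqref{difference_density} propagates with $\Gamma_{2:t}$ and that $\widetilde d^\pi_1-d^\pi_1$ factors through the state-space vector $\widetilde d_1-d_1$ --- which is precisely what lets the $h=1$ instance of the adjoint identity yield the clean boundary term $\langle V^\pi_1,\widetilde d^\pi_1-d^\pi_1\rangle$.
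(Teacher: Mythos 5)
Your proposal is correct and follows essentially the same route as the paper's proof: substitute the telescoping identity \eqref{difference_density}, interchange the sums over $t$ and $h$, and recognize the transposed cumulative operator applied to the reward stream, $\sum_{t=h}^H(\Gamma_{h+1:t}\pi_h)\T r_t$, as $V^\pi_h$. The only difference is that you supply an explicit backward induction via the Bellman recursion for this last identification, which the paper simply asserts with an underbrace.
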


\begin{proof}[Proof of Theorem~\ref{thm:martingale_decompose}]
	By applying \eqref{difference_density} and the change of summation, we have 
	\[
	\begin{aligned}
	X&=\sum_{t=1}^H\left(\sum_{h=2}^t \langle{r_t,\Gamma_{h+1:t}\pi_h(\widetilde{T}_h-T_h)\widetilde{d}^\pi_{h-1}}\rangle+\langle{r_t, \Gamma_{1:t}(\widetilde{d}^\pi_1-d^\pi_1)}\rangle \right)\\
	&=\sum_{t=1}^H\left(\sum_{h=2}^t \langle{r_t,\Gamma_{h+1:t}\pi_h(\widetilde{T}_h-T_h)\widetilde{d}^\pi_{h-1}}\rangle \right)+\sum_{h=1}^H \langle{r_t, \Gamma_{1:t}(\widetilde{d}^\pi_1-d^\pi_1)}\rangle\\
	&=\sum_{t=2}^H\left(\sum_{h=2}^t \langle{r_t,\Gamma_{h+1:t}\pi_h(\widetilde{T}_h-T_h)\widetilde{d}^\pi_{h-1}}\rangle \right)+\sum_{h=1}^H \langle{r_t, \Gamma_{1:t}(\widetilde{d}^\pi_1-d^\pi_1)}\rangle\\
	&=\sum_{h=2}^H\left(\sum_{t=h}^H \langle{r_t,\Gamma_{h+1:t}\pi_h(\widetilde{T}_h-T_h)\widetilde{d}^\pi_{h-1}}\rangle \right)+\sum_{h=1}^H \langle{(\pi_1^T\Gamma_{1:t}^Tr_t)(s), (\widetilde{d}^\pi_1-d^\pi_1)(s)}\rangle\\
	&=\sum_{h=2}^H\left( \langle{\underbrace{\sum_{t=h}^H \pi_h^T\Gamma_{h+1:t}^Tr_t}_{V^\pi_h(s)},(\widetilde{T}_h-T_h)\widetilde{d}^\pi_{h-1}}\rangle \right)+ \langle{\underbrace{(\sum_{h=1}^H \pi_1^T\Gamma_{1:t}^Tr_t)(s)}_{V^\pi_1(s)}, (\widetilde{d}^\pi_1-d^\pi_1)(s)}\rangle\\
	\end{aligned}
	\]
\end{proof}

\section{Proof of uniform convergence in OPE with full policies using standard uniform concentration tools: Theorem~\ref{thm:rad}}
\label{sec:the_uni_crude}
As a reminder for the reader, the OPEMA estimator used in this section is with data subset $\mathcal{D}'$. Also, by Lemma~\ref{lem:app_1} we only need to consider $\sup_{\pi\in\Pi}|\sum_{t=1}^H\langle \widetilde{d}^\pi_t-d^\pi_t,{r}_t\rangle|$.

\begin{theorem}\label{thm:mcdiarmid} 
	
	There exists an absolute constant $c$ such that if $n>c\cdot \frac{1}{d_m}\cdot \log(HSA/\delta)$, then with probability $1-\delta$, we have:
	\[
	\sup_{\pi\in\Pi}\left|\sum_{t=1}^H\langle \widetilde{d}^\pi_t-d^\pi_t,{r}_t\rangle\right|\leq O(\sqrt{\frac{H^4\log(HSA/\delta)}{nd_m}})+\E\left[\sup_{\pi\in\Pi}\left|\sum_{t=1}^H\langle \widetilde{d}^\pi_t-d^\pi_t,{r}_t\rangle\right|\right]
	\]
\end{theorem}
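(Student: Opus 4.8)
The plan is to apply McDiarmid's inequality (Lemma~\ref{lem:mcdiarmid_ineq}) to the random variable $Z := \sup_{\pi\in\Pi}\big|\sum_{t=1}^H\langle\widetilde{d}^\pi_t-d^\pi_t,r_t\rangle\big|$, viewed as a function of the $n$ i.i.d.\ logging episodes. Throughout I would condition on the high-probability event of Lemma~\ref{lem:sufficient_sample}, so that $N\ge nd_m/2$ and every pool in $\mathcal{D}'$ contains exactly $N$ samples; the $\log(HSA/\delta)$ in the bound enters precisely through this conditioning. A convenient feature here is that the fictitious switching defining $\widetilde{P}$ uses the data-independent true kernel $P$ on pairs with small counts, so replacing a single episode can only perturb $\widetilde{P}$ at \emph{well-visited} pairs, where the $\ell_1$ perturbation is controlled; this is exactly what makes the bounded-difference constants legitimate everywhere. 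The reward fluctuation $(**)$ in \eqref{eqn:naive_decompose} is already handled by Lemma~\ref{lem:app_1}, so only the transition term is at stake.

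First I would compute the per-episode bounded difference for a \emph{fixed} $\pi$. Both $\sum_t\langle\widetilde{d}^\pi_t,r_t\rangle$ evaluated on $\mathcal{D}'$ and on the once-swapped dataset are values of $\pi$ in two MDPs whose kernels differ only at the at-most-$H$ state--action pairs visited by the swapped episode. A one-step telescoping — which is exactly the content of the martingale decomposition in Theorem~\ref{thm:martingale_decompose} — bounds the difference by $\sum_{t=1}^H \widetilde{d}^\pi_t(s_t,a_t)\,\|V^\pi_{t+1}\|_\infty\,\|\Delta\widehat{P}_{t+1}(\cdot|s_t,a_t)\|_1$. Since swapping one sample out of a pool of size $N$ moves at most $2/N$ of $\ell_1$ mass and $\|V^\pi_{t+1}\|_\infty\le H$, the per-episode difference for policy $\pi$ is at most $\xi_i(\pi)=\sum_{t=1}^H \widetilde{d}^\pi_t(s_t,a_t)\cdot 2H/N$ along the visited trajectory. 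The crucial point is to \emph{keep} the occupancy weights $\widetilde{d}^\pi_t(s_t,a_t)$ rather than bounding them by one.

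The heart of the argument is the evaluation of $\sum_{i=1}^n \xi_i(\pi)^2$. Applying Cauchy--Schwarz across the $H$ time indices gives $\sum_i\xi_i(\pi)^2\le (4H^3/N^2)\sum_{t=1}^H\sum_i \widetilde{d}^\pi_t(s_t,a_t)^2$, and regrouping the per-time-step terms by the visited state--action pair — using that each pool of $\mathcal{D}'$ contains exactly $N$ episodes — cancels one factor of $N$ and yields $\sum_i\xi_i(\pi)^2\le (4H^3/N)\sum_{t=1}^H\sum_{s_t,a_t}\widetilde{d}^\pi_t(s_t,a_t)^2$. Now $\widetilde{d}^\pi_t(s_t,a_t)^2\le \widetilde{d}^\pi_t(s_t,a_t)$ together with the normalization $\sum_{s_t,a_t}\widetilde{d}^\pi_t(s_t,a_t)=1$ collapses the double sum to $H$, giving the target $\sum_i\xi_i(\pi)^2\le 4H^4/N\le 8H^4/(nd_m)$; McDiarmid then produces the deviation $O(\sqrt{H^4\log(HSA/\delta)/(nd_m)})$. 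I would present the reward term $(**)$ as a lower-order contribution via Lemma~\ref{lem:app_1}, and use Lemma~\ref{lem:zero_diff} only to transfer the statement back from $\widetilde{v}$ to $\widehat{v}$ in the parent theorem.

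The main obstacle is that $Z$ carries the supremum $\sup_{\pi\in\Pi}$, whereas the clean bound $\sum_i\xi_i^2\lesssim H^4/(nd_m)$ crucially exploits the \emph{per-policy} normalization $\sum_{s,a}\widetilde{d}^\pi_t(s,a)=1$. Vanilla McDiarmid demands a single difference constant $\xi_i$ valid for \emph{all} data realizations, which forces the worst case $\sup_\pi\widetilde{d}^\pi_t\equiv 1$ per coordinate; this destroys the normalization and reintroduces a spurious factor of $S$ (indeed $SA$), since for each pair some tailored policy can drive near-full occupancy there. I would resolve this by passing to a variance-sensitive concentration for suprema of empirical processes (a functional Bernstein / Talagrand--Bousquet inequality, provable by the entropy method of which McDiarmid is the crude special case), for which the controlling quantity is the wimpy variance $\sup_\pi \sum_i \Var(\cdot)$ and is exactly the $H^4/(nd_m)$ computed above with the $\sum_{s,a}\widetilde{d}^\pi_t=1$ normalization intact. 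Getting this supremum-level variance control correct, rather than the routine perturbation and summation calculus, is where the real work lies.
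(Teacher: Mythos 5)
Your setup — conditioning on the event $N\ge nd_m/2$ from Lemma~\ref{lem:sufficient_sample}, working with $\mathcal{D}'$, isolating the reward term via Lemma~\ref{lem:app_1}, and using the martingale decomposition to express the one-swap perturbation — matches the paper, and your per-policy computation $\sum_i\xi_i(\pi)^2\le 4H^4/N$ is correct. The gap is in the final stage. Because you perturb one \emph{original episode}, the perturbation is localized to the $H$ pools along that episode's trajectory, which makes the bounded difference genuinely policy-dependent; you correctly observe that taking $\sup_\pi\xi_i(\pi)$ ruins the occupancy normalization, and you then appeal to a Talagrand--Bousquet-type inequality. That appeal does not go through as stated: Bousquet's inequality and its relatives concern suprema of empirical processes $\sup_f\sum_i f(X_i)$, whereas $Z=\sup_{\pi}|\sum_t\langle\widetilde{d}^\pi_t-d^\pi_t,r_t\rangle|$ is a supremum of highly nonlinear functionals of the data (products of empirical transition matrices), so the ``wimpy variance'' you compute is not the variance proxy of any off-the-shelf theorem for this $Z$. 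Running the entropy method by hand would require pointwise control of $\sum_i\bigl(Z-Z^{(i)}\bigr)_+^2$ uniformly over data realizations, which is exactly the quantity you concede you cannot bound without the spurious $SA$ factor. As written, the last step is a research plan, not a proof.

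The paper avoids the obstacle entirely by choosing a different product structure for McDiarmid. Conditional on $N$, the retained transitions are regrouped into $N$ i.i.d.\ ``super-samples'' $X^{(i)}=\{(s_t,a_t,s_{t+1}^{(i)}):t\in[H],\,(s_t,a_t)\in\mathcal{S}\times\mathcal{A}\}$, each containing one draw from \emph{every} pool. Swapping a single $X^{(i_0)}$ perturbs every row of $\widehat{T}_h$ by at most $2/N$ in $\ell_1$, so for every policy the martingale decomposition bounds the perturbation by $\sum_{h}\|(\widehat{T}_h-\widehat{T}_h')^{T}\widehat{V}^{\pi}_h\|_\infty\,\|\widehat{d}^\pi_{h-1}\|_1\le\sum_h (2H/N)\cdot 1= 2H^2/N$; here the normalization $\|\widehat{d}^\pi_{h-1}\|_1=1$ is policy-independent, so this single constant is a legitimate bounded difference for the supremum $Z$. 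Vanilla McDiarmid over $N$ coordinates with $\xi_i=2H^2/N$ then gives $\sum_i\xi_i^2=4H^4/N\le 8H^4/(nd_m)$ and the claimed deviation bound. In short, the obstacle you flag is an artifact of perturbing episodes rather than super-samples; once you regroup, no variance-sensitive concentration inequality is needed.
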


\begin{proof}[Proof of Theorem~\ref{thm:mcdiarmid}]
	Note in data $\mathcal{D}^\prime=\lbrace(s_t,a_t,s_{t+1}^{(i)}):i=1,...,N;t=1,...,H;s_t\in\mathcal{S},a_t\in\mathcal{A}\rbrace$\footnote{Here we do not include $r^{(i)}_t$ since the quantity $\sup_{\pi\in\Pi}|\sum_{t=1}^H\langle \widetilde{d}^\pi_t-d^\pi_t,{r}_t\rangle|$ only contains the mean reward function $r_t$.}, not only $s_{t+1}^{(i)}$ but also $N$ are random variables.
	
	We first conditional on $N$, then $(s_t,a_t,s_{t+1}^{(i)})$'s are independent samples for all $i,s_t,a_t$ since any sample will not contain information about other samples. Therefore we can regroup $\mathcal{D}'$ into $N$ independent samples with $\mathcal{D}'=\lbrace X^{(i)}:i=1,...,N\rbrace$ where $X^{(i)}=\{(s_t,a_t,s^{(i)}_{t+1}),t=1,...,H;s_t\in\mathcal{S},a_t\in\mathcal{A}\}$. Now for any $i_0$, change $X^{(i_0)}$ to $X'^{(i_0)}=\{(s_t,a_t,s'^{(i_0)}_{t+1}),t=1,...,H;s_t\in\mathcal{S},a_t\in\mathcal{A}\}$ and keep the rest $N-1$ data the same, use this data to create new estimator with state-action transition $\widetilde{d}'^\pi$, then we have 
	\begin{align*}
	&\left|\sup_{\pi\in\Pi}|\sum_{t=1}^H\langle \widetilde{d}^\pi_t-d^\pi_t,{r}_t\rangle|-\sup_{\pi\in\Pi}|\sum_{t=1}^H\langle \widetilde{d}'^\pi_t-d^\pi_t,{r}_t\rangle|\right|\\
	\leq &\sup_{\pi\in\Pi}\left||\sum_{t=1}^H\langle \widetilde{d}^\pi_t-d^\pi_t,{r}_t\rangle|-|\sum_{t=1}^H\langle \widetilde{d}'^\pi_t-d^\pi_t,{r}_t\rangle|\right|\\
	\leq&\sup_{\pi\in\Pi}\left|\sum_{t=1}^H\langle \widetilde{d}^\pi_t-d^\pi_t,{r}_t\rangle-\sum_{t=1}^H\langle \widetilde{d}'^\pi_t-d^\pi_t,{r}_t\rangle\right|\\
	=&\sup_{\pi\in\Pi}\left|\sum_{t=1}^H\langle \widetilde{d}^\pi_t-\widetilde{d}'^\pi_t,{r}_t\rangle\right|\\
	=&\sup_{\pi\in\Pi}\left|\sum_{h=2}^H\langle{\widetilde{V}'^\pi_h,(\widetilde{T}_h-\widetilde{T}'_h)\widetilde{d}^\pi_{h-1}}\rangle+\langle{\widetilde{V}'^\pi_1,\widetilde{d}^\pi_1-\widetilde{d}'^\pi_1}\rangle\right|,
	\end{align*}
	where the last equation comes from the trick that substitutes $d^\pi_t$ by $\widetilde{d}'^\pi_t$ in Theorem~\ref{thm:martingale_decompose}. By definition, the above equals to
	\begin{align*}
	=&\sup_{\pi\in\Pi}\left|\sum_{h=2}^H\langle{\widehat{V}'^\pi_h,(\widehat{T}_h-\widehat{T}'_h)\widehat{d}^\pi_{h-1}}\rangle+\langle{\widehat{V}'^\pi_1,\widehat{d}^\pi_1-\widehat{d}'^\pi_1}\rangle\right|\cdot \mathds{1}(E)\\
	\leq & \sup_{\pi\in\Pi}\left(\sum_{h=2}^H||(\widehat{T}_h-\widehat{T}'_h)^T\widehat{V}'^\pi_h||_\infty||\widehat{d}^\pi_{h-1}||_1+|\langle{\widehat{V}'^\pi_1,\widehat{d}^\pi_1-\widehat{d}'^\pi_1}\rangle|\right)\cdot \mathds{1}(E)\\
	\leq & \sup_{\pi\in\Pi}\left(\sum_{h=2}^H||(\widehat{T}_h-\widehat{T}'_h)^T\widehat{V}'^\pi_h||_\infty+|\langle{\widehat{V}'^\pi_1,\widehat{d}^\pi_1-\widehat{d}'^\pi_1}\rangle|\right)\cdot \mathds{1}(E)
	\end{align*}
	Note the change of a single $X^{(i_0)}$ will only change two entries of each row of $(\widehat{T}_h-\widehat{T}'_h)^T$ by $1/N$ since with data $\mathcal{D}'$, $n_{s_t,a_t}=N$ for all $s_t,a_t$. Or in other words, given $E$,
	\[
	\widehat{T}_h^T-\widehat{T}'^T_h=\begin{bmatrix}
	0&...&0& \frac{1}{N}&0&...&-\frac{1}{N}&...&0\\
	0&\frac{1}{N}&0&...&-\frac{1}{N}&...&...&...&0\\
	........\\
	-\frac{1}{N}&0&...&0&...&...&0&...&\frac{1}{N}
	\end{bmatrix},
	\]
	where the locations of $1/N,-1/N$ in each row are random as it depends on how different is $X'^{(i_0)}$ from $X^{(i_0)}$. However, based on this fact, it is enough for us to guarantee
	\[
	||(\widehat{T}_h-\widehat{T}'_h)^T\widehat{V}'^\pi_h||_\infty\leq \frac{2}{N}||\widehat{V}'^{\pi}_h||_\infty\leq \frac{2}{N}(H-h+1)\leq \frac{2}{N}H
	\]
	and same result holds for $|\langle{\widehat{V}'^\pi_1,\widehat{d}^\pi_1-\widehat{d}'^\pi_1}\rangle|\leq 2H/N$ given $N$.
	
	Combine all the results above, for a single change of $X^{(i_0)}$ we have
	\[
	\left|\sup_{\pi\in\Pi}|\sum_{t=1}^H\langle \widetilde{d}^\pi_t-d^\pi_t,{r}_t\rangle|-\sup_{\pi\in\Pi}|\sum_{t=1}^H\langle \widetilde{d}'^\pi_t-d^\pi_t,{r}_t\rangle|\right|\leq 2\frac{H^2}{N}\mathds{1}(E)\leq 2\frac{H^2}{N}
	\]
	for any fixed $N$. If we let $Z=S(X^{(1)},...,X^{(N)})=\sup_{\pi\in\Pi}|\sum_{t=1}^H\langle \widetilde{d}^\pi_t-d^\pi_t,{r}_t\rangle|$, then for a given $N$ by independence and above bounded difference condition we can apply Mcdiarmid inequality Lemma~\ref{lem:mcdiarmid_ineq} (where $\xi_i=2H^2/N$) to obtain 
	
	\begin{equation}\label{eqn:mcdiarmid_final}
	\P(|Z-\E[Z]|\geq \epsilon|N)\leq 2e^{-\frac{N\epsilon^2}{2H^4}}:=\frac{\delta}{2}
	\end{equation}
	Now note when $n>O(\frac{1}{d_m}\cdot \log(HSA/\delta))$, by Lemma~\ref{lem:sufficient_sample} we can obtain $N>nd_m/2$ with probability $1-\delta/2$, combining this result and solving $\epsilon$ in \eqref{eqn:mcdiarmid_final}, we have 
	\[
	\sup_{\pi\in\Pi}\left|\sum_{t=1}^H\langle \widetilde{d}^\pi_t-d^\pi_t,{r}_t\rangle\right|\leq O(\sqrt{\frac{H^4\log(HSA/\delta)}{n\cdot d_m}})+\E\left[\sup_{\pi\in\Pi}\left|\sum_{t=1}^H\langle \widetilde{d}^\pi_t-d^\pi_t,{r}_t\rangle\right|\right]
	\] with probability $1-\delta$.

\end{proof}

Next before bounding $\E\left[\sup_{\pi\in\Pi}\left|\sum_{t=1}^H\langle \widetilde{d}^\pi_t-d^\pi_t,{r}_t\rangle\right|\right]$, we first give a useful lemma.

Let $\gamma\in(0,1)$ to be any threshold parameter. Then we first have the following lemma:
\begin{lemma}\label{lem:thre}
	Recall  by definition $P_h(s_h,|s_{h-1},a_{h-1})=T_h(s_h,|s_{h-1},a_{h-1})$. It holds that with probability $1-\delta$, for all $t,s_t,a_t\in[H],\mathcal{S},\mathcal{A}$: if $P_h(s_h|s_{h-1},a_{h-1})\leq\gamma$, then 
	\[
	\left|\widetilde{T}_h(s_h|s_{h-1},a_{h-1})-{T}_h(s_h|s_{h-1},a_{h-1})\right|\leq \sqrt{\frac{\gamma \log(HSA/\delta)}{2nd_m}}+\frac{2\log(HSA/\delta)}{3nd_m}; 
	\]
	if $P_h(s_h,|s_{h-1},a_{h-1})>\gamma$, then 
	\[
	\left|\frac{\widetilde{T}_h(s_h|s_{h-1},a_{h-1})-{T}_h(s_h|s_{h-1},a_{h-1})}{{T}_h(s_h|s_{h-1},a_{h-1})}\right|\leq \sqrt{\frac{ \log(HSA/\delta)}{2nd_m\gamma}}+\frac{2\log(HSA/\delta)}{3nd_m\gamma}; 
	\]
\end{lemma}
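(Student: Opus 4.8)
The plan is to reduce the statement to a single per-entry application of Bernstein's inequality (Lemma~\ref{lem:bernstein_ineq}) combined with the sample-count guarantee of Lemma~\ref{lem:sufficient_sample}, and then to read off the two displayed bounds from one common inequality by splitting on whether $p := T_h(s_h|s_{h-1},a_{h-1})$ is below or above the threshold $\gamma$. First I would dispose of the trivial case: by the definition of the fictitious estimator built from $\mathcal{D}'$, on the event $E^c=\{N<nd_m/2\}$ we have $\widetilde{T}_h(\cdot|s_{h-1},a_{h-1})={T}_h(\cdot|s_{h-1},a_{h-1})$ exactly, so both displayed inequalities hold with zero left-hand side. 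Hence it suffices to argue on $E$, where $\widetilde{T}_h=\widehat{T}_h$ and, by Lemma~\ref{lem:sufficient_sample}, $N\geq nd_m/2$ holds with probability at least $1-\delta$.

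Next I would condition on $N$ and invoke the independence fact set up in the preparation section: conditional on $N$, the retained transition samples from a fixed cell $(s_{h-1},a_{h-1})$ are i.i.d.\ draws from the true kernel, so $\widehat{T}_h(s_h|s_{h-1},a_{h-1})=\frac{1}{N}\sum_{i=1}^N\mathbf{1}[s^{(i)}_t=s_h]$ is an average of $N$ i.i.d.\ $\mathrm{Bernoulli}(p)$ indicators. The centered summands have mean zero, are bounded by $1$, and have variance $p(1-p)\leq p$. Applying Bernstein in both directions (costing a factor of $2$ in the failure probability) and then union bounding over all $h,s_h,s_{h-1},a_{h-1}$ — at most $HS^2A$ entries, whose logarithm is absorbed into $\log(HSA/\delta)$ up to a constant — gives, for every fixed value of $N$ and simultaneously over all entries,
\[
\left|\widehat{T}_h(s_h|s_{h-1},a_{h-1})-p\right|\leq \sqrt{\frac{2p\log(HSA/\delta)}{N}}+\frac{2\log(HSA/\delta)}{3N}.
\]

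The final step is the threshold split, which is the reason for stating the lemma in this two-case form. When $p\leq\gamma$ I bound the variance proxy directly by $p\leq\gamma$, turning the first term into $\sqrt{\gamma\log(HSA/\delta)/N}$, and substituting $N\geq nd_m/2$ yields the claimed additive bound. When $p>\gamma$ I instead divide the whole inequality by $p$; using the variance bound $\sigma^2\le p$ together with $1/p<1/\gamma$ converts the first term into the relative error $\sqrt{\log(HSA/\delta)/(\gamma N)}$ and the second into $\tfrac{2}{3Np}\log\le\tfrac{2}{3N\gamma}\log$, which after substituting $N\geq nd_m/2$ is the claimed multiplicative bound. Combining the failure probability of $E$ with those of the two Bernstein bounds and rescaling $\delta$ closes the argument.

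The algebra here is routine; the one place that needs care is the conditioning. Because $N=\min_{t,s_t,a_t}n_{s_t,a_t}$ is itself random and defined through a minimum over cells, I must condition on $N$ before asserting that the retained samples are i.i.d.\ draws from the true kernel, derive a bound valid for each realization of $N$, and only then insert $N\geq nd_m/2$ on $E$. I expect this bookkeeping — keeping the two-sided Bernstein estimate, the union bound over all transition entries, and the random $N$ mutually consistent while matching the stated form of the constants — to be the main (though fairly mechanical) obstacle. The one genuinely substantive idea is that a single variance-sensitive Bernstein bound simultaneously produces both the additive guarantee for small-probability transitions and the multiplicative guarantee for large-probability ones, which is exactly what the downstream bound on $\E[\sup_{\pi\in\Pi}|\sum_t\langle\widetilde d^\pi_t-d^\pi_t,r_t\rangle|]$ will need.
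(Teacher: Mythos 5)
Your proposal is correct and follows essentially the same route as the paper: a per-entry Bernstein bound with variance proxy $p(1-p)\le p$, a union bound over all transition entries, the threshold split on $\gamma$ (bounding the variance by $\gamma$ in the small-$p$ case and dividing through by $p>\gamma$ in the large-$p$ case), and finally substituting the high-probability lower bound on the visitation count from Lemma~\ref{lem:sufficient_sample}. The only cosmetic difference is that the paper applies Bernstein directly to the rescaled indicators $\mathds{1}[s_h^{(i)}]/T_h$ in the second case rather than dividing the absolute-deviation bound by $p$ afterwards, which is equivalent; your explicit treatment of the event $E^c$ and of conditioning on the random count is, if anything, slightly more careful than the paper's.
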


\begin{proof}
	
	First consider the case where $P_h(s_h|s_{h-1},a_{h-1})\leq\gamma$.
	\[
	\widetilde{T}_h(s_h|s_{h-1},a_{h-1})-{T}_h(s_h|s_{h-1},a_{h-1})=\frac{1}{n_{s_{h-1},a_{h-1}}}\sum_{i=1}^{n_{s_{h-1},a_{h-1}}}\left(\mathds{1}[s_h^{(i)}]-{T}_h(s_h|s_{h-1},a_{h-1})\right)\mathds{1}(E_h),
	\]
	since $\Var[\mathds{1}[s_h^{(i)}]|s_{h-1},a_{h-1}]=P_h(s_h|s_{h-1},a_{h-1})(1-P_h(s_h|s_{h-1},a_{h-1}))\leq P_h(s_h|s_{h-1},a_{h-1})\leq \gamma$, therefore by Lemma~\ref{lem:bernstein_ineq}, 
	\[
	\left|\widetilde{T}_h(s_h|s_{h-1},a_{h-1})-{T}_h(s_h|s_{h-1},a_{h-1})\right|\leq \mathds{1}(E_h)\left(\sqrt{\frac{\gamma \log(1/\delta)}{n_{s_{h-1},a_{h-1}}}}+\frac{2\log(1/\delta)}{n_{s_{h-1},a_{h-1}}}\right)\leq \sqrt{\frac{\gamma \log(1/\delta)}{2nd_m}}+\frac{2\log(1/\delta)}{3nd_m}; 
	\]
	
	Second, when $P_h(s_h|s_{h-1},a_{h-1})>\gamma$.
	\[
	\frac{\widetilde{T}_h(s_h|s_{h-1},a_{h-1})-{T}_h(s_h|s_{h-1},a_{h-1})}{{T}_h(s_h|s_{h-1},a_{h-1})}=\frac{1}{n_{s_{h-1},a_{h-1}}}\sum_{i=1}^{n_{s_{h-1},a_{h-1}}}\left(\frac{\mathds{1}[s_h^{(i)}]}{{T}_h(s_h|s_{h-1},a_{h-1})}-1\right)\mathds{1}(E_h),
	\]
	since 
	\[
	\Var\left[\frac{\mathds{1}[s_h^{(i)}]}{{T}_h(s_h|s_{h-1},a_{h-1})}\middle|s_{h-1},a_{h-1}\right]\leq \frac{1}{{T}_h(s_h|s_{h-1},a_{h-1})^2}\Var\left[{\mathds{1}[s_h^{(i)}]}\middle|s_{h-1},a_{h-1}\right]\leq \frac{1}{{T}_h(s_h|s_{h-1},a_{h-1})}\leq \frac{1}{\gamma},
	\]
	and since $\frac{\mathds{1}[s_h^{(i)}]}{{T}_h(s_h|s_{h-1},a_{h-1})}\leq 1/\gamma$, again be Bernstein inequality we have 
	\[
	\left|\frac{\widetilde{T}_h(s_h|s_{h-1},a_{h-1})-{T}_h(s_h|s_{h-1},a_{h-1})}{{T}_h(s_h|s_{h-1},a_{h-1})}\right|\leq \sqrt{\frac{ \log(1/\delta)}{2nd_m\gamma}}+\frac{2\log(1/\delta)}{3nd_m\gamma}; 
	\]
	apply the union bound over $t,s_t,a_t$ we obtain the stated result.
\end{proof}

\paragraph{Bounding $\E\left[\sup_{\pi\in\Pi}\left|\sum_{t=1}^H\langle \widetilde{d}^\pi_t-d^\pi_t,{r}_t\rangle\right|\right]$.}

First note by Theorem~\ref{thm:martingale_decompose}:
 \[
 \E\left[\sup_{\pi\in\Pi}\left|\sum_{t=1}^H\langle \widetilde{d}^\pi_t-d^\pi_t,{r}_t\rangle\right|\right]\leq
\sum_{h=2}^H \E\left[\sup_{\pi\in\Pi}\left|\langle{v^\pi_h(s),((\widetilde{T}_h-T_h)\widetilde{d}^\pi_{h-1}})(s)\rangle\right|\right]+\E\left[\sup_{\pi\in\Pi}\left|\langle{V^\pi_1(s),(\widetilde{d}^\pi_1-d^\pi_1)(s)}\rangle\right|\right],
 \]
 so it suffices to bound each $\E\left[\sup_{\pi\in\Pi}\left|\langle{V^\pi_h(s),((\widetilde{T}_h-T_h)\widetilde{d}^\pi_{h-1}})(s)\rangle\right|\right]$. First of all,
{\small
\begin{align*}
&\E\left[\sup_{\pi\in\Pi}\left|\langle{V^\pi_h(s),((\widetilde{T}_h-T_h)\widetilde{d}^\pi_{h-1}})(s)\rangle\right|\right]\\
=&\E\left[\sup_{\pi\in\Pi}\left|\sum_{s_h,s_{h-1},a_{h-1}}V_h^\pi(s_h)(\widetilde{T}_h-T_h)(s_h|s_{h-1},a_{h-1})\widetilde{d}^\pi_{h-1}(s_{h-1},a_{h-1})\right|\right]\\
\leq&\E\left[\sup_{\pi\in\Pi}\left|\sum_{s_h,s_{h-1},a_{h-1}}V_h^\pi(s_h)(\widetilde{T}_h-T_h)(s_h|s_{h-1},a_{h-1})\widetilde{d}^\pi_{h-1}(s_{h-1},a_{h-1})\right|\cdot\mathds{1}[T_h(s_h|s_{h-1},a_{h-1})>\gamma]\right]\\
+&\E\left[\sup_{\pi\in\Pi}\left|\sum_{s_h,s_{h-1},a_{h-1}}V_h^\pi(s_h)(\widetilde{T}_h-T_h)(s_h|s_{h-1},a_{h-1})\widetilde{d}^\pi_{h-1}(s_{h-1},a_{h-1})\right|\cdot\mathds{1}[T_h(s_h|s_{h-1},a_{h-1})\leq\gamma]\right]\\
=&\underbrace{\E\left[\sup_{\pi\in\Pi}\left|\sum_{s_h,s_{h-1},a_{h-1}}V_h^\pi(s_h)T_h(s_h|s_{h-1},a_{h-1})\widetilde{d}^\pi_{h-1}(s_{h-1},a_{h-1})\frac{\widetilde{T}_h-T_h}{T_h}(s_h|s_{h-1},a_{h-1})\right|\cdot\mathds{1}[T_h>\gamma]\right]}_{(a)}\\
+&\underbrace{\E\left[\sup_{\pi\in\Pi}\left|\sum_{s_h,s_{h-1},a_{h-1}}V_h^\pi(s_h)\widetilde{d}^\pi_{h-1}(s_{h-1},a_{h-1})(\widetilde{T}_h-T_h)(s_h|s_{h-1},a_{h-1})\right|\cdot\mathds{1}[T_h(s_h|s_{h-1},a_{h-1})\leq\gamma]\right]}_{(b)},\\
\end{align*}
}
Apply Lemma~\ref{lem:thre} with $\delta^\prime/2$ where $\delta^\prime=\delta/H$, then
{\small
\begin{align*}
(a)\leq& \sup_{\pi\in\Pi}\left|\sum_{s_h,s_{h-1},a_{h-1}}V_h^\pi(s_h)T_h(s_h|s_{h-1},a_{h-1})\widetilde{d}^\pi_{h-1}(s_{h-1},a_{h-1})\left(\sqrt{\frac{ \log(2HSA/\delta^\prime)}{2nd_m\gamma}}+\frac{2\log(2HSA/\delta^\prime)}{3nd_m\gamma}\right)\right|(1-\frac{\delta^\prime}{2})\\
+&H\delta^\prime/2\\
\leq& \sup_{\pi\in\Pi}\left|\sum_{s_h,s_{h-1},a_{h-1}}V_h^\pi(s_h)T_h(s_h|s_{h-1},a_{h-1})\widetilde{d}^\pi_{h-1}(s_{h-1},a_{h-1})\left(\sqrt{\frac{ \log(2H^2SA/\delta)}{2nd_m\gamma}}+\frac{2\log(2H^2SA/\delta)}{3nd_m\gamma}\right)\right|\\
+&\delta/2\\
\leq& \sup_{\pi\in\Pi}\left|H\left(\sqrt{\frac{2 \log(H^2SA/\delta)}{2nd_m\gamma}}+\frac{2\log(2H^2SA/\delta)}{3nd_m\gamma}\right)\right|+\delta/2=H\left(\sqrt{\frac{ \log(2H^2SA/\delta)}{2nd_m\gamma}}+\frac{2\log(2H^2SA/\delta)}{3nd_m\gamma}\right)+\delta/2,\\
\end{align*}
\begin{align*}
(b)\leq& \sup_{\pi\in\Pi}\left|\sum_{s_h,s_{h-1},a_{h-1}}V_h^\pi(s_h)\widetilde{d}^\pi_{h-1}(s_{h-1},a_{h-1})\left(\sqrt{\frac{\gamma \log(2HSA/\delta)}{2nd_m}}+\frac{2\log(2HSA/\delta)}{3nd_m} \right)\right|(1-\frac{\delta^\prime}{2})+H\frac{\delta^\prime}{2}\\
\leq& \sup_{\pi\in\Pi}\left|HS\left(\sqrt{\frac{ \gamma\log(2H^2SA/\delta)}{2nd_m}}+\frac{2\log(2H^2SA/\delta)}{3nd_m}\right)\right|+\frac{\delta}{2}=HS\left(\sqrt{\frac{\gamma \log(2H^2SA/\delta)}{2nd_m}}+\frac{2\log(2H^2SA/\delta)}{3nd_m}\right)+\frac{\delta}{2},\\
\end{align*}
}

Hence we have for any $\gamma$, 
\begin{align*}
&\E\left[\sup_{\pi\in\Pi}\left|\langle{V^\pi_h(s),((\widetilde{T}_h-T_h)\widetilde{d}^\pi_{h-1}})(s)\rangle\right|\right]\\
\leq&H\left(\sqrt{\frac{ \log(2H^2SA/\delta)}{2nd_m\gamma}}+\frac{2\log(2H^2SA/\delta)}{3nd_m\gamma}\right)+HS\left(\sqrt{\frac{\gamma \log(2H^2SA/\delta)}{2nd_m}}+\frac{2\log(2H^2SA/\delta)}{3nd_m}\right)+\delta\\
\end{align*}
In particular, choose $\gamma=1/S<1$, then above becomes
\[
\E\left[\sup_{\pi\in\Pi}\left|\langle{V^\pi_h(s),((\widetilde{T}_h-T_h)\widetilde{d}^\pi_{h-1}})(s)\rangle\right|\right]\leq \sqrt{\frac{2H^2S \log(2H^2SA/\delta)}{nd_m}}+\frac{4HS\log(2H^2SA/\delta)}{3nd_m}+{\delta}
\]
Critically, above holds for any $\forall 1>\delta>0$. Based on theorem condition $n > c\cdot 1/d_m \log (HSA/\theta)>c\cdot 1/d_m$\footnote{Note the $\theta$ in $\log (HSA/\theta)$ is identical to the failure probability in Theorem~\ref{thm:mcdiarmid} }, choose $\delta=\frac{c}{nd_m}$, then above is further less equal to 
\[
\sqrt{\frac{2H^2S \log(2nH^2SA)}{nd_m}}+\frac{4HS\log(2nH^2SA)}{3nd_m}+\frac{c}{nd_m}\leq \sqrt{\frac{2H^2S \log(2nH^2SA)}{nd_m}}+C\cdot\frac{HS\log(2nH^2SA)}{3nd_m}
\]
where $C$ is a new constant absorbs $1/nd_m$. If we further reducing it to 

Finally, summing over all $H$, and again using new constant $C'$ to absorb higher order term, we obtain
\[
\E\left[\sup_{\pi\in\Pi}\left|\sum_{t=1}^H\langle \widetilde{d}^\pi_t-d^\pi_t,{r}_t\rangle\right|\right]\leq C^\prime\sqrt{\frac{H^4S \log(nHSA)}{nd_m}}
\]
Combing this with Theorem~\ref{thm:mcdiarmid}  and Lemma~\ref{lem:app_1}, we have proved {Theorem~\ref{thm:rad}}.

\begin{remark}
	The key for proving this uniform convergence bound is that applying concentration inequality only to terms that are independent of the policies, i.e. $\widetilde{T}_h(s_h|s_{h-1},a_{h-1})-T_h(s_h|s_{h-1},a_{h-1})$. Therefore when taking supremum over policies, high probability event holds with same probability without decay.
\end{remark}

\section{Proof of uniform convergence in OPE with deterministic policies using martingale concentration inequalities: Theorem~\ref{thm:finer_bound}}\label{app_sec:mar}

A reminder that all results in this section use data $\mathcal{D}$ for OPEMA estimator $\widehat{v}^\pi$.

\subsection{Martingale concentration result on $\sum_{t=1}^H\langle \widetilde{d}^\pi_t-d^\pi_t,{r}_t\rangle$.}

Let $X=\sum_{t=1}^H\langle \widetilde{d}^\pi_t-d^\pi_t,{r}_t\rangle$ and $\mathcal{D}_h:=\{s_t^{(i)},a_t^{(i)}:t=1,...,h\}_{i=1}^n$. Since $\mathcal{D}_h$ forms a filtration, then by law of total expectation we have $X_t=\E[X|\mathcal{D}_t]$ is martingale. Moreover, we have 
\begin{lemma}\label{lem:X_t}
\[
X_t:=\E[X|\mathcal{D}_t]=\sum_{h=2}^t\langle{V^\pi_h,(\widetilde{T}_h-T_h)\widetilde{d}^\pi_{h-1}}\rangle+\langle{V^\pi_1,\widetilde{d}^\pi_1-d^\pi_1}\rangle.
\]
\end{lemma}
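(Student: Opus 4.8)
The plan is to read off the claimed formula directly from the martingale decomposition of Theorem~\ref{thm:martingale_decompose} by applying the conditional expectation $\E[\,\cdot\,\mid\mathcal{D}_t]$ term by term, and showing that exactly the summands indexed by $h>t$ drop out. Recall that Theorem~\ref{thm:martingale_decompose}, applied to the fictitious estimator built on $\mathcal{D}$, gives $X=\sum_{h=2}^H\langle V^\pi_h,(\widetilde{T}_h-T_h)\widetilde{d}^\pi_{h-1}\rangle+\langle V^\pi_1,\widetilde{d}^\pi_1-d^\pi_1\rangle$, in which $V^\pi_h$, $T_h$ and $d^\pi_1$ are non-random quantities of the true MDP and the policy $\pi$.

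First I would record the measurability bookkeeping with respect to the filtration $\{\mathcal{D}_h\}$. The count $n_{s_{h-1},a_{h-1}}$, and hence the event $E_{h-1}$, is $\mathcal{D}_{h-1}$-measurable; the fictitious one-step transition $\widetilde{T}_h(\cdot\mid s_{h-1},a_{h-1})$ is built from the next-state observations $s_h^{(i)}$ attached to visits of $(s_{h-1},a_{h-1})$, hence is $\mathcal{D}_h$-measurable; and since $\widetilde{d}^\pi_{h-1}$ is produced by the recursion $\widetilde{d}^\pi_j=\widetilde{P}^\pi_j\widetilde{d}^\pi_{j-1}$ out of $\widetilde{d}^\pi_1$ and $\widetilde{T}_2,\dots,\widetilde{T}_{h-1}$, it is $\mathcal{D}_{h-1}$-measurable. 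Consequently, for every $h\le t$ the whole summand $\langle V^\pi_h,(\widetilde{T}_h-T_h)\widetilde{d}^\pi_{h-1}\rangle$ is $\mathcal{D}_t$-measurable, as is the $h=1$ term $\langle V^\pi_1,\widetilde{d}^\pi_1-d^\pi_1\rangle$ whenever $t\ge1$; these are therefore unchanged by $\E[\,\cdot\,\mid\mathcal{D}_t]$ and reproduce the asserted right-hand side.

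The heart of the argument is showing the terms with $h>t$ vanish, for which I would use the conditional unbiasedness of the fictitious transition, namely $\E[\widetilde{T}_h(s_h\mid s_{h-1},a_{h-1})\mid\mathcal{D}_{h-1}]=T_h(s_h\mid s_{h-1},a_{h-1})$. This holds because, given $\mathcal{D}_{h-1}$, the event $E_{h-1}$ is already decided: on $E_{h-1}^c$ we have $\widetilde{T}_h=T_h$ by construction, while on $E_{h-1}$ we have $\widetilde{T}_h=\widehat{T}_h$, an empirical average of i.i.d.\ indicators $\mathds{1}[s_h^{(i)}=s_h]$ whose conditional mean given the source $(s_{h-1},a_{h-1})$ is $P_h(s_h\mid s_{h-1},a_{h-1})=T_h(s_h\mid s_{h-1},a_{h-1})$ (this is the unbiasedness already recorded for $\widetilde{v}^\pi$). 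Then for $h>t$ I apply the tower property through the intermediate $\sigma$-field $\mathcal{D}_{h-1}\supseteq\mathcal{D}_t$: since $V^\pi_h$ is deterministic and $\widetilde{d}^\pi_{h-1}$ is $\mathcal{D}_{h-1}$-measurable, expanding the inner product coordinatewise lets both factor outside the inner conditional expectation, leaving
\[
\E\big[\langle V^\pi_h,(\widetilde{T}_h-T_h)\widetilde{d}^\pi_{h-1}\rangle\,\big|\,\mathcal{D}_{h-1}\big]=\big\langle V^\pi_h,\,\E[\widetilde{T}_h-T_h\mid\mathcal{D}_{h-1}]\,\widetilde{d}^\pi_{h-1}\big\rangle=0.
\]
Taking a further $\E[\,\cdot\,\mid\mathcal{D}_t]$ preserves this zero, so every $h>t$ summand drops and only $\sum_{h=2}^t\langle V^\pi_h,(\widetilde{T}_h-T_h)\widetilde{d}^\pi_{h-1}\rangle+\langle V^\pi_1,\widetilde{d}^\pi_1-d^\pi_1\rangle$ survives, which is the claim.

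I expect the main obstacle to be one of careful conditioning rather than of estimation: making sure the event $E_{h-1}$ is treated as $\mathcal{D}_{h-1}$-measurable (so that the fictitious substitution does not spoil the martingale increment structure), and that the deterministic factor $V^\pi_h$ together with the $\mathcal{D}_{h-1}$-measurable weight $\widetilde{d}^\pi_{h-1}$ genuinely factor out of the conditional expectation, leaving only the mean-zero transition fluctuation. Once the conditional unbiasedness $\E[\widetilde{T}_h-T_h\mid\mathcal{D}_{h-1}]=0$ is in place, everything else is bookkeeping with the tower property.
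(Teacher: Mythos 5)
Your proposal is correct and follows essentially the same route as the paper's proof: start from the decomposition of Theorem~\ref{thm:martingale_decompose}, observe that the summands with $h\le t$ are $\mathcal{D}_t$-measurable, and kill the $h>t$ terms via the tower property through $\mathcal{D}_{h-1}$ together with the conditional unbiasedness $\E[\widetilde{T}_h-T_h\mid\mathcal{D}_{h-1}]=0$. Your additional bookkeeping on the $\mathcal{D}_{h-1}$-measurability of $E_{h-1}$ and $\widetilde{d}^\pi_{h-1}$ only makes explicit what the paper states in passing.
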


\begin{proof}[Proof of Lemma~\ref{lem:X_t}]
	By martingale decomposition Theorem~\ref{thm:martingale_decompose} and note $\widetilde{T}_i,\widetilde{d}^\pi_i$ are measurable \textit{w.r.t.} $\mathcal{D}_t$ for $i=1,...,t$, so we have 
	\[
	\E[X|\mathcal{D}_t]=\sum_{h=t+1}^H\E\left[\langle{V^\pi_h,(\widetilde{T}_h-T_h)\widetilde{d}^\pi_{h-1}}\rangle\middle|\mathcal{D}_t\right]+\sum_{h=2}^t\langle{V^\pi_h,(\widetilde{T}_h-T_h)\widetilde{d}^\pi_{h-1}}\rangle+\langle{V^\pi_1,(\widetilde{d}^\pi_1-d^\pi_1)}\rangle.
	\]
	Note for $h\geq t+1$, $\mathcal{D}_t\subset\mathcal{D}_{h-1}$, so by total law of expectation (tower property) we have
	\begin{align*}
	&\E\left[\langle{V^\pi_h,(\widetilde{T}_h-T_h)\widetilde{d}^\pi_{h-1}}\rangle\middle|\mathcal{D}_t\right]\\
	=&\E\left[\E\left[\langle{V^\pi_h,(\widetilde{T}_h-T_h)\widetilde{d}^\pi_{h-1}}\rangle\middle|\mathcal{D}_{h-1}\right]\middle|\mathcal{D}_t\right]\\
	=&\E\left[\langle{V^\pi_h,\E\left[(\widetilde{T}_h-T_h)\middle|\mathcal{D}_{h-1}\right]\widetilde{d}^\pi_{h-1}}\rangle\middle|\mathcal{D}_t\right]=0
	\end{align*}
	where the last equality uses $\widetilde{T}_h$ is unbiased of $T_h$ given $\mathcal{D}_{h-1}$. This gives the desired result.
\end{proof}
Next we show martingale difference $|X_t-X_{t-1}|$ is bounded with high probability.

\begin{lemma}\label{lem:mart_diff_bound}
	With probability $1-\delta$, 
	\[
	\sup_t |X_t-X_{t-1}|\leq O(\sqrt{\frac{H^2\log(HSA/\delta)}{n\cdot d_m}}).
	\]
\end{lemma}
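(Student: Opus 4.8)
The plan is to read off the one-step martingale increment from Lemma~\ref{lem:X_t} and then control it by a single-step conditional concentration argument. By Lemma~\ref{lem:X_t}, for $t\ge 2$ the expression telescopes to
\[
X_t-X_{t-1}=\langle V^\pi_t,(\widetilde{T}_t-T_t)\widetilde{d}^\pi_{t-1}\rangle,
\]
while $X_1-X_0=\langle V^\pi_1,\widetilde{d}^\pi_1-d^\pi_1\rangle$ (here $X_0=\E[X]=0$, using that $\widetilde{d}^\pi_t$ is unbiased for $d^\pi_t$, which itself follows from $\E[\widetilde T_h\mid \mathcal{D}_{h-1}]=T_h$ as invoked in the proof of Lemma~\ref{lem:X_t}). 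Since $\widetilde{d}^\pi_{t-1}$ is a probability vector, Hölder's inequality gives
\[
|X_t-X_{t-1}|\le \norm{\widetilde{d}^\pi_{t-1}}_1\cdot \max_{s_{t-1},a_{t-1}}\Big|\sum_{s_t}V^\pi_t(s_t)(\widetilde{T}_t-T_t)(s_t|s_{t-1},a_{t-1})\Big|,
\]
and $\norm{\widetilde d^\pi_{t-1}}_1=1$, so it suffices to control the inner sum for each fixed triple $(t,s_{t-1},a_{t-1})$.

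First I would fix $(t,s_{t-1},a_{t-1})$ and condition on $\mathcal{D}_{t-1}$. On the complement event $E_{t-1}^c$ the increment vanishes because $\widetilde{T}_t=T_t$ there, so only $E_{t-1}$ matters, and $E_{t-1}$, $\widetilde{d}^\pi_{t-1}$, and the count $n_{s_{t-1},a_{t-1}}$ are all $\mathcal{D}_{t-1}$-measurable. By the Markov property, conditioned on $\mathcal{D}_{t-1}$ the $n_{s_{t-1},a_{t-1}}$ next-states sampled from $(s_{t-1},a_{t-1})$ are i.i.d.\ draws from $P_t(\cdot|s_{t-1},a_{t-1})$, so the inner sum is an empirical average of i.i.d.\ mean-zero terms $V^\pi_t(s_t^{(i)})-\E[V^\pi_t(s_t)|s_{t-1},a_{t-1}]$, each bounded in absolute value by $\norm{V^\pi_t}_\infty\le H$. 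Crucially $V^\pi_t$ is the value function of the \emph{true} MDP, hence a deterministic vector once $\pi$ is fixed and independent of the data. Applying Hoeffding's inequality (Lemma~\ref{lem:hoeffding_ineq}) conditionally yields, with probability $1-\delta'$,
\[
\Big|\sum_{s_t}V^\pi_t(s_t)(\widetilde{T}_t-T_t)(s_t|s_{t-1},a_{t-1})\Big|\le \sqrt{\frac{H^2\log(2/\delta')}{2\,n_{s_{t-1},a_{t-1}}}}.
\]

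Next I would convert the data-dependent denominator into $nd_m$: on the event of Lemma~\ref{lem:sufficient_sample} (which holds with probability $1-\delta/2$ once $n>c\,\tfrac{1}{d_m}\log(HSA/\delta)$) we have $n_{s_{t-1},a_{t-1}}\ge nd_m/2$ simultaneously for all triples. Choosing $\delta'=\delta/(2HSA)$ and a union bound over the $HSA$ triples turns the per-triple estimate into a uniform bound of order $\sqrt{H^2\log(HSA/\delta)/(nd_m)}$; the Hölder/$\ell_1$ reduction then gives the identical bound for every increment $X_t-X_{t-1}$, $t\ge 2$, yielding the stated control on $\sup_t|X_t-X_{t-1}|$. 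The $t=1$ term $\langle V^\pi_1,\widetilde d^\pi_1-d^\pi_1\rangle$ is handled the same way, with $V^\pi_1$ paired against the empirical-minus-true initial distribution built from all $n$ episodes, giving a bound of order $\sqrt{H^2\log(1/\delta)/n}$, which is of lower order and absorbed. The main obstacle is precisely the careful conditioning: one must verify that given $\mathcal{D}_{t-1}$ the relevant next-states are conditionally i.i.d.\ with law $P_t$ (so the centered, bounded summands are legitimate inputs to Hoeffding) while the random count and the indicator $\mathds{1}(E_{t-1})$ are frozen, and then stitch the conditional high-probability bound together with the count lower bound from Lemma~\ref{lem:sufficient_sample} into a single unconditional statement.
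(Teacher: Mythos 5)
Your proposal is correct and follows essentially the same route as the paper: identify the increment $X_t-X_{t-1}=\langle V^\pi_t,(\widetilde{T}_t-T_t)\widetilde{d}^\pi_{t-1}\rangle$, bound it by $\|(\widetilde{T}_t-T_t)^T V^\pi_t\|_\infty$ via the $\ell_1$/$\ell_\infty$ pairing, rewrite each coordinate as a centered empirical average of $V^\pi_t(s_t^{(i)})$ over the visits to $(s_{t-1},a_{t-1})$, and apply conditional Hoeffding plus a union bound over $(t,s,a)$ with the count lower bound $n_{s_{t-1},a_{t-1}}\ge nd^\mu_{t-1}(s_{t-1},a_{t-1})/2$ on $E_{t-1}$. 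The only cosmetic difference is that your separate invocation of Lemma~\ref{lem:sufficient_sample} is superfluous, since the indicator $\mathds{1}(E_{t-1})$ built into $\widetilde{T}_t$ already guarantees the count bound wherever the increment is nonzero.
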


\begin{proof}
	\[
	|X_t-X_{t-1}|=\langle{V^\pi_t,(\widetilde{T}_t-T_t)\widetilde{d}^\pi_{t-1}}\rangle\leq ||(\widetilde{T}_t-T_t)^TV^\pi_t||_\infty||\widetilde{d}^\pi_{t-1}||_1=||(\widetilde{T}_t-T_t)^TV^\pi_t||_\infty.
	\]
	For any fixed pair $(s_t,a_t)$, we have 
	\begin{align*}
	&((\widetilde{T}_t-T_t)^TV^\pi_t)(s_{t-1},a_{t-1})\\
	=&\mathds{1}(E_{t-1})\cdot ((\widehat{T}_t-T_t)^TV^\pi_t)(s_{t-1},a_{t-1})\\
	=&\mathds{1}(E_{t-1})\cdot\sum_{s_t}V^\pi_t(s_t)(\widehat{T}_t-T_t)(s_t|s_{t-1},a_{t-1})\\
	=&\mathds{1}(E_{t-1})\cdot\left(\sum_{s_t}V^\pi_t(s_t)\widehat{T}_t(s_t|s_{t-1},a_{t-1})-\E[V^\pi_t|s_{t-1},a_{t-1}]\right)\\
	=&\mathds{1}(E_{t-1})\cdot\left(\sum_{s_t}V^\pi_t(s_t)\frac{1}{n_{s_{t-1},a_{t-1}}}\sum_{i=1}^n\mathds{1}(s_t^{(i)}=s_t,s_{t-1}^{(i)}=s_{t-1},a_{t-1}^{(i)}=a_{t-1})-\E[V^\pi_t|s_{t-1},a_{t-1}]\right)\\
	=&\mathds{1}(E_{t-1})\left(\frac{1}{n_{s_{t-1},a_{t-1}}}\sum_{i=1}^nV^\pi_t(s_t^{(i)})\mathds{1}(s_t^{(i)}=s_t,s_{t-1}^{(i)}=s_{t-1},a_{t-1}^{(i)}=a_{t-1})-\E[V^\pi_t|s_{t-1},a_{t-1}]\right)\\
	=&\mathds{1}(E_{t-1})\left(\frac{1}{n_{s_{t-1},a_{t-1}}}\sum_{i:s_{t-1}^{(i)}=s_{t-1},a_{t-1}^{(i)}=a_{t-1}}V^\pi_t(s_t^{(i)})-\E[V^\pi_t|s_{t-1},a_{t-1}]\right),\\
	\end{align*}
	where the fourth line uses the definition of $\widehat{T}_t$ and the fifth line uses the fact $\sum_{s_t}V^\pi_t(s_t)\mathds{1}(s_t^{(i)}=s_t,s_{t-1}^{(i)}=s_{t-1},a_{t-1}^{(i)}=a_{t-1})=V^\pi_t(s_t^{(i)})\mathds{1}(s_t^{(i)}=s_t,s_{t-1}^{(i)}=s_{t-1},a_{t-1}^{(i)}=a_{t-1})$.
	
	Note $||V^\pi_t(\cdot)||_\infty\leq H$ and also conditional on $E_t$, $n_{s_t,a_t}\geq nd^\mu_t(s_t,a_t)/2$, therefore by Hoeffding's inequality and a Union bound we obtain with probability $1-\delta$
	\[
	\sup_t |X_t-X_{t-1}|\leq O(\sqrt{\frac{H^2\log(HSA/\delta)}{n\cdot \min_{t,s_t,a_t} d_t^\mu(s_t,a_t)}})=O(\sqrt{\frac{H^2\log(HSA/\delta)}{n\cdot d_m}}).
	\]
\end{proof}

Next we calculate the conditional variance of $\Var[X_{t+1}|\mathcal{D}_t]$.

\begin{lemma}\label{lem:cond_var}
	We have the following decomposition of conditional variance:
	\[
	\mathrm{Var}[X_{t+1}|\mathcal{D}_{t}]=\sum_{s_t,a_t}\frac{\widetilde{d}^\pi_t(s_t,a_t)^2\cdot\mathds{1}(E_t)}{n_{s_t,a_t}}\cdot\mathrm{Var}[V^\pi_{t+1}(s^{(1)}_{t+1})|s^{(1)}_t=s_t,a^{(1)}_t=a_t]
	\]
\end{lemma}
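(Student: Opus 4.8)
The plan is to read off the martingale increment from Lemma~\ref{lem:X_t} and then compute its conditional variance directly. Since $X_t=\E[X|\mathcal{D}_t]$ and the expression in Lemma~\ref{lem:X_t} telescopes term by term, the increment is
\[
X_{t+1}-X_t=\langle V^\pi_{t+1},(\widetilde{T}_{t+1}-T_{t+1})\widetilde{d}^\pi_t\rangle .
\]
Because $X_t$ is $\mathcal{D}_t$-measurable, $\Var[X_{t+1}|\mathcal{D}_t]=\Var[X_{t+1}-X_t|\mathcal{D}_t]$, so it suffices to compute the conditional variance of this single inner product.

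Next I would expand the inner product over states and use $\widetilde{T}_{t+1}-T_{t+1}=(\widehat{T}_{t+1}-T_{t+1})\mathds{1}(E_t)$, together with the observation that $\widetilde{d}^\pi_t(s_t,a_t)$, the indicator $\mathds{1}(E_t)$, and the count $n_{s_t,a_t}$ are all $\mathcal{D}_t$-measurable (the density $\widetilde{d}^\pi_t$ depends only on $\widetilde{P}_1^\pi,\dots,\widetilde{P}_t^\pi$, which use transitions up to time $t$). This reduces the increment to
\[
X_{t+1}-X_t=\sum_{s_t,a_t}\widetilde{d}^\pi_t(s_t,a_t)\,\mathds{1}(E_t)\,Z_{s_t,a_t},\qquad
Z_{s_t,a_t}:=\frac{1}{n_{s_t,a_t}}\sum_{i:\,(s_t^{(i)},a_t^{(i)})=(s_t,a_t)}\Big(V^\pi_{t+1}(s_{t+1}^{(i)})-\E[V^\pi_{t+1}|s_t,a_t]\Big),
\]
a weighted sum of centered empirical means.

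The crux of the argument is the independence structure. Conditioning on $\mathcal{D}_t$ fixes the partition of episodes into groups by their realized pair $(s_t^{(i)},a_t^{(i)})$, while by the Markov property each next state $s_{t+1}^{(i)}$ is an independent draw from $P_{t+1}(\cdot|s_t^{(i)},a_t^{(i)})$ depending only on the episode's own current pair. Hence, given $\mathcal{D}_t$, the variables $Z_{s_t,a_t}$ for distinct pairs are conditionally independent (disjoint episode sets), and within a fixed pair the summands are i.i.d.\ and centered. The cross terms therefore vanish, leaving
\[
\Var[X_{t+1}|\mathcal{D}_t]=\sum_{s_t,a_t}\widetilde{d}^\pi_t(s_t,a_t)^2\,\mathds{1}(E_t)\,\Var[Z_{s_t,a_t}|\mathcal{D}_t].
\]
Each $Z_{s_t,a_t}$ is an average of $n_{s_t,a_t}$ i.i.d.\ terms, so the variance-of-an-average identity gives $\Var[Z_{s_t,a_t}|\mathcal{D}_t]=\tfrac{1}{n_{s_t,a_t}}\Var[V^\pi_{t+1}(s_{t+1}^{(1)})\,|\,s_t^{(1)}=s_t,a_t^{(1)}=a_t]$, which yields the claimed formula.

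The step I expect to require the most care is justifying the conditional independence across groups and the i.i.d.\ structure within a group: specifically, that conditioning on $\mathcal{D}_t$ (which includes the counts $n_{s_t,a_t}$ and the realized one-step histories) leaves the next states $\{s_{t+1}^{(i)}\}$ as independent draws governed solely by each episode's own $(s_t^{(i)},a_t^{(i)})$. This is where the Markov property must be invoked precisely, noting that $\widehat{T}_{t+1}$ enters only through the grouping and not through the transitions being estimated. Once this is established, the remaining variance computation is routine.
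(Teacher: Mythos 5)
Your proposal is correct and follows essentially the same route as the paper's proof: expand the single martingale increment $\langle V^\pi_{t+1},(\widetilde{T}_{t+1}-T_{t+1})\widetilde{d}^\pi_t\rangle$, pull out the $\mathcal{D}_t$-measurable weights $\widetilde{d}^\pi_t(s_t,a_t)\mathds{1}(E_t)$, use conditional independence of the episode groups to drop the cross terms, and apply the variance-of-an-average identity within each $(s_t,a_t)$ group. Your extra care in spelling out why the Markov property yields the conditional independence given $\mathcal{D}_t$ is exactly the point the paper states more tersely.
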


\begin{proof}
	Indeed, 
	\begin{equation}\label{eq:Var_decomp}
	\begin{aligned}
	\mathrm{Var}[X_{t+1}|\mathcal{D}_t]&=\mathrm{Var}\left[\sum_{s_t,a_t}\sum_{s_{t+1}}V^\pi_{t+1}(s_{t+1})(\widetilde{T}-T)(s_{t+1}|s_t,a_t)\widetilde{d}^\pi_t(s_t,a_t)\middle|\mathcal{D}_t\right]\\
	&=\sum_{s_t,a_t}\mathrm{Var}\left[\sum_{s_{t+1}}V^\pi_{t+1}(s_{t+1})(\widetilde{T}-T)(s_{t+1}|s_t,a_t)\middle|\mathcal{D}_t\right]\widetilde{d}^\pi_t(s_t,a_t)^2\\
	&=\sum_{s_t,a_t}\mathds{1}(E_t)\cdot\mathrm{Var}\left[\sum_{s_{t+1}}V^\pi_{t+1}(s_{t+1})\widehat{T}(s_{t+1}|s_t,a_t)\middle|\mathcal{D}_t\right]\widetilde{d}^\pi_t(s_t,a_t)^2\\
	&=\sum_{s_t,a_t}\mathds{1}(E_t)\cdot\mathrm{Var}\left[\sum_{s_{t+1}}V^\pi_{t+1}(s_{t+1})\frac{1}{n_{s_t,a_t}}\sum_{i=1}^n\mathds{1}(s^{(i)}_{t+1}=s_{t+1},s^{(i)}_t=s_t,a^{(i)}_t=a_t)\middle|\mathcal{D}_t\right]\widetilde{d}^\pi_t(s_t,a_t)^2\\
	&=\sum_{s_t,a_t}\frac{\mathds{1}(E_t)}{n_{s_t,a_t}^2}\cdot\mathrm{Var}\left[\sum_{i:s^{(i)}_t=s_t,a^{(i)}_t=a_t}V^\pi_{t+1}(s_{t+1}^{(i)})\middle|\mathcal{D}_t\right]\widetilde{d}^\pi_t(s_t,a_t)^2\\
	&=\sum_{s_t,a_t}\frac{\widetilde{d}^\pi_t(s_t,a_t)^2\cdot\mathds{1}(E_t)}{n_{s_t,a_t}}\cdot\mathrm{Var}[V^\pi_{t+1}(s^{(1)}_{t+1})|s^{(1)}_t=s_t,a^{(1)}_t=a_t]
	\end{aligned}
	\end{equation}
	where the second equal sign comes from the fact that when conditional on $\mathcal{D}_t$, we can separate $n$ episodes into $SA$ groups and episodes from different groups are independent of each other. The third uses $\mathds{1}(E_t)$ is measurable w.r.t $\mathcal{D}_t$. Similarly, the last equal sign again uses $n_{s_t,a_t}$ episodes are independent given $\mathcal{D}_t$. 
\end{proof}

\begin{lemma}[\cite{yin2020asymptotically} Lemma~3.4]\label{lem:H3_to_H2}
	For any policy $\pi$ and any MDP.
	{\small
		\begin{align*}
		&\mathrm{Var}_\pi\left[\sum_{t=1}^H r^{(1)}_t\right] = \sum_{t=1}^H \Big(\E_\pi\left[ \mathrm{Var}\left[r^{(1)}_t+V^\pi_{t+1}(s_{t+1}^{(1)}) \middle|s^{(1)}_t,a^{(1)}_t\right] \right]\\
		&\quad +  \E_\pi\left[ \mathrm{Var}\left[  \E[r^{(1)}_t+V^\pi_{t+1}(s_{t+1}^{(1)}) | s^{(1)}_t, a^{(1)}_t]  \middle|s^{(1)}_t\right] \right]\Big).
		\end{align*}
	}
\end{lemma}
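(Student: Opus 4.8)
The plan is to prove the identity by combining the \emph{law of total variance} with the telescoping (martingale) structure of the value function along a single trajectory. Throughout I fix one episode and drop the superscript $(1)$; write $G := \sum_{t=1}^H r_t$ for the total reward and adopt the convention $V^\pi_{H+1}\equiv 0$.

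First I would introduce the value-augmented partial-sum process
\[
M_t := \sum_{t'=1}^{t-1} r_{t'} + V^\pi_t(s_t), \qquad t=1,\dots,H+1,
\]
so that $M_1 = V^\pi_1(s_1)$ and $M_{H+1} = G$, and let $\cF_t := \sigma(s_1,a_1,r_1,\dots,s_t)$ be the natural filtration just before the action at time $t$ is drawn. The increments are $\delta_t := M_{t+1}-M_t = r_t + V^\pi_{t+1}(s_{t+1}) - V^\pi_t(s_t)$. The crucial observation is that $\{M_t\}$ is a martingale: by the Markov property together with the Bellman equation $V^\pi_t(s_t) = \E_\pi[r_t + V^\pi_{t+1}(s_{t+1})\mid s_t]$, we get $\E[\delta_t\mid \cF_t]=0$. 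Consequently the $\delta_t$ are uncorrelated martingale differences, and (for a deterministic initial state $s_1$, so that $\Var[M_1]=0$) one obtains $\Var[G] = \sum_{t=1}^H \E[\delta_t^2]$.

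The second half is a per-step, two-level variance decomposition. For each $t$, since $\E[\delta_t\mid s_t]=0$ we have $\E[\delta_t^2] = \E_\pi\big[\Var[r_t + V^\pi_{t+1}(s_{t+1})\mid s_t]\big]$. I would then apply the law of total variance a second time, now conditioning on the action $a_t$ inside the event $\{s_t\}$, splitting the randomness into the reward/transition noise given $(s_t,a_t)$ and the action-selection noise of $\pi_t(\cdot\mid s_t)$:
\[
\Var[r_t + V^\pi_{t+1}\mid s_t] = \E\big[\Var[r_t+V^\pi_{t+1}\mid s_t,a_t]\,\big|\,s_t\big] + \Var\big[\E[r_t+V^\pi_{t+1}\mid s_t,a_t]\,\big|\,s_t\big].
\]
Taking $\E_\pi$ over $s_t$ and summing over $t$ yields exactly the two families of terms in the statement.

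The main obstacle is getting the conditioning discipline exactly right: one must cleanly separate the two distinct sources of randomness at each step --- the stochastic action draw $a_t\sim\pi_t(\cdot\mid s_t)$ and the subsequent reward/transition $(r_t,s_{t+1})$ given $(s_t,a_t)$ --- and use the Markov property to confirm that, conditioned on $(s_t,a_t,r_t,s_{t+1})$, the tail $\sum_{t'>t} r_{t'}$ depends on the past only through $s_{t+1}$. An equivalent and arguably cleaner route avoids the martingale bookkeeping: define $W_t(s):=\Var_\pi[\sum_{t'=t}^H r_{t'}\mid s_t=s]$ and establish by backward induction the recursion $W_t(s_t) = \E[W_{t+1}(s_{t+1})\mid s_t] + \E[\Var[r_t+V^\pi_{t+1}\mid s_t,a_t]\mid s_t] + \Var[\E[r_t+V^\pi_{t+1}\mid s_t,a_t]\mid s_t]$, then unroll from $W_{H+1}\equiv0$. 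Either way, the only genuinely delicate point beyond routine total-variance manipulations is the boundary contribution of the initial state: when $s_1\sim d_1$ is random the identity acquires an extra $\Var_\pi[V^\pi_1(s_1)]$ term, so the clean statement here is understood under the standard fixed-initial-state convention (equivalently, after conditioning on $s_1$).
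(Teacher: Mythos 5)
Your proof is correct and follows essentially the same route as the cited source and as the paper's own sketch for the conditional variant (Lemma~\ref{lem:H3}): a per-step application of the law of total variance, equivalently the orthogonality of the martingale differences $r^{(1)}_t+V^\pi_{t+1}(s^{(1)}_{t+1})-V^\pi_t(s^{(1)}_t)$. Your remark about the boundary term is apt --- as literally stated the identity omits $\Var_\pi[V^\pi_1(s^{(1)}_1)]$ when $s_1\sim d_1$ is nondegenerate --- but this only makes the right-hand side an underestimate, so the way the lemma is actually invoked (to bound $\sum_{t}\E_\pi\left[\Var\left[V^\pi_{t+1}(s^{(1)}_{t+1})\mid s^{(1)}_t,a^{(1)}_t\right]\right]$ by $\Var_\pi\left[\sum_t r^{(1)}_t\right]\le H^2$) is unaffected.
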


This Lemma suggests if we can bound $\widetilde{d}^\pi_t $ by $O({d}^\pi_t)$ with high probability, then by Lemma~\ref{lem:cond_var} we have w.h.p
\[
\sum_{t=1}^H \mathrm{Var}[X_{t+1}|\mathcal{D}_t]\leq O(\frac{1}{nd_m}\cdot\sum_{t=1}^H \E[\mathrm{Var}[V^\pi_{t+1}(s^{(1)}_{t+1})|s^{(1)}_t,a^{(1)}_t]])\leq O(\frac{H^2}{nd_m})
\]
Note this gives only $H^2$ dependence for $\sum_{t=1}^H \mathrm{Var}[X_{t+1}|\mathcal{D}_t]$ instead of a naive bound with $H^3$ and helps us to save a $H$ factor.

Next we show how to bound $\widetilde{d}^\pi_t $.

\subsection{Bounding $\widetilde{d}^\pi_t(s_t,a_t)-d^\pi_t(s_t,a_t)$}

Our analysis is based on using martingale structure to derive bound on $\widetilde{d}^\pi_t(s_t,a_t)-d^\pi_t(s_t,a_t)$ for fixed $t,s_t,a_t$ with probability $1-\delta/HSA$, then use a union bound to get a bound for all $\widetilde{d}^\pi_t(s_t,a_t)-d^\pi_t(s_t,a_t)$ with probability $1-\delta$. 

Concretely, in \eqref{difference_density} if we only extract the specific $(s_t,a_t)$, then we have 
\[
\widetilde{d}^\pi_t(s_t,a_t)-d^\pi_t(s_t,a_t)=\sum_{h=2}^t (\Gamma_{h+1:t}\pi_h(\widetilde{T}_h-T_h)\widetilde{d}^\pi_{h-1})(s_t,a_t)+(\Gamma_{1:t}(\widetilde{d}^\pi_1-d^\pi_1))(s_t,a_t),
\]  
here $\widetilde{d}^\pi_t(s_t,a_t)-d^\pi_t(s_t,a_t)$ already forms a martingale with filtration $\mathcal{F}_{t}=\sigma(\mathcal{D}_t)$ and $(\Gamma_{h+1:t}\pi_h(\widetilde{T}_h-T_h)\widetilde{d}^\pi_{h-1})(s_t,a_t)$ is the corresponding martingale difference since $$\mathbb{E}[(\Gamma_{h+1:t}\pi_h(\widetilde{T}_h-T_h)\widetilde{d}^\pi_{h-1})(s_t,a_t)|\mathcal{F}_{h-1}]=(\Gamma_{h+1:t}\pi_h\mathbb{E}[(\widetilde{T}_h-T_h)|\mathcal{F}_{h-1}]\widetilde{d}^\pi_{h-1})(s_t,a_t)=0.$$

Now we fix specific $(s_t,a_t)$. Then denote $(\Gamma_{h+1:t}\pi_h)(s_t,a_t):=\Gamma^\prime_{h:t}\in \R^{1\times S}$, then we have 
\[
|(\Gamma_{h+1:t}\pi_h(\widetilde{T}_h-T_h)\widetilde{d}^\pi_{h-1})(s_t,a_t)|=|\Gamma^\prime_{h:t}(\widetilde{T}_h-T_h)\widetilde{d}^\pi_{h-1}|=|\langle (\widetilde{T}_h-T_h)^T\Gamma^{\prime T}_{h:t} ,\widetilde{d}^\pi_{h-1}\rangle|\leq ||\Gamma^\prime_{h:t} (\widetilde{T}_h-T_h)||_\infty\cdot 1.
\]
Note here $\Gamma^\prime_{h:t} (\widetilde{T}_h-T_h)$ is a row vector with dimension $SA$.

\paragraph{Bounding $||\Gamma^\prime_{h:t} (\widetilde{T}_h-T_h)||_\infty$}.

In fact, for any given $(s_{h-1},a_{h-1})$, we have 
\begin{align*}
&\Gamma^\prime_{h:t}(\widetilde{T}_h-T_h)(s_{h-1},s_{h-1})=\mathds{1}(E_t)\cdot \Gamma^\prime_{h:t}(\widehat{T}_h-T_h)(s_{h-1},a_{h-1})\\
=&\mathds{1}(E_t)\cdot \Gamma^\prime_{h:t}\left(\frac{1}{n_{s_{t-1},a_{t-1}}}\sum_{i:s^{(i)}_{h-1}=s_{h-1},a^{(i)}_{h-1}=a_{h-1}}\textbf{e}_{s^{(i)}_h}-\E[\textbf{e}_{s^{(1)}_h}|s^{(1)}_{h-1}=s_{h-1},a^{(1)}_{h-1}=a_{h-1}]\right)\\
=&\mathds{1}(E_t)\left(\frac{1}{n_{s_{t-1},a_{t-1}}}\sum_{i:s^{(i)}_{h-1}=s_{h-1},a^{(i)}_{h-1}=a_{h-1}}\Gamma^\prime_{h:t}({s^{(i)}_h})-\E[\Gamma^\prime_{h:t}({s^{(1)}_h})|s^{(1)}_{h-1}=s_{h-1},a^{(1)}_{h-1}=a_{h-1}]\right)
\end{align*}
Note by definition $\Gamma^\prime_{h:t}({s^{(i)}_h})\leq 1$, since $(\Gamma_{h+1:t}\pi_h)(s_t,a_t):=\Gamma^\prime_{h:t}\in \R^{1\times S}$ and $\Gamma_{h+1:t},\pi_h$ are just probability transitions. Therefore by Hoeffding's inequality and law of total expectation, we have 

\begin{align*}
&\P\left(|\Gamma^\prime_{h:t}(\widetilde{T}_h-T_h)(s_{h-1},a_{h-1})|>\epsilon \right)=\P\left(|\Gamma^\prime_{h:t}(\widehat{T}_h-T_h)(s_{h-1},a_{h-1})|>\epsilon\middle| E_t \right)\\
&\leq \E\left[\exp({-\frac{2n_{s_{h-1},a_{h-1}}\cdot\epsilon^2}{1}})\middle| E_t\right]\leq \exp({-\frac{nd^\mu_{h-1}(s_{h-1},a_{h-1})\cdot\epsilon^2}{1}})\\
\end{align*}

and apply a union bound to get

\begin{equation}\label{eq:bound_weights_hoeffding}
\begin{aligned}
&P(\sup_h||\Gamma^\prime_{h:t} (\widetilde{T}_h-T_h)||_\infty>\epsilon)\leq H\cdot\sup_h P(||\Gamma^\prime_{h:t} (\widetilde{T}_h-T_h)||_\infty>\epsilon)\\
\leq& HSA\cdot\sup_{h,s_{h-1},a_{h-1}}\P\left(|\Gamma^\prime_{h:t}(\widetilde{T}_h-T_h)(s_{h-1},a_{h-1})|>\epsilon \right)\\
\leq& HSA\cdot\exp({-\frac{n\min d^\mu_{h-1}(s_{h-1},a_{h-1})\cdot\epsilon^2}{1}}):=\frac{\delta}{HSA}
\end{aligned}
\end{equation}

Let the right hand side of \eqref{eq:bound_weights_hoeffding} to be $\delta/HSA$, then we have w.p. $1-\delta/HSA$,
\begin{equation}\label{eqn:small_bound}
\sup_h||\Gamma^\prime_{h:t} (\widetilde{T}_h-T_h)||_\infty\leq O(\sqrt{\frac{1}{n\cdot d_m}\log\frac{H^2S^2A^2}{\delta}}).
\end{equation}

\paragraph{Go back to bounding $\widetilde{d}^\pi_t(s_t,a_t)-d^\pi_t(s_t,a_t)$.}

By Azuma-Hoeffding's inequality (Lemma~\ref{lem:azuma_hoeff}), we have\footnote{To be more precise here we actually use a weaker version of Azuma-Hoeffding's inequality, see Remark~\ref{remark:weaker_ineq}.} 
\[
\P(|\widetilde{d}^\pi_t(s_t,a_t)-d^\pi_t(s_t,a_t)|>\epsilon)\leq \exp(-\frac{\epsilon^2}{\sum_{i=1}^t (\sup_h||\Gamma^\prime_{h:t} (\widetilde{T}_h-T_h)||_\infty)^2}):=\delta/HSA,
\] 
where $\sum_{i=1}^t (\sup_h||\Gamma^\prime_{h:t} (\widetilde{T}_h-T_h)||_\infty)^2$ is the sum of bounded square differences in Azuma-Hoeffding's inequality. Therefore we have w.p. $1-\delta/HSA$,
\begin{equation}\label{eqn:weak_azuma}
|\widetilde{d}^\pi_t(s_t,a_t)-d^\pi_t(s_t,a_t)|\leq
O(\sqrt{t\cdot (\sup_h||\Gamma^\prime_{h:t} (\widetilde{T}_h-T_h)||_\infty)^2\log\frac{HSA}{\delta}}),
\end{equation}
combining \eqref{eqn:small_bound} with above we further have that w.p. $1-2\delta/HSA$, 
\[
|\widetilde{d}^\pi_t(s_t,a_t)-d^\pi_t(s_t,a_t)|\leq
O(\sqrt{\frac{t}{nd_m}\log\frac{H^2S^2A^2}{\delta}\log\frac{HSA}{\delta}})
\]

\textbf{Lastly}, by a union bound and simple scaling (from $2\delta$ to $\delta$) we have w.p. $1-\delta$ 
\[
\sup_t||\widetilde{d}^\pi_t-d^\pi_t||_\infty \leq O(\sqrt{\frac{H}{nd_m}\log\frac{H^2S^2A^2}{\delta}\log\frac{HSA}{\delta}}).
\]
This implies that w.p. $1-\delta$, $\forall t,s_t,a_t$, 
\begin{equation}\label{eqn:bound_d}
\widetilde{d}^\pi_t(s_t,a_t)^2\leq 2d^\pi_t(s_t,a_t)^2+O({\frac{H}{nd_m}\log\frac{H^2S^2A^2}{\delta}\log\frac{HSA}{\delta}}).
\end{equation}

Combining \eqref{eqn:bound_d} with Lemma~\ref{lem:H3_to_H2} and Lemma~\ref{lem:cond_var}, we obtain:

\begin{lemma}\label{lem:var_order} With probability $1-\delta$,
\begin{equation}\label{eqn:19}
\sum_{t=1}^H\mathrm{Var}[X_{t+1}|\mathcal{D}_t]\leq O(\frac{H^2}{nd_m})+O(\frac{H^4SA}{n^2d_m^2}\cdot\log\frac{H^2S^2A^2}{\delta}\log\frac{HSA}{\delta})
\end{equation}
\end{lemma}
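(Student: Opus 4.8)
The plan is to start from the exact expression for the conditional variance supplied by Lemma~\ref{lem:cond_var} and control each of its ingredients on the single high-probability event behind \eqref{eqn:bound_d}. Concretely, on the event $E_t$ we have $n_{s_t,a_t}\geq n d^\mu_t(s_t,a_t)/2$, hence $\mathds{1}(E_t)/n_{s_t,a_t}\leq 2/(n d^\mu_t(s_t,a_t))$; substituting this into Lemma~\ref{lem:cond_var} bounds $\mathrm{Var}[X_{t+1}\mid\mathcal{D}_t]$ by $\sum_{s_t,a_t}\frac{2\widetilde{d}^\pi_t(s_t,a_t)^2}{n d^\mu_t(s_t,a_t)}\mathrm{Var}[V^\pi_{t+1}(s_{t+1})\mid s_t,a_t]$. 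I would then plug in the pointwise bound \eqref{eqn:bound_d}, namely $\widetilde{d}^\pi_t(s_t,a_t)^2\leq 2 d^\pi_t(s_t,a_t)^2+\epsilon_0$ with $\epsilon_0=O(\tfrac{H}{nd_m}\log\tfrac{H^2S^2A^2}{\delta}\log\tfrac{HSA}{\delta})$ (which holds simultaneously over all $t,s_t,a_t$ with probability $1-\delta$), splitting the estimate into a main term driven by $d^\pi_t(s_t,a_t)^2$ and an error term driven by $\epsilon_0$.

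For the main term I would use $d^\mu_t(s_t,a_t)\geq d_m$ together with the elementary inequality $d^\pi_t(s_t,a_t)^2\leq d^\pi_t(s_t,a_t)$ (valid since $d^\pi_t\in[0,1]$), which collapses the weighted sum into the first moment $\E_\pi[\mathrm{Var}[V^\pi_{t+1}(s_{t+1})\mid s_t,a_t]]$. This is the crucial step, because it puts the expression in exactly the form handled by the variance factorization of Lemma~\ref{lem:H3_to_H2}. Summing over $t$ and noting that $r_t$ and $s_{t+1}$ are conditionally independent given $(s_t,a_t)$, so that $\mathrm{Var}[V^\pi_{t+1}\mid s_t,a_t]\leq \mathrm{Var}[r_t+V^\pi_{t+1}\mid s_t,a_t]$, Lemma~\ref{lem:H3_to_H2} (after discarding its second nonnegative term) bounds $\sum_{t=1}^H\E_\pi[\mathrm{Var}[V^\pi_{t+1}\mid s_t,a_t]]$ by $\mathrm{Var}_\pi[\sum_{t=1}^H r_t]\leq H^2$, yielding a contribution of $O(H^2/(nd_m))$. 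This is precisely where the advertised $H^3\to H^2$ saving materializes.

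For the error term I would bound crudely: $d^\mu_t(s_t,a_t)\geq d_m$, $\mathrm{Var}[V^\pi_{t+1}\mid s_t,a_t]\leq H^2$ (since $V^\pi_{t+1}\in[0,H]$), and there are at most $SA$ summands, producing $O(\tfrac{H^3 SA}{n^2 d_m^2}\log\tfrac{H^2S^2A^2}{\delta}\log\tfrac{HSA}{\delta})$ per time step; summing over the $H$ steps gives the stated second term $O(\tfrac{H^4 SA}{n^2 d_m^2}\log\tfrac{H^2S^2A^2}{\delta}\log\tfrac{HSA}{\delta})$. Adding the two contributions, and observing that all the randomness invoked has already been absorbed into the single $1-\delta$ event of \eqref{eqn:bound_d}, completes the argument.

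The main obstacle is the main-term analysis: a naive bound would simply replace $\mathrm{Var}[V^\pi_{t+1}\mid s_t,a_t]$ by $H^2$ before summing, costing an extra factor of $H$ and giving $H^3$ rather than $H^2$. Avoiding this requires keeping the per-$(s_t,a_t)$ conditional variances intact, reducing the second moment $d^\pi_t(s_t,a_t)^2$ to the first moment $d^\pi_t(s_t,a_t)$ so that the law-of-total-variance identity of Lemma~\ref{lem:H3_to_H2} becomes applicable, and verifying the conditional independence of reward and transition that justifies $\mathrm{Var}[V^\pi_{t+1}\mid s_t,a_t]\leq\mathrm{Var}[r_t+V^\pi_{t+1}\mid s_t,a_t]$. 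The remaining steps are routine order-of-magnitude bookkeeping.
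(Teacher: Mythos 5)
Your proposal is correct and follows essentially the same route as the paper's own proof: bound $\mathds{1}(E_t)/n_{s_t,a_t}$ by $O(1/(nd_m))$ in the exact conditional-variance expression of Lemma~\ref{lem:cond_var}, substitute the high-probability bound \eqref{eqn:bound_d} on $\widetilde{d}^\pi_t(s_t,a_t)^2$, reduce the second moment $d^\pi_t(s_t,a_t)^2$ to the first moment so that Lemma~\ref{lem:H3_to_H2} collapses the main term to $\mathrm{Var}_\pi[\sum_t r_t]\leq H^2$, and bound the residual term crudely by $H^2\cdot SA$ per step. Your explicit justification that $\mathrm{Var}[V^\pi_{t+1}\mid s_t,a_t]\leq\mathrm{Var}[r_t+V^\pi_{t+1}\mid s_t,a_t]$ via conditional independence is a small but welcome addition that the paper leaves implicit.
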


\begin{proof}[Proof of Lemma~\ref{lem:var_order}]
By \eqref{eqn:bound_d} and Lemma~\ref{lem:cond_var}, we have $\forall t$, with probability ay least $1-\delta$, 
\begin{align*}
&\Var[X_{t+1}|\mathcal{D}_t]\leq \sum_{s_t,a_t}O(\frac{\widetilde{d}^\pi_t(s_t,a_t)^2\cdot }{nd_m})\cdot\mathrm{Var}[V^\pi_{t+1}(s^{(1)}_{t+1})|s^{(1)}_t=s_t,a^{(1)}_t=a_t]\\
&\leq \sum_{s_t,a_t}O(\frac{1}{nd_m})\left(2d^\pi_t(s_t,a_t)^2+O({\frac{H}{nd_m}\log\frac{H^2S^2A^2}{\delta}\log\frac{HSA}{\delta}})\right)\cdot\mathrm{Var}[V^\pi_{t+1}(s^{(1)}_{t+1})|s^{(1)}_t=s_t,a^{(1)}_t=a_t]\\
&\leq \sum_{s_t,a_t}O(\frac{1}{nd_m})\left(2d^\pi_t(s_t,a_t)+O({\frac{H}{nd_m}\log\frac{H^2S^2A^2}{\delta}\log\frac{HSA}{\delta}})\right)\cdot\mathrm{Var}[V^\pi_{t+1}(s^{(1)}_{t+1})|s^{(1)}_t=s_t,a^{(1)}_t=a_t]\\
&\leq O(\frac{1}{nd_m})\E\left[\Var[V^\pi_{t+1}(s^{(1)}_{t+1})|s^{(1)}_t,a^{(1)}_t]\right]+O(\frac{1}{nd_m}\cdot{\frac{H}{nd_m}\log\frac{H^2S^2A^2}{\delta}\log\frac{HSA}{\delta}}\cdot H^2SA)\\
&=O(\frac{1}{nd_m})\E\left[\Var[V^\pi_{t+1}(s^{(1)}_{t+1})|s^{(1)}_t,a^{(1)}_t]\right]+O(\frac{H^3SA}{n^2d_m^2}\cdot\log\frac{H^2S^2A^2}{\delta}\log\frac{HSA}{\delta})
\end{align*}
then sum over $t$ and apply Lemma~\ref{lem:H3_to_H2} gives the stated result.
\end{proof}
Combining all the results, we are able to prove:

\begin{theorem}\label{thm:finer_difference_bound}
	With probability $1-\delta$, we have 
	\[
	\left|\sum_{t=1}^H\langle \widetilde{d}^\pi_t-d^\pi_t,{r}_t\rangle\right|\leq {O}(\sqrt{\frac{H^2\log(HSA/\delta)}{nd_m}}+\sqrt{\frac{H^4SA\cdot\log(H^2S^2A^2/\delta)\log(HSA/\delta)}{n^2d_m^2}})
	\]
	where ${O}(\cdot)$ absorbs only the absolute constants.
\end{theorem}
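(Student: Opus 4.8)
The plan is to apply Freedman's martingale concentration inequality (Lemma~\ref{lem:freedman}) to the Doob martingale $X_t = \E[X\mid\mathcal{D}_t]$ whose explicit form is given in Lemma~\ref{lem:X_t}. First I would record the two boundary identities: $X_H = X$, since $\mathcal{D}_H$ measures every term in the decomposition of Theorem~\ref{thm:martingale_decompose}, and $X_0 = \E[X] = 0$, since the fictitious estimator $\widetilde{v}^\pi$ is unbiased (equivalently, each martingale difference has zero conditional mean, as verified inside the proof of Lemma~\ref{lem:X_t}). Thus the quantity to be bounded is exactly $|X_H - X_0|$, and Freedman's inequality will produce a bound of the form $\sqrt{8\sigma^2\log(1/\delta)} + \tfrac{2M}{3}\log(1/\delta)$ once we supply a high-probability bound $M$ on the martingale differences and a high-probability bound $\sigma^2$ on the predictable quadratic variation $W = \sum_{t}\Var[X_{t+1}\mid\mathcal{D}_t]$.

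Second, both ingredients are already assembled. Lemma~\ref{lem:mart_diff_bound} gives $\sup_t|X_t - X_{t-1}| \le M = O(\sqrt{H^2\log(HSA/\delta)/(nd_m)})$ with probability $1-\delta$, which controls the jump size, and Lemma~\ref{lem:var_order} gives $W \le \sigma^2 = O(H^2/(nd_m)) + O(H^4SA\log(H^2S^2A^2/\delta)\log(HSA/\delta)/(n^2d_m^2))$ with probability $1-\delta$. I would then invoke Freedman's inequality and take a union bound over the event that $M$ bounds all differences, the event $W\le\sigma^2$, and the Freedman tail event, rescaling $\delta$ by a constant. Substituting $\sigma^2$ into $\sqrt{8\sigma^2\log(1/\delta)}$ and using $\sqrt{a+b}\le\sqrt a+\sqrt b$ yields precisely the two displayed terms $\sqrt{H^2\log(HSA/\delta)/(nd_m)}$ and $\sqrt{H^4SA\cdot\log(H^2S^2A^2/\delta)\log(HSA/\delta)/(n^2d_m^2)}$; the linear term $\tfrac{2M}{3}\log(1/\delta)$ is of the same $n^{-1/2}$ order as the first term (up to a logarithmic factor) and is folded into the $O(\cdot)$.

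The genuinely load-bearing step, and the one I expect to be the main obstacle, is the variance bound of Lemma~\ref{lem:var_order}: obtaining the $O(H^2/(nd_m))$ leading behaviour rather than the naive $O(H^3/(nd_m))$, which is what makes the final rate optimal in $H$. Starting from the term-by-term expression of Lemma~\ref{lem:cond_var}, $\Var[X_{t+1}\mid\mathcal{D}_t] = \sum_{s_t,a_t}\tfrac{\widetilde d^\pi_t(s_t,a_t)^2\,\mathbf 1(E_t)}{n_{s_t,a_t}}\Var[V^\pi_{t+1}(s^{(1)}_{t+1})\mid s_t,a_t]$, the key is to (a) replace $1/n_{s_t,a_t}$ by $O(1/(nd_m))$ on $E_t$, (b) use the density concentration \eqref{eqn:bound_d} to write $\widetilde d^\pi_t(s_t,a_t)^2 \le 2d^\pi_t(s_t,a_t)^2 + (\text{higher order})$ and then bound $d^\pi_t(s_t,a_t)^2 \le d^\pi_t(s_t,a_t)$ so the sum over $(s_t,a_t)$ collapses to an expectation $\E[\Var[V^\pi_{t+1}\mid s_t,a_t]]$ against the true occupancy $d^\pi_t$, and (c) sum over $t$ and invoke the law-of-total-variance identity of Lemma~\ref{lem:H3_to_H2}, which shows $\sum_t\E[\Var[V^\pi_{t+1}\mid s_t,a_t]] \le \Var_\pi[\sum_t r_t] = O(H^2)$. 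Establishing \eqref{eqn:bound_d} is itself the sub-obstacle: it requires viewing $\widetilde d^\pi_t(s_t,a_t) - d^\pi_t(s_t,a_t)$ as a martingale in $t$ via the recursion \eqref{difference_density}, bounding each difference $\|\Gamma'_{h:t}(\widetilde T_h - T_h)\|_\infty$ by Hoeffding applied to a quantity whose weights are $\pi$-measurable and bounded by $1$, and then applying Azuma--Hoeffding, all under a union bound over $(t,s_t,a_t)$. Once these pieces are in place, the concluding assembly through Freedman is mechanical.
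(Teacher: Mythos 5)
Your proposal is correct and follows essentially the same route as the paper's proof: Freedman's inequality applied to the Doob martingale of Lemma~\ref{lem:X_t} with $\E[X]=0$, the difference bound from Lemma~\ref{lem:mart_diff_bound}, the variance bound from Lemma~\ref{lem:var_order} (itself built from Lemma~\ref{lem:cond_var}, the occupancy concentration, and Lemma~\ref{lem:H3_to_H2}), and a union bound over the three failure events. Your identification of the $O(H^2/(nd_m))$ variance bound as the load-bearing step, and of the occupancy-measure concentration as its sub-obstacle, matches the paper exactly.
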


\begin{proof}[Proof of Theorem~\ref{thm:finer_difference_bound}]
	Recall $X=\sum_{t=1}^H\langle \widetilde{d}^\pi_t-d^\pi_t,{r}_t\rangle$ and by law of total expectation it is easy to show $E[X]=0$. Next denote $\sigma^2=O(\frac{H^2}{nd_m})+O(\frac{H^4SA}{n^2d_m^2}\cdot\log\frac{H^2S^2A^2}{\delta}\log\frac{HSA}{\delta})$ as in Lemma~\ref{lem:var_order} and also let $M=\sup_t|X_t-X_{t-1}|$. Then by Freedman inequality (Lemma~\ref{lem:freedman}), we have with probability $1-\delta/3$, 
	\[
	|X-\E[X]|\leq \sqrt{{8\sigma^2\cdot\log(3/\delta)}}+\frac{2M}{3}\cdot\log(3/\delta), \quad\text{Or} \quad W\geq \sigma^2.
	\]
	where $W=\sum_{t=1}^H\mathrm{Var}[X_{t+1}|\mathcal{D}_t]$. Next by Lemma~\ref{lem:var_order}, we have $\P(W\geq\sigma^2)\leq 1/3\delta$, this implies with probability $1-2\delta/3$, 
	\[
		|X-\E[X]|\leq \sqrt{{8\sigma^2\cdot\log(3/\delta)}}+\frac{2M}{3}\cdot\log(3/\delta).
	\]
	Finally, by Lemma~\ref{lem:mart_diff_bound}, we have $\P(M\geq O(\sqrt{\frac{H^2\log(HSA/\delta)}{nd_m}}))\leq \delta/3$. Also use $\E[X]=0$, we have with probability $1-\delta$, 
	\[
	|X|\leq \sqrt{{8\sigma^2\cdot\log(3/\delta)}}+{O}(\sqrt{\frac{H^2\cdot \log(HSA/\delta)}{nd_m}}\log(3/\delta)).
	\]
	
	Plugging back the expression of $\sigma^2=O(\frac{H^2}{nd_m})+O(\frac{H^4SA}{n^2d_m^2}\cdot\log\frac{H^2S^2A^2}{\delta}\log\frac{HSA}{\delta})$ and assimilating the same order terms give the desired result.
\end{proof}

\begin{remark}\label{remark:weaker_ineq}
	Rigorously, standard Azuma-Hoeffding's inequality Lemma~\ref{lem:azuma_hoeff} does not apply to \eqref{eqn:weak_azuma} since $\sup_h||\Gamma^\prime_{h:t} (\widetilde{T}_h-T_h)||_\infty$ is not a deterministic upper bound, we only have the difference bound with high probability sense, see \eqref{eqn:small_bound}. Therefore, strictly speaking, we need to apply Theorem~32 in \cite{chung2006concentration} which is a weaker Azuma-Hoeffding's inequality allowing bounded difference with high probability. The same logic applies for a weaker freedman's inequality consisting of Theorem~34 and Theorem~37 in \cite{chung2006concentration} since our martingale difference $M=\sup_t|X_t-X_{t-1}|$ in the proof of Theorem~\ref{thm:finer_difference_bound} is bounded with high probability. We avoid explicitly using them in order to make our proofs more readable for our readers.
\end{remark}

We end this section by giving the proofs of Theorem~\ref{thm:single_finer_bound} and Theorem~\ref{thm:finer_bound}.

\begin{proof}[Proof of Lemma~\ref{thm:single_finer_bound} and Theorem~\ref{thm:finer_bound}]
	The proof of Lemma~\ref{thm:single_finer_bound} comes from Lemma~\ref{lem:zero_diff}, Lemma~\ref{lem:app_1} and Theorem~\ref{thm:finer_difference_bound}. The proof of Theorem~\ref{thm:finer_bound} relies on applying a union bound over $\Pi$ in Theorem~\ref{thm:single_finer_bound} (recall all non-stationary deterministic policies have $|\Pi|=A^{HS}$), then extra dependence of $\sqrt{\log(|\Pi|)}=\sqrt{HS\log(A)}$ pops out. Note that the higher order term has two trailing $\log$ terms (see the right hand side of \eqref{eqn:19}), so when replacing $\delta$ by $\delta/|\Pi|$ with a union bound, both terms will give extra $\sqrt{HS}$ dependence so in higher order term we have extra $HS$ dependence but not just $\sqrt{HS}$. 
	
\end{proof}

\section{Proof of uniform convergence problem with local policy class.}\label{sec:E}

In this section, we consider using OPEMA estimator with data $\mathcal{D}'$. Also, WLOG we only consider deterministic reward (as implied by Lemma~\ref{lem:app_1} random reward only causes lower order dependence). Also, we {fix $N>0$} for the moment. First recall for all $t=1,...,H$
\begin{align*}
V^\pi_t(s_t)&=\E_\pi\left[\sum_{t'=t}^H r_{t'}(s_{t'}^{(1)},a_t^{(1)})\middle|s_t^{(1)}=s_t\right] \\
Q^\pi_t(s_t,a_t)&=\E_\pi\left[\sum_{t'=t}^H r_{t'}(s_{t'}^{(1)},a_t^{(1)})\middle|s_t^{(1)}=s_t,a_t^{(1)}=a_t\right] 
\end{align*}
where $r_t(s,a)$ are deterministic rewards and $s_t^{(1)},a_t^{(1)}$ are random variables. Consider $V^\pi_t,Q^\pi_t$ as vectors, then by standard Bellman equations we have for all $t=1,...,H$ (define $V_{H+1}=Q_{H+1}=0$)
\begin{equation}\label{eqn:bellman}
Q^\pi_t=r_t+P_{t+1}^\pi Q^\pi_{t+1}=r_t+P_{t+1} V^\pi_{t+1},
\end{equation}
where $P_{t}^\pi\in\R^{(SA)\times(SA)}$ is the state-action transition and $P_t(\cdot|\cdot,\cdot)\in\R^{(SA)\times S}$ is the transition probabilities defined in Section~\ref{sec:formulation}. Also, we have bellman optimality equations:
\begin{equation}\label{eqn:bell_opt}
Q^\star_t=r_t+P_{t+1} V^\star_{t+1}, \qquad V^\star_{t}(s_{t}):=\max_{a_t}Q^\star_t(s_t,a_t),\;\;\pi^\star_t(s_t):=\argmax_{a_t}Q^\star_t(s_t,a_t)\;\;\forall s_t 
\end{equation}
where $\pi^\star$ is one optimal deterministic policy. The corresponding Bellman equations and Bellman optimality equations for empirical MDP $\widehat{M}$ are defined similarly. Since we consider deterministic rewards, by Bellman equations we have 
\[
\widehat{Q}^\pi_t-Q^\pi_t=\widehat{P}_{t+1}^\pi \widehat{Q}^\pi_{t+1}-P_{t+1}^\pi Q^\pi_{t+1}=(\widehat{P}_{t+1}^\pi-{P}_{t+1}^\pi) \widehat{Q}^\pi_{t+1}+P_{t+1}^\pi (\widehat{Q}^\pi_{t+1}-Q^\pi_{t+1})
\]
for $t=1,...,H$. By writing it recursively, we have $\forall t=1,...,H-1$
\begin{align*}
\widehat{Q}^\pi_t-Q^\pi_t&=\sum_{h=t+1}^H \Gamma_{t+1:h-1}^\pi(\widehat{P}_{h}^\pi-{P}_{h}^\pi) \widehat{Q}^\pi_{h}\\
&=\sum_{h=t+1}^H \Gamma_{t+1:h-1}^\pi(\widehat{P}_{h}-{P}_{h}) \widehat{V}^\pi_{h}
\end{align*}
where $\Gamma_{t:h}^\pi=\prod_{i=t}^hP^\pi_i$ is the multi-step state-action transition and $\Gamma_{t+1:t}^\pi:=I$.

Note $\widehat{\pi}^*$ to be the empirical optimal policy over $\widehat{M}$, we are interested in how to obtain uniform convergence for any policy $\pi$ that is close to $\widehat{\pi}^*$. More precisely, in this section we consider the policy class $\Pi_1$ to be:
\[
\Pi_1:=\lbrace \pi:s.t.\; ||\widehat{V}^\pi_t-\widehat{V}^{\widehat{\pi}^\star}_t ||_\infty\leq\epsilon_\mathrm{opt},\;\forall t=1,...,H\rbrace
\]
where $\epsilon_\mathrm{opt}\geq0$ is a parameter decides how large the policy class is. We now assume $\widehat{\pi}$ to be any policy within $\Pi_1$ throughout this section. \textbf{Also, $\widehat{\pi}$ may be a policy learned from a learning algorithm using the data $\mathcal{D}$. In this case, $\widehat{\pi}$ may not be independent of $\widehat{P}$.}

We start with the following simple calculation:\footnote{Since all quantities in the calculation are vectors, so the absolute value $|\cdot|$ used is point-wise operator.}
\begin{equation}
\begin{aligned}
\left|\widehat{Q}^{\widehat{\pi}}_t-Q^{\widehat{\pi}}_t\right|&\leq \sum_{h=t+1}^H \Gamma_{t+1:h-1}^\pi\left|(\widehat{P}_{h}-{P}_{h}) \widehat{V}^{\widehat{\pi}}_{h}\right|\\
&\leq \underbrace{\sum_{h=t+1}^H \Gamma_{t+1:h-1}^\pi\left|(\widehat{P}_{h}-{P}_{h}) \widehat{V}^{\widehat{\pi}^\star}_{h}\right|}_{(***)}+\underbrace{\sum_{h=t+1}^H \Gamma_{t+1:h-1}^\pi\left|(\widehat{P}_{h}-{P}_{h}) (\widehat{V}^{\widehat{\pi}^\star}_{h}-\widehat{V}^{{\widehat{\pi}}}_{h})\right|}_{(****)}
\end{aligned}
\end{equation}

We now analyze $(***)$ and $(****)$.

\subsection{Analyzing $\sum_{h=t+1}^H \Gamma_{t+1:h-1}^\pi\left|(\widehat{P}_{h}-{P}_{h}) (\widehat{V}^{\widehat{\pi}^\star}_{h}-\widehat{V}^{{\widehat{\pi}}}_{h})\right|$}

First, by vector induced matrix norm\footnote{For $A$ a matrix and $x$ a vector we have $\norm{Ax}_\infty\leq\norm{A}_\infty\norm{x}_\infty$.} we have 
\begin{align*}
\norm{\sum_{h=t+1}^H \Gamma_{t+1:h-1}^{\widehat{\pi}}\cdot\left|(\widehat{P}_{h}-{P}_{h}) (\widehat{V}^{\widehat{\pi}^\star}_{h}-\widehat{V}^{\widehat{\pi}}_{h})\right|}_\infty&\leq H\cdot \sup_h\norm{\Gamma_{t+1:h-1}^{\widehat{\pi}}}_\infty\norm{|(\widehat{P}_{h}-{P}_{h}) (\widehat{V}^{\widehat{\pi}^\star}_{h}-\widehat{V}^{{\widehat{\pi}}}_{h})|}_\infty\\
&\leq H\cdot \sup_h\norm{|(\widehat{P}_{h}-{P}_{h}) (\widehat{V}^{\widehat{\pi}^\star}_{h}-\widehat{V}^{{\widehat{\pi}}}_{h})|}_\infty
\end{align*}
where the last equal sign uses multi-step transition $\Gamma_{t+1:h-1}^\pi$ is row-stochastic. Note given $N$, $\widehat{P}_t(\cdot|\cdot,\cdot)$ all have $N$ in the denominator. Therefore, by Hoeffding inequality and a union bound we have with probability $1-\delta$, 

\[
\sup_{t,s_t,s_{t-1},a_{t-1}}|\widehat{P}_t(s_t|s_{t-1},a_{t-1})-{P}_t(s_t|s_{t-1},a_{t-1})|\leq O(\sqrt{\frac{\log(HSA/\delta)}{N}}),
\]
this indicates
\[
\sup_h\norm{|(\widehat{P}_{h}-{P}_{h}) (\widehat{V}^{\widehat{\pi}^\star}_{h}-\widehat{V}^{{\widehat{\pi}}}_{h})|}_\infty\leq \epsilon_\mathrm{opt}\cdot\sup_h\norm{|\widehat{P}_{h}-{P}_{h}|\cdot \mathbf{1}}_\infty\leq \epsilon_\mathrm{opt}\cdot O(S\sqrt{\frac{\log(HSA/\delta)}{N}}),
\]
where $\mathbf{1}\in\R^{S}$ is all-one vector. To sum up, we have
\begin{lemma}\label{lem:epi_opt_bound}
	Fix $N>0$, we have with probability $1-\delta$, for all $t=1,...,H-1$
	\[
	\sum_{h=t+1}^H \Gamma_{t+1:h-1}^{\hpi}\left|(\widehat{P}_{h}-{P}_{h}) (\widehat{V}^{\widehat{\pi}^\star}_{h}-\widehat{V}^{{\widehat{\pi}}}_{h})\right|\leq\epsilon_\mathrm{opt}\cdot O\left(\sqrt{\frac{H^2S^2\log(HSA/\delta)}{N}}\cdot\mathbf{1}\right)
	\]
\end{lemma}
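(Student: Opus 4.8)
The plan is to bound the $\ell_\infty$ norm of the left-hand side by peeling off the row-stochastic transition operators and then controlling a single factor $(\widehat{P}_h - P_h)(\widehat{V}^{\widehat{\pi}^\star}_h - \widehat{V}^{\widehat{\pi}}_h)$ uniformly in $h$. First I would invoke the vector-induced matrix norm inequality $\norm{Ax}_\infty \le \norm{A}_\infty\norm{x}_\infty$ together with the triangle inequality to write
\[
\norm{\sum_{h=t+1}^H \Gamma_{t+1:h-1}^{\hpi}\,\left|(\widehat{P}_h - P_h)(\widehat{V}^{\widehat{\pi}^\star}_h - \widehat{V}^{\widehat{\pi}}_h)\right|}_\infty \le H \cdot \sup_h \norm{\Gamma_{t+1:h-1}^{\hpi}}_\infty \cdot \norm{(\widehat{P}_h - P_h)(\widehat{V}^{\widehat{\pi}^\star}_h - \widehat{V}^{\widehat{\pi}}_h)}_\infty.
\]
Since each $\Gamma_{t+1:h-1}^{\hpi}$ is a product of state-action transition matrices it is row-stochastic, so $\norm{\Gamma_{t+1:h-1}^{\hpi}}_\infty = 1$, leaving only the factor $H$ and the inner term to control.

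For the inner term the critical subtlety is that $\widehat{\pi}$ is data-dependent, so $\widehat{V}^{\widehat{\pi}}_h$ is correlated with $\widehat{P}_h$; this rules out applying a concentration inequality directly to $(\widehat{P}_h - P_h)\widehat{V}^{\widehat{\pi}}_h$. I would sidestep this by applying concentration only to the \emph{policy-free} quantity $\widehat{P}_h - P_h$. Concretely, with $N$ fixed each entry $\widehat{P}_h(s_h|s_{h-1},a_{h-1})$ is an empirical mean of $N$ i.i.d. Bernoulli indicators, so Hoeffding's inequality (Lemma~\ref{lem:hoeffding_ineq}) and a union bound over all $(h,s_h,s_{h-1},a_{h-1})$ give, with probability $1-\delta$,
\[
\sup_{t,s_t,s_{t-1},a_{t-1}} |\widehat{P}_t(s_t|s_{t-1},a_{t-1}) - P_t(s_t|s_{t-1},a_{t-1})| \le O\!\left(\sqrt{\tfrac{\log(HSA/\delta)}{N}}\right).
\]

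Finally I would combine this with membership in $\Pi_1$. By definition $\widehat{\pi}\in\Pi_1$ gives $\norm{\widehat{V}^{\widehat{\pi}^\star}_h - \widehat{V}^{\widehat{\pi}}_h}_\infty \le \epsilon_\mathrm{opt}$, so for each row $(s_{h-1},a_{h-1})$ the inner product expands as $\sum_{s_h}(\widehat{P}_h - P_h)(s_h|s_{h-1},a_{h-1})(\widehat{V}^{\widehat{\pi}^\star}_h - \widehat{V}^{\widehat{\pi}}_h)(s_h)$, whose absolute value is at most $\epsilon_\mathrm{opt}$ times the $\ell_1$ deviation $\sum_{s_h}|\widehat{P}_h - P_h|(s_h|s_{h-1},a_{h-1})$. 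Bounding the latter by $S$ times the uniform entrywise deviation produces the extra $S$ factor, so $\norm{(\widehat{P}_h - P_h)(\widehat{V}^{\widehat{\pi}^\star}_h - \widehat{V}^{\widehat{\pi}}_h)}_\infty \le \epsilon_\mathrm{opt}\cdot O(S\sqrt{\log(HSA/\delta)/N})$. Multiplying by the factor $H$ from the first step yields the claimed bound $\epsilon_\mathrm{opt}\cdot O(\sqrt{H^2 S^2 \log(HSA/\delta)/N}\cdot \mathbf{1})$. The main obstacle is precisely the data-dependence of $\widehat{\pi}$; the $\epsilon_\mathrm{opt}$ slack is what makes the crude $S$-factor $\ell_1$ bound affordable and lets us avoid any independence assumption, which is exactly what renders this term harmless in the local uniform convergence argument.
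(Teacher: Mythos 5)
Your proposal is correct and follows essentially the same route as the paper's proof: peel off the row-stochastic operators $\Gamma^{\hpi}_{t+1:h-1}$ via the induced $\ell_\infty$ norm, apply Hoeffding plus a union bound to the policy-free entrywise deviations of $\widehat{P}_h$ (each an average of $N$ samples given $N$), and absorb the data-dependent value difference through the $\epsilon_\mathrm{opt}$ bound from $\Pi_1$ together with the crude $S$-factor $\ell_1$ estimate. The identification of the data-dependence of $\hpi$ as the obstacle, and of concentrating only on $\widehat{P}_h - P_h$ as the workaround, matches the paper exactly.
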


Now we consider $(***)$.

\subsection{Analyzing $\sum_{h=t+1}^H \Gamma_{t+1:h-1}^{\hpi}\left|(\widehat{P}_{h}-{P}_{h}) \widehat{V}^{\widehat{\pi}^\star}_{h}\right|$.}

\begin{lemma}\label{lem:bern_bound}
	Given $N$, we have with probability $1-\delta$, $\forall t= 1,...,H-1$
	\[
	\sum_{h=t+1}^H \Gamma_{t+1:h-1}^{\hpi}\left|(\widehat{P}_{h}-{P}_{h}) \widehat{V}^{\widehat{\pi}^\star}_{h}\right|\leq \sum_{h=t+1}^H \Gamma_{t+1:h-1}^{\hpi}\left(4\sqrt{\frac{\log(HSA/\delta)}{N}}\sqrt{\Var(\widehat{V}^{\widehat{\pi}^\star}_h)}+\frac{4(H-t)}{3N}\log(\frac{HSA}{\delta})\cdot\mathbf{1}\right)
	\]
	where $\Var({v}_t^\pi)\in\R^{SA}$ and $\Var({V}_t^\pi)(s_{t-1},a_{t-1})=\Var_{s_t}[{V}_t^\pi(\cdot)|s_{t-1},a_{t-1}]$ and $|\cdot|,\sqrt{\cdot}$ are point-wise operator.
\end{lemma}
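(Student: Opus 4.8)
The plan is to prove the bound entrywise and then reassemble it through the nonnegative matrices $\Gamma^{\hpi}_{t+1:h-1}$. Fixing $t$, for each $h\in\{t+1,\dots,H\}$ and each pair $(s_{h-1},a_{h-1})$ I would control the scalar $[(\widehat{P}_h-{P}_h)\widehat{V}^{\widehat{\pi}^\star}_h](s_{h-1},a_{h-1})$ by Bernstein's inequality (Lemma~\ref{lem:bernstein_ineq}), obtain the two-term bound with the right variance proxy, take a union bound over the at most $HSA$ triples $(h,s_{h-1},a_{h-1})$, and finally left-multiply by $\Gamma^{\hpi}_{t+1:h-1}$ and sum over $h$.

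The crux — and the step I expect to be the main obstacle — is justifying that $\widehat{V}^{\widehat{\pi}^\star}_h$ may be treated as a \emph{fixed} bounded vector when applying a concentration inequality to $\widehat{P}_h$. The key structural fact comes from the \emph{time-inhomogeneity} of the model together with the construction of $\mathcal{D}'$: the estimate $\widehat{P}_h(\cdot|s_{h-1},a_{h-1})$ is built only from the $N$ next-state observations recorded at time step $h-1$, whereas by the empirical Bellman optimality equations $\widehat{V}^{\widehat{\pi}^\star}_h=\widehat{V}^\star_h$ is a deterministic function of $\widehat{P}_{h+1},\dots,\widehat{P}_H$ and the rewards, hence of observations at time steps $\geq h$ only. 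Since in $\mathcal{D}'$ the next-state draws indexed by distinct time steps are mutually independent given their originating state-action pairs, $\widehat{P}_h$ is independent of $\widehat{V}^{\widehat{\pi}^\star}_h$. This is exactly why we concentrate on $\widehat{V}^{\widehat{\pi}^\star}_h$ rather than on $\widehat{V}^{\widehat{\pi}}_h$ for a generic $\widehat{\pi}\in\Pi_1$, which may depend on all of $\mathcal{D}'$ including $\widehat{P}_h$; the gap between the two is handled separately in Lemma~\ref{lem:epi_opt_bound}.

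Granting this independence, I would condition on the $\sigma$-field generated by the time-$(\geq h)$ data. Under this conditioning $\widehat{V}^{\widehat{\pi}^\star}_h$ is a fixed vector with $\|\widehat{V}^{\widehat{\pi}^\star}_h\|_\infty\leq H-h+1\leq H-t$, and $\widehat{P}_h(\cdot|s_{h-1},a_{h-1})=\frac1N\sum_{i=1}^N \mathbf{e}_{s_h^{(i)}}$ is an average of $N$ i.i.d.\ next-state indicators drawn from $P_h(\cdot|s_{h-1},a_{h-1})$. Consequently the scalar of interest is the centered average
\[
\frac1N\sum_{i=1}^N\Big(\widehat{V}^{\widehat{\pi}^\star}_h(s_h^{(i)})-\E[\widehat{V}^{\widehat{\pi}^\star}_h\,|\,s_{h-1},a_{h-1}]\Big),
\]
whose per-term range is at most $H-t$ and whose per-term variance is exactly $\Var(\widehat{V}^{\widehat{\pi}^\star}_h)(s_{h-1},a_{h-1})$. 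Bernstein then gives, with conditional failure probability $\delta/(HSA)$,
\[
\left|[(\widehat{P}_h-{P}_h)\widehat{V}^{\widehat{\pi}^\star}_h](s_{h-1},a_{h-1})\right|\leq 4\sqrt{\frac{\log(HSA/\delta)}{N}}\sqrt{\Var(\widehat{V}^{\widehat{\pi}^\star}_h)(s_{h-1},a_{h-1})}+\frac{4(H-t)}{3N}\log\!\Big(\frac{HSA}{\delta}\Big),
\]
where the constant $4$ comfortably absorbs the $\sqrt2$ and the rounding of the $2/3$ factor in Lemma~\ref{lem:bernstein_ineq}. Because this conditional bound holds for \emph{every} realization of the conditioning data, it holds unconditionally, so the union bound over the $HSA$ triples makes it simultaneous with probability $1-\delta$.

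The final step is purely mechanical. The displayed estimate reads, as an entrywise vector inequality, $\big|(\widehat{P}_h-{P}_h)\widehat{V}^{\widehat{\pi}^\star}_h\big|\leq 4\sqrt{\log(HSA/\delta)/N}\,\sqrt{\Var(\widehat{V}^{\widehat{\pi}^\star}_h)}+\frac{4(H-t)}{3N}\log(HSA/\delta)\cdot\mathbf{1}$. Since each $\Gamma^{\hpi}_{t+1:h-1}$ is a product of row-stochastic matrices and therefore has nonnegative entries, left-multiplication preserves the entrywise inequality; summing over $h=t+1,\dots,H$ yields the claim. The only bookkeeping to watch is that the range bound $\|\widehat{V}^{\widehat{\pi}^\star}_h\|_\infty\leq H-t$ holds uniformly over $h\geq t+1$, which is precisely what produces the $(H-t)$ appearing in the second term.
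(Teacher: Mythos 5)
Your proposal is correct and follows essentially the same route as the paper's proof: it hinges on the same key observation that $\widehat{P}_h$ (built from time-$(h-1)$ transitions in $\mathcal{D}'$) is independent of $\widehat{V}^{\widehat{\pi}^\star}_h$ (a function of time-$(\geq h)$ data via the empirical Bellman optimality equations), then applies Bernstein entrywise with a union bound over $(h,s_{h-1},a_{h-1})$ and reassembles through the nonnegative row-stochastic $\Gamma^{\hpi}_{t+1:h-1}$. Your write-up is in fact more explicit than the paper's about the conditioning step and why the conditional bound lifts to an unconditional one.
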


\begin{proof}[Proof of Lemma~\ref{lem:bern_bound}]
	The key point is to guarantee $\widehat{P}_h$ is independent of $\widehat{V}^{\widehat{\pi}^\star}_{h}$ so that we can apply Bernstein inequality w.r.t the randomness in $\widehat{P}_h$. In fact, note given $N$ all data pairs in $\mathcal{D}'$ are independent of each other,  and $\widehat{P}_h$ only uses data from $h-1$ to $h$. Moreover, $\widehat{V}^{\widehat{\pi}^\star}_{h}$ only uses data from time $h$ to $H$ since $\widehat{V}^\pi_h$ uses data from $h$ to $H$ by bellman equation \eqref{eqn:bellman} for any $\pi$ and optimal policy $\hpi^\star_{h:H}$ also only uses data from $h$ to $H$ by bellman optimality equation \eqref{eqn:bell_opt}.
	
	Then by Bernstein inequality (Lemma~\ref{lem:bernstein_ineq}), with probability $1-\delta$
	\[
	\left|(\widehat{P}_{h}-{P}_{h}) \widehat{V}^{\widehat{\pi}^\star}_{h}\right|(s_{t-1},a_{t-1})\leq 4\sqrt{\frac{\log(1/\delta)}{N}}\sqrt{\Var(\widehat{V}^{\widehat{\pi}^\star}_h)}(s_{t-1},a_{t-1})+\frac{4(H-t)}{3N}\log(\frac{1}{\delta})
	\]
	apply a union bound and take the sum we get the stated result.
\end{proof}

Now combine Lemma~\ref{lem:epi_opt_bound} and Lemma~\ref{lem:bern_bound} we obtain with probability $1-\delta$, for all $t=1,...,H-1$
\begin{equation}\label{eqn:diff_q}
\begin{aligned}
\left|\widehat{Q}^{\widehat{\pi}}_t-Q^{\widehat{\pi}}_t\right|&\leq \sum_{h=t+1}^H \Gamma_{t+1:h-1}^{\hpi}\left(4\sqrt{\frac{\log(HSA/\delta)}{N}}\sqrt{\Var(\widehat{V}^{\widehat{\pi}^\star}_h)}+\frac{4(H-t)}{3N}\log(\frac{HSA}{\delta})\cdot\mathbf{1}\right)\\
&+c_1\epsilon_{\text{opt}}\cdot\sqrt{\frac{H^2S^2\log(HSA/\delta)}{N}}\cdot\mathbf{1}\\
&\leq 4\sqrt{\frac{\log(HSA/\delta)}{N}}\sum_{h=t+1}^H\Gamma_{t+1:h-1}^{\hpi}\sqrt{\Var(\widehat{V}^{\widehat{\pi}^\star}_h)}+\frac{4H^2}{3N}\log(\frac{HSA}{\delta})\cdot\mathbf{1}\\
&+c_1\epsilon_{\text{opt}}\cdot\sqrt{\frac{H^2S^2\log(HSA/\delta)}{N}}\cdot\mathbf{1},\\
\end{aligned}
\end{equation}

Next note $\sqrt{\Var(\cdot)}$ is a norm, therefore by norm triangle inequality we have
\begin{align*}
\sqrt{\Var(\widehat{V}^{\widehat{\pi}^\star}_h)}&\leq \sqrt{\Var(\widehat{V}^{\widehat{\pi}^\star}_h-\widehat{V}^{\hpi}_h)} + \sqrt{\Var(\widehat{V}^{\hpi}_h-{V}^{\hpi}_h)}+\sqrt{\Var({V}^{\hpi}_h)}\\
&\leq \norm{\widehat{V}^{\widehat{\pi}^\star}_h-\widehat{V}^{\hpi}_h}_\infty\cdot\mathbf{1}+\norm{\widehat{V}^{\hpi}_h-{V}^{\hpi}_h}_\infty\cdot\mathbf{1}+\sqrt{\Var({V}^{\hpi}_h)}\\
&\leq \epsilon_{\text{opt}}\cdot\mathbf{1}+\norm{\widehat{Q}^{\hpi}_h-{Q}^{\hpi}_h}_\infty\cdot\mathbf{1}+\sqrt{\Var({V}^{\hpi}_h)}\\
\end{align*}

Plug this into \eqref{eqn:diff_q} to obtain
\begin{equation}\label{eqn:diff_q_1}
\begin{aligned}
	\left|\widehat{Q}^{\widehat{\pi}}_t-Q^{\widehat{\pi}}_t\right|
	&\leq 4\sqrt{\frac{\log(HSA/\delta)}{N}}\sum_{h=t+1}^H\left(\Gamma_{t+1:h-1}^{\hpi}\sqrt{\Var({V}^{\widehat{\pi}}_h)}+\norm{\widehat{Q}^{\hpi}_h-{Q}^{\hpi}_h}_\infty\cdot\mathbf{1}\right)+\frac{4H^2}{3N}\log(\frac{HSA}{\delta})\cdot\mathbf{1}\\
	&+c_2\epsilon_{\text{opt}}\cdot\sqrt{\frac{H^2S^2\log(HSA/\delta)}{N}}\cdot\mathbf{1}.\\
\end{aligned}
\end{equation}

Next lemma helps us to bound $\sum_{h=t+1}^H\Gamma_{t+1:h-1}^{\hpi}\sqrt{\Var({V}^{\widehat{\pi}}_h)}$.

\begin{lemma}\label{lem:H3}
A conditional version of Lemma~\ref{lem:H3_to_H2} holds:
			\begin{equation}\label{eqn:11}
			\begin{aligned}
			&\mathrm{Var}_\pi\left[\sum_{t=h}^H r^{(1)}_t\middle|s^{(1)}_h=s_h,a^{(1)}_h=a_h\right] = \sum_{t=h}^H \Big(\E_\pi\left[ \mathrm{Var}\left[r^{(1)}_t+V^\pi_{t+1}(s_{t+1}^{(1)}) \middle|s^{(1)}_t,a^{(1)}_t\right] \middle|s^{(1)}_h=s_h,a^{(1)}_h=a_h\right]\\
			&\quad +  \E_\pi\left[ \mathrm{Var}\left[  \E[r^{(1)}_t+V^\pi_{t+1}(s_{t+1}^{(1)}) | s^{(1)}_t, a^{(1)}_t]  \middle|s^{(1)}_t\right]\middle|s^{(1)}_h=s_h,a^{(1)}_h=a_h \right]\Big).
			\end{aligned}
			\end{equation}
			and by using \eqref{eqn:11} we can show
			\[
			\sum_{h=t+1}^H\Gamma_{t+1:h-1}^{\hpi}\sqrt{\Var({V}^{\widehat{\pi}}_h)}\leq \sqrt{(H-t)^3}\cdot\mathbf{1}.
			\]
\end{lemma}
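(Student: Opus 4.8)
The plan is to establish the two assertions of the lemma in order: first the conditional total-variance identity \eqref{eqn:11}, then the bound obtained from it via Jensen's inequality and Cauchy--Schwarz.

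For \eqref{eqn:11} I would simply repeat the argument behind the unconditional Lemma~\ref{lem:H3_to_H2}, carrying the conditioning event $\{s^{(1)}_h=s_h,\,a^{(1)}_h=a_h\}$ through every step. Writing $G_h=\sum_{t=h}^H r^{(1)}_t$ and applying the law of total variance one step at a time --- peeling off the randomness of $s^{(1)}_{t+1}$ given $(s^{(1)}_t,a^{(1)}_t)$ and then of $a^{(1)}_{t+1}$ given $s^{(1)}_{t+1}$ --- produces a Doob-type telescoping whose per-step increments are exactly $\mathrm{Var}[r^{(1)}_t+V^\pi_{t+1}(s^{(1)}_{t+1})\mid s^{(1)}_t,a^{(1)}_t]$ and $\mathrm{Var}[\,\E[r^{(1)}_t+V^\pi_{t+1}(s^{(1)}_{t+1})\mid s^{(1)}_t,a^{(1)}_t]\mid s^{(1)}_t]$. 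The Markov property and $V^\pi_{H+1}=0$ close the recursion and yield \eqref{eqn:11}; since the unconditional version is already available this is a routine re-derivation.

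For the bound, the key point is that $\Gamma^{\hpi}_{t+1:h-1}$ is row-stochastic and, acting on a vector indexed by $(s_{h-1},a_{h-1})$, realizes the conditional expectation $g\mapsto \E_{\hpi}[g(s_{h-1},a_{h-1})\mid s_t,a_t]$. Consequently, evaluated at $(s_t,a_t)$,
\[
\sum_{h=t+1}^H \Gamma^{\hpi}_{t+1:h-1}\Var(V^{\hpi}_h)=\sum_{t'=t}^{H-1}\E_{\hpi}\!\left[\mathrm{Var}[V^{\hpi}_{t'+1}(s^{(1)}_{t'+1})\mid s^{(1)}_{t'},a^{(1)}_{t'}]\,\middle|\,s_t,a_t\right],
\]
which (using $V^{\hpi}_{H+1}=0$, so the $t'=H$ term vanishes) is precisely the first sum in \eqref{eqn:11} applied with start index $t$. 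Discarding the manifestly non-negative second sum in \eqref{eqn:11} then bounds this quantity by $\mathrm{Var}_{\hpi}[\sum_{t'=t}^H r^{(1)}_{t'}\mid s_t,a_t]$. Since the reward is deterministic given $(s_t,a_t)$, this variance equals that of $\sum_{t'=t+1}^H r^{(1)}_{t'}\in[0,H-t]$ and is therefore at most $(H-t)^2/4\le (H-t)^2$, giving $\sum_{h=t+1}^H \Gamma^{\hpi}_{t+1:h-1}\Var(V^{\hpi}_h)\le (H-t)^2\cdot\mathbf{1}$ componentwise.

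It remains to take the square root outside the sum. Concavity of $\sqrt{\cdot}$ together with row-stochasticity of $\Gamma^{\hpi}_{t+1:h-1}$ gives, by Jensen, the componentwise inequality $\Gamma^{\hpi}_{t+1:h-1}\sqrt{\Var(V^{\hpi}_h)}\le \sqrt{\Gamma^{\hpi}_{t+1:h-1}\Var(V^{\hpi}_h)}$. Summing over the $H-t$ values of $h$ and applying Cauchy--Schwarz in the form $\sum\sqrt{u_h}\le\sqrt{(H-t)\sum u_h}$ (with $u_h\ge0$) yields
\[
\sum_{h=t+1}^H \Gamma^{\hpi}_{t+1:h-1}\sqrt{\Var(V^{\hpi}_h)}\le\sqrt{(H-t)\sum_{h=t+1}^H\Gamma^{\hpi}_{t+1:h-1}\Var(V^{\hpi}_h)}\le\sqrt{(H-t)^3}\cdot\mathbf{1},
\]
which is the claim. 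I expect the main obstacle to be the bookkeeping in the previous paragraph: reading $\Gamma^{\hpi}_{t+1:h-1}$ as a conditional expectation and matching the $\Gamma^{\hpi}$-weighted sum of one-step value variances term-by-term with the first sum of \eqref{eqn:11}. Everything downstream --- in particular the improvement of the final episode complexity from $H^4$ to $H^3$, which comes from replacing the naive bound $\sqrt{\Var(V^{\hpi}_h)}\le H\cdot\mathbf{1}$ by this $H^{3/2}$ total-variance bound --- hinges on that identification being exact.
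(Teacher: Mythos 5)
Your proposal is correct and follows essentially the same route as the paper: re-derive the total-variance identity with conditioning carried through, read $\Gamma^{\hpi}_{t+1:h-1}$ as a conditional expectation, pull the square root outside via Jensen/Cauchy--Schwarz, and bound the resulting $\Gamma$-weighted sum of one-step value variances by the conditional variance of the total return, which is at most $(H-t)^2$. The only cosmetic differences are that you invoke concavity of $\sqrt{\cdot}$ where the paper uses Cauchy--Schwarz directly, and you apply the variance identity before rather than after moving the sum inside the square root.
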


\begin{proof}
	The proof of \eqref{eqn:11} uses the identical trick as Lemma~\ref{lem:H3_to_H2} except the total law of variance is replaced by the total law of conditional variance.
	
	Moreover, recall $\Gamma_{t+1:h-1}^{\hpi}=\prod_{i=t+1}^{h-1}P^{\hpi}_i$ is the multi-step transition, so for any pair $(s_t,a_t)$,
	
	\begin{align*}
	&\sum_{h=t+1}^H\left(\Gamma_{t+1:h-1}^{\hpi}\sqrt{\Var({V}^{\widehat{\pi}}_h)}\right)(s_t,a_t)\\
	=&\sum_{h=t+1}^H\sum_{s_{h-1},a_{h-1}}\sqrt{\Var[{V}^{\widehat{\pi}}_h|s_{h-1},a_{h-1}]}d^{\hpi}_t(s_{h-1},a_{h-1}|s_t,a_t)\\
	=&\sum_{h=t+1}^H\sum_{s_{h-1},a_{h-1}}\sqrt{\Var[{V}^{\widehat{\pi}}_h|s_{h-1},a_{h-1}]d^{\hpi}_t(s_{h-1},a_{h-1}|s_t,a_t)}\cdot\sqrt{d^{\hpi}_t(s_{h-1},a_{h-1}|s_t,a_t)}\\
	\leq&\sum_{h=t+1}^H\sqrt{\sum_{s_{h-1},a_{h-1}}\Var[{V}^{\widehat{\pi}}_h|s_{h-1},a_{h-1}]d^{\hpi}_t(s_{h-1},a_{h-1}|s_t,a_t)\cdot\sum_{s_{h-1},a_{h-1}}d^{\hpi}_t(s_{h-1},a_{h-1}|s_t,a_t)}\\
	=&\sum_{h=t+1}^H\sqrt{\sum_{s_{h-1},a_{h-1}}\Var[{V}^{\widehat{\pi}}_h|s_{h-1},a_{h-1}]d^{\hpi}_t(s_{h-1},a_{h-1}|s_t,a_t)}\\
	=&\sum_{h=t+1}^H\sqrt{\E_{\widehat{\pi}}\bigg[\Var[{V}^{\widehat{\pi}}_h|s_{h-1}^{(1)},a_{h-1}^{(1)}]\bigg|s_t,a_t\bigg]}\\
	=&\sum_{h=t+1}^H\sqrt{1}\cdot\sqrt{\E_{\widehat{\pi}}\bigg[\Var[{V}^{\widehat{\pi}}_h|s_{h-1}^{(1)},a_{h-1}^{(1)}]\bigg|s_t,a_t\bigg]}\\
	\leq&\sqrt{(H-t)\sum_{h=t+1}^H\E_{\widehat{\pi}}\bigg[\Var[{V}^{\widehat{\pi}}_h|s_{h-1}^{(1)},a_{h-1}^{(1)}]\bigg|s_t,a_t\bigg]}\\
	\leq&\sqrt{(H-t)\cdot \mathrm{Var}_{\widehat{\pi}}\left[\sum_{h=t+1}^H r^{(1)}_h\middle|s^{(1)}_t=s_t,a^{(1)}_t=a_t\right] }\leq \sqrt{(H-t)^3}
	\end{align*}
	where all the inequalities are Cauchy-Schwarz inequalities.
\end{proof}

Apply Lemma~\ref{lem:H3} to bound \eqref{eqn:diff_q_1}, and use $\infty$ norm on both sides, we obtain

\begin{theorem}\label{thm:recursive_diff_q}
	Conditional on $N>0$, then with probability $1-\delta$, we have for all $t=1,...,H-1$
	\begin{align*}
	\norm{\widehat{Q}^{\widehat{\pi}}_t-Q^{\widehat{\pi}}_t}_\infty
	&\leq 4\sqrt{\frac{H^3\log(HSA/\delta)}{N}}+4\sqrt{\frac{\log(HSA/\delta)}{N}}\sum_{h=t+1}^H\norm{\widehat{Q}^{\hpi}_h-{Q}^{\hpi}_h}_\infty+\frac{4H^2}{3N}\log(\frac{HSA}{\delta})\\
	&+c_2\epsilon_{\text{opt}}\cdot\sqrt{\frac{H^2S^2\log(HSA/\delta)}{N}}.\\
	\end{align*}
\end{theorem}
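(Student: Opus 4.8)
The plan is to combine the already-established pointwise bound \eqref{eqn:diff_q_1} with the variance-aggregation estimate of Lemma~\ref{lem:H3}, and then pass to the $\infty$-norm. Everything is carried out conditional on a fixed $N>0$, which is precisely the regime in which both \eqref{eqn:diff_q_1} and Lemma~\ref{lem:H3} are valid, so the conditioning on $N$ propagates for free and no additional probabilistic argument is needed at this stage.

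First I would isolate the variance term on the right-hand side of \eqref{eqn:diff_q_1}. That bound contains exactly one policy-dependent vector, namely $\sum_{h=t+1}^H \Gamma_{t+1:h-1}^{\hpi}\sqrt{\Var(V^{\hpi}_h)}$; all the other contributions are already nonnegative scalar multiples of the all-ones vector $\mathbf{1}$ (the recursive term $\sum_{h=t+1}^H\norm{\widehat{Q}^{\hpi}_h-Q^{\hpi}_h}_\infty\cdot\mathbf{1}$, the $\frac{4H^2}{3N}\log(HSA/\delta)\cdot\mathbf{1}$ term, and the $\epsilon_{\text{opt}}$ term). Applying Lemma~\ref{lem:H3} replaces the variance vector by $\sqrt{(H-t)^3}\cdot\mathbf{1}$, and bounding $\sqrt{(H-t)^3}\leq\sqrt{H^3}$ converts it into $\sqrt{H^3}\cdot\mathbf{1}$ as well. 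After this substitution the full upper bound is a single nonnegative scalar multiple of $\mathbf{1}$ together with the recursive term.

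The final step is to take the $\infty$-norm of both sides. Since \eqref{eqn:diff_q_1} is an entrywise vector inequality of the form $|\widehat{Q}^{\hpi}_t-Q^{\hpi}_t|\leq c\cdot\mathbf{1}$ with $c\geq 0$ a scalar, and $\norm{\mathbf{1}}_\infty=1$, the $\infty$-norm of the right-hand side is exactly $c$; collecting the four contributions yields the stated inequality verbatim. The only point requiring a line of care is that the recursive term already consists of scalars times $\mathbf{1}$, so it survives the $\infty$-norm unchanged — this is legitimate because the row-stochasticity of each $\Gamma_{t+1:h-1}^{\hpi}$ was already exploited in deriving \eqref{eqn:diff_q_1} to collapse the leading transition factors.

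Since the substantive analytic work — the Bernstein control of $(\widehat{P}_h-P_h)\widehat{V}^{\hpi^\star}_h$ via the independence of $\widehat{P}_h$ and $\widehat{V}^{\hpi^\star}_h$ (Lemma~\ref{lem:bern_bound}), the $\epsilon_{\text{opt}}$-perturbation estimate (Lemma~\ref{lem:epi_opt_bound}), and the Cauchy--Schwarz variance aggregation (Lemma~\ref{lem:H3}) — is all discharged upstream, there is essentially no remaining obstacle. The only thing to watch is the bookkeeping of failure probabilities: the high-probability events of Lemma~\ref{lem:bern_bound} and Lemma~\ref{lem:epi_opt_bound} must be intersected so that \eqref{eqn:diff_q} (hence \eqref{eqn:diff_q_1}) holds with probability $1-\delta$, which is already how those statements are arranged, so the present theorem inherits the same $1-\delta$ guarantee without a further union bound.
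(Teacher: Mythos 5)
Your proposal is correct and follows essentially the same route as the paper: the paper obtains Theorem~\ref{thm:recursive_diff_q} in exactly this way, by applying the deterministic variance-aggregation bound of Lemma~\ref{lem:H3} to the term $\sum_{h=t+1}^H \Gamma_{t+1:h-1}^{\hpi}\sqrt{\Var(V^{\hpi}_h)}$ in \eqref{eqn:diff_q_1}, bounding $\sqrt{(H-t)^3}\leq\sqrt{H^3}$, and then taking the $\infty$-norm of the resulting entrywise inequality, with the $1-\delta$ guarantee inherited from Lemmas~\ref{lem:epi_opt_bound} and~\ref{lem:bern_bound}. Your bookkeeping of the conditioning on $N$ and of the failure probabilities matches the paper's treatment.
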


 Then by using backward induction and Theorem~\ref{thm:recursive_diff_q}, we have  the following:
 
 \begin{theorem}\label{thm:local_uni}
 	
Suppose $N\geq 64H^2\cdot\log(HSA/\delta)$ and $\epsilon_{\text{opt}}\leq \sqrt{H}/S$, then we have with probability $1-\delta$, 
\[
\norm{\widehat{Q}^{\widehat{\pi}}_1-Q^{\widehat{\pi}}_1}_\infty
\leq 2(9+c_2)\sqrt{\frac{H^3\log(HSA/\delta)}{N}}
\]
where $c_2$ is the same constant in Theorem~\ref{thm:recursive_diff_q}.
 \end{theorem}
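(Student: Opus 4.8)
The plan is to read Theorem~\ref{thm:recursive_diff_q} as a backward discrete Grönwall inequality for the scalar sequence $a_t := \norm{\widehat{Q}^{\widehat{\pi}}_t - Q^{\widehat{\pi}}_t}_\infty$ and to solve it by backward induction from $t=H$ down to $t=1$, working on the single high-probability event on which that theorem holds. Writing $c := \sqrt{H^3\log(HSA/\delta)/N}$ and $\beta := 4\sqrt{\log(HSA/\delta)/N}$, the recursion reads $a_t \le K + \beta\sum_{h=t+1}^H a_h$ for $t = 1,\dots,H-1$, where $K$ collects the three $\widehat{\pi}$-independent additive terms. Once we are on the good event the argument is entirely deterministic, so no further probabilistic analysis is required.

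First I would absorb the two hypotheses into clean constants. From $N \ge 64H^2\log(HSA/\delta)$ we get $\sqrt{\log(HSA/\delta)/N} \le 1/(8H)$, hence $\beta \le 1/(2H)$, and in particular the contraction factor satisfies $\beta(H-t) \le \beta H \le 1/2$. The same bound controls the lower-order additive term, since $\frac{4H^2}{3N}\log(HSA/\delta) = \frac{4}{3}\sqrt{\tfrac{H\log(HSA/\delta)}{N}}\cdot c \le \frac16 c$. For the mismatch term, $\epsilon_{\text{opt}} \le \sqrt{H}/S$ gives $c_2\epsilon_{\text{opt}}\sqrt{H^2S^2\log(HSA/\delta)/N} = c_2\epsilon_{\text{opt}}\, HS\sqrt{\log(HSA/\delta)/N} \le c_2 c$. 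Collecting these three contributions yields $K \le (4 + \tfrac16 + c_2)\,c$.

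Then I would run the backward induction with hypothesis $a_h \le 2K$ for all $h > t$. The base case is $a_H = 0$: since this section works with deterministic rewards and $V^\pi_{H+1}\equiv 0$, on the event $N>0$ every pair is sampled so $\widehat{r}_H = r_H$, whence $\widehat{Q}^{\widehat{\pi}}_H = r_H = Q^{\widehat{\pi}}_H$. For the inductive step, $a_t \le K + \beta\sum_{h=t+1}^H a_h \le K + \beta(H-t)\cdot 2K \le K + \tfrac12\cdot 2K = 2K$, which closes the induction. Taking $t=1$ gives $a_1 \le 2K \le 2(4+\tfrac16+c_2)c \le 2(9+c_2)c$, the claimed bound.

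The only real subtlety — and the reason the two hypotheses are stated exactly as they are — is forcing the recursion to contract: the sample-size condition is calibrated precisely so that $\beta H \le 1/2$, which is what prevents the geometric accumulation of the $H$ per-step errors from inflating the target $\sqrt{H^3/N}$ rate, while the condition $\epsilon_{\text{opt}}\le\sqrt{H}/S$ is exactly what keeps the policy-mismatch term at the same $\sqrt{H^3/N}$ order instead of a larger $S$-dependent one. Everything else is constant bookkeeping, so I expect no genuine obstacle beyond tracking these two thresholds carefully.
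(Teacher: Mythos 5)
Your proposal is correct and follows essentially the same route as the paper: condition on the single high-probability event of Theorem~\ref{thm:recursive_diff_q}, use the hypotheses $N\geq 64H^2\log(HSA/\delta)$ and $\epsilon_{\text{opt}}\leq\sqrt{H}/S$ to reduce all additive terms to multiples of $\sqrt{H^3\log(HSA/\delta)/N}$ and to make the recursion contract with factor $\leq 1/2$, then close a backward induction with hypothesis $a_h\leq 2K$. The only cosmetic difference is that the paper bounds $\norm{\widehat{Q}^{\hpi}_H-Q^{\hpi}_H}_\infty$ trivially by $H$ in the base case rather than by $0$, and tracks slightly looser constants, neither of which affects the result.
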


\begin{proof}
	Under the condition, by Theorem~\ref{thm:recursive_diff_q} it is easy to check for all $t=1,...,H-1$ with probability $1-\delta$,
		\begin{align*}
	\norm{\widehat{Q}^{\widehat{\pi}}_t-Q^{\widehat{\pi}}_t}_\infty
	&\leq (5+c_2)\sqrt{\frac{H^3\log(HSA/\delta)}{N}}+4\sqrt{\frac{\log(HSA/\delta)}{N}}\sum_{h=t+1}^H\norm{\widehat{Q}^{\hpi}_h-{Q}^{\hpi}_h}_\infty,
	\end{align*}
	which we conditional on.
	
	For $t=H-1$, we have 
		\begin{align*}
	\norm{\widehat{Q}^{\widehat{\pi}}_{H-1}-Q^{\widehat{\pi}}_{H-1}}_\infty
	\leq& (5+c_2)\sqrt{\frac{H^3\log(HSA/\delta)}{N}}+4\sqrt{\frac{\log(HSA/\delta)}{N}}\norm{\widehat{Q}^{\hpi}_H-{Q}^{\hpi}_H}_\infty\\
	\leq&(5+c_2)\sqrt{\frac{H^3\log(HSA/\delta)}{N}}+4\sqrt{\frac{H^2\log(HSA/\delta)}{N}}\\
	\leq&(9+c_2)\sqrt{\frac{H^3\log(HSA/\delta)}{N}}
	\end{align*}
	Suppose $\norm{\widehat{Q}^{\widehat{\pi}}_h-Q^{\widehat{\pi}}_h}_\infty
	\leq 2(9+c_2)\sqrt{\frac{H^3\log(HSA/\delta)}{N}}$ holds for all $h=t+1,...,H$, then for $h=t$, we have 
	\begin{align*}
	\norm{\widehat{Q}^{\widehat{\pi}}_t-Q^{\widehat{\pi}}_t}_\infty
	&\leq (5+c_2)\sqrt{\frac{H^3\log(HSA/\delta)}{N}}+4\sqrt{\frac{\log(HSA/\delta)}{N}}\sum_{h=t+1}^H\norm{\widehat{Q}^{\hpi}_h-{Q}^{\hpi}_h}_\infty\\
	&\leq (9+c_2)\sqrt{\frac{H^3\log(HSA/\delta)}{N}}+4\sqrt{\frac{(H-1)^2\log(HSA/\delta)}{N}}\cdot 2(9+c_2)\sqrt{\frac{H^3\log(HSA/\delta)}{N}}\\
	&\leq 2 (9+c_2)\sqrt{\frac{H^3\log(HSA/\delta)}{N}}
	\end{align*}
	where the last line uses the condition $N\geq 64H^2\cdot\log(HSA/\delta)$. By induction, we have the result.
\end{proof}

\begin{proof}[Proof of Theorem~\ref{thm:local_uni_opt}]
By Theorem~\ref{thm:local_uni} we have for $N\geq c\cdot H^2\cdot\log(HSA/\delta)$, 
\[
\P\left(\norm{\widehat{Q}^{\widehat{\pi}}_1-Q^{\widehat{\pi}}_1}_\infty
\geq 2(9+c_2)\sqrt{\frac{H^3\log(HSA/\delta)}{N}}\middle|N\right)\leq \delta
\]	
The only thing left is to use Lemma~\ref{lem:sufficient_sample} to bound the event that $\{N<nd_m/2\}$ has small probability.

Last but not least, the condition $n>c_1H^2\log(HSA/\delta)/d_m$ is sufficient for applying Lemma~\ref{lem:sufficient_sample} and it also implies $N\geq c\cdot H^2\cdot\log(HSA/\delta)$ (the condition of Theorem~\ref{thm:local_uni}) when $N\geq nd_m/2$ since:
\[
n>c_1H^2\log(HSA/\delta)/d_m \Rightarrow nd_m/2\geq c_2 H^2\log(HSA/\delta)
\]
which implies $N\geq c_2\cdot H^2\cdot\log(HSA/\delta)$ when $N\geq nd_m/2$.
\end{proof}

\section{Proof of uniform convergence lower bound.}\label{sec:lower}

In this section we prove a uniform convergence OPE lower bound of $\Omega(H^3/d_m\epsilon^2)$. Conceptually, uniform convergence lower bound can be derived by a reduction to the lower bound of identifying the $\epsilon$-optimal policy. There are quite a few literature that provide information theoretical lower bounds in different setting, \textit{e.g.} \cite{dann2015sample,jiang2017contextual,krishnamurthy2016pac,jin2018q,sidford2018near}. However, to the best of our knowledge, there is no result proven for the non-stationary transition finite horizon episodic setting with bounded rewards. For example, \cite{sidford2018near} prove the result sample complexity lower bound of $\Omega(H^3SA/\epsilon^2)$ with stationary MDP and their proof cannot be directly applied to non-stationary setting as they reduce the problem to infinite horizon discounted setting which always has stationary transitions. \cite{dann2015sample} prove the episode complexity of $\widetilde{\Omega}(H^2SA/\epsilon^2)$ for the stationary transition setting. \cite{jin2018q} prove the $\Omega(\sqrt{H^2SAT})$ regret lower bound for non-stationary finite horizon online setting but it is not clear how to translate the regret to PAC-learning setting by keeping the same sample complexity optimality. \cite{jiang2017contextual} prove the $\Omega(HSA/\epsilon^2)$ lower bound for the non-stationary finite horizon offline episodic setting where they assume $\sum_{i=1}^Hr_i\leq 1$ and this is also different from our setting since we have $0\leq r_t\leq 1$ for each time step.

Our proof consists of three steps. \textbf{1.} We will first show a minimax lower bound (\textbf{over all MDP instances}) for learning $\epsilon$-optimal policy is $\Omega(H^3SA/\epsilon^2)$; \textbf{2.} Based on 1, we can further show a minimax lower bound (\textbf{over problem class $\mathcal{M}_{d_m}$}) for learning $\epsilon$-optimal policy is $\Omega(H^3/d_m\epsilon^2)$; \textbf{3.} prove the uniform convergence OPE lower bound of the same rate.

\subsection{Information theoretical lower sample complexity bound over all MDP instances for identifying $\epsilon$-optimal policy.}  

In fact, a modified construction of Theorem~5 in \cite{jiang2017contextual} is our tool for obtaining $\Omega(H^3SA/\epsilon^2)$ lower bound. We can get the additional $H^2$ factor by using $\sum_{i=1}^Hr_i$ can be of order $O(H)$. 

 \begin{theorem}\label{thm:learn_low_bound_all}
 	Given $H\geq 2$, $A\geq2$, $0<\epsilon<\frac{1}{48\sqrt{8}}$ and $S\geq c_1$ where $c_1$ is a universal constant. Then there exists another universal constant $c$ such that for any algorithm and any $n\leq cH^3SA/\epsilon^2$, there exists a non-stationary $H$ horizon MDP with probability at least $1/12$, the algorithm outputs a policy $\hpi$ 
	with $v^\star- v^{\hpi}\geq\epsilon $. 
\end{theorem}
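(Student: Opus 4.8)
The plan is to exhibit a family of hard non-stationary MDPs by modifying the construction of \citet[Theorem~5]{jiang2017contextual}, and to reduce the task of outputting an $\epsilon$-optimal policy to solving many best-arm-identification problems, for which the bandit lower bound (Lemma~\ref{lem:bandit}) applies. Concretely, I would embed $\Theta(HS)$ independent $A$-armed bandit instances, indexed by a (state, decision-time) slot $(s,t)$ with $t$ confined to the first half of the horizon. Each instance has a hidden optimal arm $a^\star_{s,t}$, drawn independently and uniformly at random (this draw defines the prior over which expectations are taken). The dynamics are arranged so that, through action-independent routing transitions, every episode is sent to exactly one slot $(s,t^\star)$, spread roughly uniformly over the $\Theta(HS)$ slots; there the learner selects one action (a ``pull''), which sends the chain to an absorbing state $G$ with probability $\tfrac12+\tau\,\mathbf{1}[a=a^\star_{s,t^\star}]$ and otherwise to an absorbing state $B$. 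The single change relative to \citet{jiang2017contextual} is the reward scale: instead of forcing $\sum_t r_t\le 1$, I let $G$ pay reward $1$ at every remaining step and $B$ pay $0$, so $\sum_t r_t=\Theta(H)$, which Assumption~\ref{assume1} permits.

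First I would express the suboptimality in terms of the number $w$ of slots on which $\hpi$ picks a wrong arm. Since reaching $G$ from slot $(s,t^\star)$ yields value $\Theta(H)$ (reward $1$ over the $\Theta(H)$ remaining steps), a single wrong arm costs $\Theta(\tau H)$; averaging over the $\Theta(HS)$ equiprobable slots gives $v^\star-v^{\hpi}\ge \Omega(\tfrac{\tau}{S})\,w$. Choosing $\tau=\Theta(\epsilon/H)$, being wrong on a constant fraction of slots already forces $v^\star-v^{\hpi}\ge\epsilon$; note $\tau\le\sqrt{1/8}$ is guaranteed by the hypothesis $\epsilon<1/(48\sqrt{8})$, which is precisely the regime Lemma~\ref{lem:bandit} requires. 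Next I would invoke the sample budget: each episode supplies exactly one pull, so $n$ episodes supply $n$ pulls, and by pigeonhole at most $72n\tau^2/A$ of the $\Theta(HS)$ slots can receive the $A/(72\tau^2)$ pulls that Lemma~\ref{lem:bandit} demands for reliable identification. Plugging in $\tau=\Theta(\epsilon/H)$ and $n\le c\,H^3SA/\epsilon^2$ makes this count a small constant fraction of $HS$ once $c$ is small; hence a constant fraction of slots are under-sampled, and on each the induced arm is wrong with probability $\ge 1/3$. An averaging/Markov argument over the random hidden arms then yields $w=\Omega(HS)$, hence $v^\star-v^{\hpi}\ge\epsilon$, with probability at least $1/12$.

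It is worth isolating where each factor of $H$ comes from, since that is the heart of the improvement. The base factor $H$ (already in \citet{jiang2017contextual}) comes from spreading the single informative pull per episode over $\Theta(HS)$ slots rather than $\Theta(S)$, diluting the per-slot pull count by $1/H$. The extra $H^2$ — the new content — comes from the reward rescaling: amplifying each decision's value to $\Theta(\tau H)$ forces the identification gap down to $\tau=\Theta(\epsilon/H)$, inflating the per-slot requirement $A/\tau^2$ by $H^2$. Multiplying, $n\gtrsim (\#\text{slots})\cdot A/\tau^2=\Theta(HS)\cdot\Theta(AH^2/\epsilon^2)=\Theta(H^3SA/\epsilon^2)$, which is the claimed bound.

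I expect the main obstacle to be making the multi-bandit reduction rigorous under adaptivity: a learner may allocate pulls across slots adaptively, using data from some slots to decide how to act at others, so the slots are not literally independent experiments. The remedy I would follow is to fix the hidden arms as independent uniform draws and argue against the Bayes-averaged environment: conditioned on the realized per-slot pull counts, the posterior over each slot's optimal arm depends only on that slot's own pulls, so the per-slot error probability obeys Lemma~\ref{lem:bandit} regardless of cross-slot adaptivity, and the budget pigeonhole controls the counts. The remaining bookkeeping — verifying the transitions realize a (roughly) uniform routing over $(s,t)$ while keeping every per-step reward in $[0,1]$ and the transitions non-stationary, and checking the constants $c_1,c$ and $p=1/12$ against the stated ranges of $\epsilon,H,S$ — is routine but must be handled with care, in particular confining decision times to the first half of the horizon so that the amplification $H-t^\star=\Theta(H)$ genuinely holds for every realized decision time.
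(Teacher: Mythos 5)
Your proposal is correct and follows essentially the same route as the paper's proof: modify the hard MDP family of \citet{jiang2017contextual} to embed $\Theta(HS)$ independent $A$-armed best-arm-identification problems, amplify the terminal reward to $\Theta(H)$ so that the per-pull identification gap shrinks to $\Theta(\epsilon/H)$ (this is exactly where the extra $H^2$ enters), and conclude via Lemma~\ref{lem:bandit} plus a pigeonhole and Markov argument. The only differences are presentational — you place the $1/H$ dilution in a uniform routing to a single decision slot and set $\tau=\Theta(\epsilon/H)$, whereas the paper keeps $\tau=\Theta(\epsilon)$ and realizes the same dilution through a per-level resolution probability of $1/H$ in a drifting bandit chain (and your explicit Bayes-averaging treatment of adaptivity is a welcome, slightly more careful rendering of the independence claim the paper asserts).
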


Like in \cite{jiang2017contextual}, the proof relies on embedding $\Theta(HS)$ independent multi-arm bandit problems into a hard-to-learn MDP so that any algorithm that wants to output a near-optimal policy needs to identify the best action in $\Omega(HS)$ problems. However, in our construction we make a further modification of \cite{jiang2017contextual} so that there is \textbf{no} waiting states, which is crucial for the reduction from offline family. We also double the length of the hard-to-learn MDP instance so that the latter half uses a ``naive'' copy construction which is uninformative. The uninformative extension will help to produce the additional $H^2$ factor. 

\begin{proof}[Proof of Theorem~\ref{thm:learn_low_bound}]
	We construct a non-stationary MDP with $S$ states per level, $A$ actions per state and has horizon $2H$. At each time step, states are categorized into four types with two special states $g_h$, $b_h$ and the remaining $S-2$ ``bandit'' states denoted by $s_{h,i}$, $i\in[S-2]$. Each bandit state has an unknown best action $a^\star_{h,i}$ that provides the highest expected reward comparing to other actions.
	
	\begin{figure}[H]
		\centering     
		\includegraphics[width=0.6\linewidth]{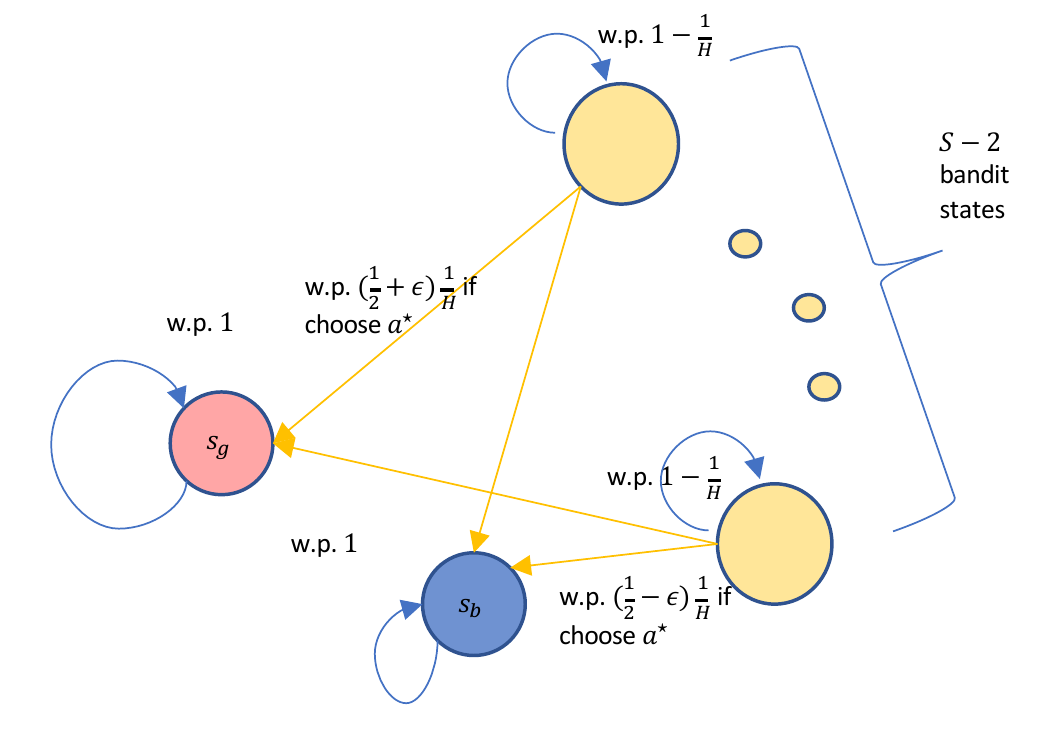}
		\caption{An illustration of State-space transition diagram}
		\label{fig:ill}
	\end{figure}
	
	The transition dynamics are defined as follows:
	\begin{itemize}
		
		\item for $h=1,...,H-1$, 
			\begin{itemize}
				\item For bandit states $b_{h,i}$, there is probability $1-\frac{1}{H}$ to transition to $b_{h+1,i}$ regardless of the action chosen. For the rest of $\frac{1}{H}$ probability, optimal action $a^\star_{h,i}$ will have probability $\frac{1}{2}+\tau$ or $\frac{1}{2}-\tau$ transition to $g_{h+1}$ or $b_{h+1}$ and all other actions $a$ will have equal probability $\frac{1}{2}$ for either $g_{h+1}$ or $b_{h+1}$, where $\tau$ is a parameter will be decided later. Or equivalently,
				
				\[
				\P(\cdot|s_{h,i},a^\star_{h,i})=\begin{cases} 1-\frac{1}{H}\quad &\text{if}\;\cdot=s_{h+1,i}\\
				(\frac{1}{2}+\tau)\cdot\frac{1}{H}\quad &\text{if}\;\cdot=g_{h+1}\\
				(\frac{1}{2}-\tau)\cdot\frac{1}{H} \quad &\text{if}\;\cdot=b_{h+1}\end{cases}\quad 	\P(\cdot|s_{h,i},a)=\begin{cases} 1-\frac{1}{H}\quad &\text{if}\;\cdot=s_{h+1,i}\\
				\frac{1}{2}\cdot\frac{1}{H}\quad &\text{if}\;\cdot=g_{h+1}\\
				\frac{1}{2}\cdot\frac{1}{H} \quad &\text{if}\;\cdot=b_{h+1}\end{cases}
				\] 
				\item $g_h$ always transitions to $g_{h+1}$ and $b_h$ always transitions to $b_{h+1}$, \textit{i.e.} for all $a\in\mathcal{A}$, we have
				\[
				\P(g_{h+1}|g_h,a)=1,\quad \P(b_{h+1}|b_h,a)=1.
				\]
				We will determine parameter $\tau$ at the end of the proof.

			\end{itemize}

		\item for $h=H,...,2H-1$, all states will always transition to the same type of states for the next step, \textit{i.e.} $\forall a\in\mathcal{A}$,
		\begin{equation}\label{eqn:MDP_latter}
		\P(g_{h+1}|g_h,a)=\P(b_{h+1}|b_h,a)=\P(s_{h+1,i}|s_{h,i},a)=1,\;\forall i\in[S-2].
		\end{equation}

	\item The initial distribution is decided by:
	\begin{equation}\label{eqn:initial_c_MDP}
	\P(s_{1,i})=\frac{1}{S},\; \forall i\in[S-2],\;	\P(g_{1})=\frac{1}{S},\;\;\P(b_1)=\frac{1}{S}
	\end{equation}
		
	\item State $s$ will receives reward $1$ if and only if $s=g_h$ and $h\geq H$. The reward at all other states is zero.
	\end{itemize}

 By this construction the optimal policy must take $a^\star_{h,i}$ for each bandit state $s_{h,i}$ for at least the first half of the MDP, \textit{i.e.} need to take $a^\star_{h,i}$ for $h\leq H$. In other words, this construction embeds at least $H(S-2)$ independent best arm identification problems that are identical to the stochastic multi-arm bandit problem in Lemma~\ref{lem:bandit} into the MDP. \textbf{Note the key innovation here is that we can remove the waiting states used in \citet{jiang2017contextual} but still keep the multi-arm bandit problem independent!}\footnote{Here independence means solving one bandit problem provides no information on other bandit problems.} 

Notice in our construction, for any bandit state $s_{h,i}$ with $h\leq H$, the difference of the expected reward between optimal action $a_{h,i}^\star$ and other actions is:
\begin{equation}\label{eqn:diff_reward}
\begin{aligned}
&(\frac{1}{2}+\tau)\cdot\frac{1}{H}\cdot \E[r_{{(h+1)}:2H}|g_{h+1}]+(\frac{1}{2}-\tau)\cdot\frac{1}{H}\cdot \E[r_{{(h+1)}:2H}|b_{h+1}]+(1-\frac{1}{H})\cdot \E[r_{{(h+1)}:2H}|s_{h+1,i}]\\
&-\frac{1}{2H}\cdot \E[r_{{(h+1)}:2H}|g_{h+1}]-\frac{1}{2H}\cdot \E[r_{{(h+1)}:2H}|b_{h+1}]-(1-\frac{1}{H})\cdot \E[r_{{(h+1)}:2H}|s_{h+1,i}]\\
=&(\frac{1}{2}+\tau)\cdot\frac{1}{H}\cdot \E[r_{{(h+1)}:2H}|g_{h+1}]+(\frac{1}{2}-\tau)\cdot\frac{1}{H}\cdot \E[r_{{(h+1)}:2H}|b_{h+1}]\\
&-\frac{1}{2H}\cdot \E[r_{{(h+1)}:2H}|g_{h+1}]-\frac{1}{2H}\cdot \E[r_{{(h+1)}:2H}|b_{h+1}]\\
=&(\frac{1}{2}+\tau)\frac{1}{H}\cdot H+(\frac{1}{2}-\tau)\frac{1}{H}\cdot 0-\frac{1}{2H}\cdot H+\frac{1}{2H}\cdot 0=\tau 
\end{aligned}
\end{equation}
so it seems by Lemma~\ref{lem:bandit} one suffices to use the least possible $\frac{A}{72(\tau)^2}$ samples to identify the best action $a_{h,i}^\star$. However, note the construction of the latter half of the MDP \eqref{eqn:MDP_latter} uses mindless reproduction of previous steps and therefore provides no additional information about the best action once the state at time $H$ is known. In other words, observing $\sum_{t=1}^{2H}r_t=H$ is equivalent as observing $\sum_{t=1}^{H}r_t=1$. Therefore, for the bandit states in the first half the samples that provide information for identifying the best arm is up to time $H$. As a result, the difference of the expected reward between optimal action $a_{h,i}^\star$ and other action for identifying the best arm should be corrected as: 
\begin{align*}
	&(\frac{1}{2}+\tau)\cdot\frac{1}{H}\cdot \E[r_{{(h+1)}:H}|g_{h+1}]+(\frac{1}{2}-\tau)\cdot\frac{1}{H}\cdot \E[r_{{(h+1)}:H}|b_{h+1}]+(1-\frac{1}{H})\cdot \E[r_{{(h+1)}:H}|s_{h+1,i}]\\
	&-\frac{1}{2H}\cdot \E[r_{{(h+1)}:H}|g_{h+1}]-\frac{1}{2H}\cdot \E[r_{{(h+1)}:H}|b_{h+1}]-(1-\frac{1}{H})\cdot \E[r_{{(h+1)}:H}|s_{h+1,i}]\\
	=&(\frac{1}{2}+\tau)\cdot\frac{1}{H}\cdot \E[r_{{(h+1)}:H}|g_{h+1}]+(\frac{1}{2}-\tau)\cdot\frac{1}{H}\cdot \E[r_{{(h+1)}:H}|b_{h+1}]\\
	&-\frac{1}{2H}\cdot \E[r_{{(h+1)}:H}|g_{h+1}]-\frac{1}{2H}\cdot \E[r_{{(h+1)}:H}|b_{h+1}]\\
	=&(\frac{1}{2}+\tau)\frac{1}{H}\cdot 1+(\frac{1}{2}-\tau)\frac{1}{H}\cdot 0-\frac{1}{2H}\cdot 1+\frac{1}{2H}\cdot 0= \frac{\tau}{H}
\end{align*}
Now by Lemma~\ref{lem:bandit}, for each bandit state $s_{h,i}$ satisfying $h\leq H$,
unless $\frac{A}{72(\tau/H)^2}$ samples are collected from that state, the learning algorithm fails to identify the optimal action $a^\star_{h,i}$ with probability at least $1/3$.

After running any algorithm, let $C$ be the set of $(h,s)$ pairs for which the algorithm identifies
the correct action. Let $D$ be the set of $(h, s)$ pairs for which the algorithm collects fewer than $\frac{A}{72(\tau/H)^2}$ samples. Then by Lemma~\ref{lem:bandit} we have
\begin{align*}
\E[|C|]&=\E\left[\sum_{(h,s)}\mathds{1}[a_{h,s}=a_{h,s}^\star]\right]\leq ((S-2)H-|D|)+\E\left[\sum_{(h,s)\in D}\mathds{1}[a_{h,s}=a_{h,s}^\star]\right]\\
&\leq ((S-2)H-|D|)+\frac{2}{3}|D|=(S-2)H-\frac{1}{3}|D|.
\end{align*}

If we have $n\leq \frac{H(S-2)}{2}\times\frac{A}{72(\tau/H)^2}$, by pigeonhole principle the algorithm can collect $\frac{A}{72(\tau/H)^2}$ samples for at most half of the bandit problems, \textit{i.e.} $|D|\geq H(S-2)/2$. Therefore we have
\[
\E[|C|]\leq (S-2)H-\frac{1}{3}|D|\leq \frac{5}{6}(S-2)H.
\]
Then by Markov inequality 
\[
\P\left[|C|\geq \frac{11}{12}H(S-2)\right]\leq \frac{5/6}{11/12}=\frac{10}{11}
\]
so the algorithm failed to identify the optimal action on 1/12 fraction of the bandit problems with probability at least $1/11$. Note for each failure in identification, the reward is differ by $\tau $ (see \eqref{eqn:diff_reward}), therefore under the event $\{|C'|\geq \frac{1}{12}H(S-2)\}$, following the similar calculation of \cite{jiang2017contextual} the suboptimality of the policy produced by the algorithm is

\begin{align*}
\epsilon:=v^\star-v^{\widehat{\pi}}&=\P[\text{visit} \;C']\times \tau +\P[\text{visit} \;C]\times 0=\P[\bigcup_{(h,i)\in C'}\text{visit}(h,i)]\times\tau \\
&=\sum_{(h,i)\in C'} \P[\text{visit}(h,i)]\times\tau =\sum_{(h,i)\in C'} \frac{1}{HS}(1-1/H)^{h-1}\tau \\
&\geq \sum_{(h,i)\in C'} \frac{1}{HS}(1-1/H)^{H}\tau \geq \sum_{(h,i)\in C'} \frac{1}{HS}\frac{1}{4}\tau \\
&\geq \frac{H(S-2)}{12}\frac{1}{HS}\frac{1}{4}\tau =c_1\frac{\tau }{48}.
\end{align*}
where the third equal sign uses all best arm identification problems are independent. Now we set $\tau=\min(\sqrt{1/8},48\epsilon/c_1)$ and under condition $n\leq cH^3SA/\epsilon^2$, we have
\[
n\leq cH^3SA/\epsilon^2\leq c48^2H^3SA/\tau^2=c{48^2\cdot 72} HS\cdot \frac{A}{72(\tau/H)^2}:=c'HS\cdot \frac{A}{72\tau^2}\leq \frac{H(S-2)}{2}\cdot \frac{A}{72\tau^2},
\]
the last inequality holds as long as $S\geq 2/(1-2c')$. Therefore in this situation, with probability at least $1/11$, $v^\star-v^{\widehat{\pi}}\geq \epsilon$. Finally, we can use scaling to reduce the horizon from $2H$ to $H$.
	
\end{proof}

\subsection{Information theoretical lower sample complexity bound over problems in $\mathcal{M}_{d_m}$ for identifying $\epsilon$-optimal policy.}  

	For all $0<d_m\leq\frac{1}{SA}$, let the class of problems be

	$$\mathcal{M}_{d_m} :=\big\{(\mu,M) \; \big| \;\min_{t,s_t,a_t} d_t^\mu(s_t,a_t) \newline \geq d_m\big\},$$
now we consider deriving minimax lower bound over this class.

\begin{theorem}\label{thm:learn_low_bound}
	Under the same condition of Theorem~\ref{thm:learn_low_bound_all}. In addition assume $0<d_m\leq\frac{1}{SA}$. There exists another universal constant $c$ such that when $n\leq cH^3/d_m\epsilon^2$, we always have 
	\[
\inf_{{v}^{\pi_{alg}}}\sup_{(\mu,M)\in\mathcal{M}_{d_m}}\P_{\mu,M}\left(v^*-v^{\pi_{alg}}\geq \epsilon\right)\geq p.
\]
\end{theorem}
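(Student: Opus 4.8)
The plan is to prove Theorem~\ref{thm:learn_low_bound} by a reduction from the all-instances bound of Theorem~\ref{thm:learn_low_bound_all}, the central idea being to \emph{decouple} the rate at which the logging policy $\mu$ samples the informative state--actions (which controls $d_m$) from the size of the value gap that an $\epsilon$-optimal learner must resolve (which is pinned to $\epsilon$). First I would take the hard MDP family $M$ from Theorem~\ref{thm:learn_low_bound_all} --- with its $\Theta(HS)$ embedded best-arm-identification problems and per-state reward gap $\tau\asymp\epsilon$ --- and prepend a \textbf{gateway} stage. At the gateway state, one action $a_{\text{enter}}$ routes the trajectory into the initial distribution of the hard MDP, while the remaining actions send it into a reward-free, deterministic ``dead'' chain that carries no information about the hidden optimal arms $a^\star_{h,i}$. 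I would then take the logging policy $\mu$ to play $a_{\text{enter}}$ with probability $q\asymp d_m SA\le 1$ (and a dead action otherwise), and to be uniform over actions inside the hard MDP.

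The key point is that $v^\pi$ depends only on the MDP, not on $\mu$: the optimal policy (and any target policy worth considering) takes $a_{\text{enter}}$ with probability one, so the value gap $v^\star-v^{\hpi}$ equals the full hard-MDP gap $\gtrsim\tau\asymp\epsilon$ and is \emph{not} diluted by $q$; meanwhile $\mu$ reaches a bandit state--action $(s_{h,i},a)$ only with marginal occupancy $\asymp q(1-1/H)^{h-1}/(SA)\asymp d_m$. I would verify $(\mu,M)\in\mathcal M_{d_m}$ by checking that this minimal occupancy, attained at the deepest informative state--actions, is at least $d_m$ (adjusting the constant in $q$ to absorb the $(1-1/H)^{H}\ge 1/e$ decay), while the dead-chain and gateway occupancies are $\Omega(1)\ge d_m$. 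Because a policy that is $\epsilon$-optimal in the augmented MDP must enter the hard MDP and be $\epsilon$-optimal there, the reduction is faithful.

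To finish I would count information: among $n$ episodes drawn from $\mu$, only the $\approx nq$ that take $a_{\text{enter}}$ carry any signal about the $a^\star_{h,i}$ (the dead-chain episodes are independent of these parameters), and each such episode is exactly a fresh episode of the hard MDP started from its own initial distribution. Theorem~\ref{thm:learn_low_bound_all} then says that unless the number of informative episodes exceeds $c'H^3SA/\epsilon^2$, the learner fails to recover an $\epsilon$-optimal policy of the hard MDP with probability $\ge p$. Hence the augmented problem is unsolved whenever $nq\le c'H^3SA/\epsilon^2$, i.e.\ whenever $n\le cH^3/(d_m\epsilon^2)$ after substituting $q\asymp d_m SA$; taking the supremum over the constructed instance, which lies in $\mathcal M_{d_m}$, yields the claim. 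A harmless horizon rescaling folds the extra gateway/dead stages back into horizon $H$.

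I expect the main obstacle to be precisely this decoupling step and its bookkeeping: one must keep the reward gap at the full scale $\tau\asymp\epsilon$ --- a naive dilution through the \emph{initial} distribution instead shrinks the gap by $q$ and produces the useless bound $n\gtrsim H^3(SA)^2 d_m/\epsilon^2$ --- while simultaneously forcing $\mu$'s minimal occupancy down to exactly $\Theta(d_m)$ and certifying that every other state--action still has occupancy $\ge d_m$ so that $(\mu,M)\in\mathcal M_{d_m}$. Establishing rigorously that the dead-chain data provides no usable information, so that the effective sample size really is $\approx nq$, is the other point requiring care.
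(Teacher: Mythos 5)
Your proposal is correct and follows essentially the same route as the paper: the paper likewise prepends a gateway state $s_0$ whose ``enter'' action (taken by $\mu$ with probability $\tfrac{1}{2}d_mSA$) leads into the hard instance of Theorem~\ref{thm:learn_low_bound_all} while the other action leads to an absorbing reward-free chain, verifies that all occupancies stay above $d_m$, and divides the $\Omega(H^3SA/\epsilon^2)$ requirement by the entering fraction to get $\Omega(H^3/d_m\epsilon^2)$. The points you flag as needing care (undiluted value gap, membership in $\mathcal{M}_{d_m}$, informative-episode counting) are exactly the ones the paper's proof checks.
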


\begin{proof} 
	
The hard instance $(\mu,M)$ we used is based on Theorem~\ref{thm:learn_low_bound_all}, which is described as follows. 	

	\begin{itemize}
	
	\item for the MDP $M=(\mathcal{S},\mathcal{A}, r,P,d_1,2H+2)$, 
	\begin{itemize}
		
		\item Initial distribution $d_1$ will always enter state $s_0$, and there are two actions with action $a_1$ always transitions to $s_\text{yes}$ and action $a_2$ always transitions to $s_\text{no}$. The reward at the first time $r_1(s,a)=0$ for any $s,a$. 
		
		\item For state $s_\text{no}$, it will always transition back to itself regardless of the action and receive reward $0$, \emph{i.e.} 
		\[
		P_t(s_\text{no}|s_\text{no},a)=1,\;r_t(s_\text{no},a)=0,\;\forall t,\;\forall a.
		\] 
		\item For state $s_\text{yes}$, it will transition to the MDP construction in Theorem~\ref{thm:learn_low_bound_all} with horizon $2H$ and $s_\text{yes}$ always receives reward zero. 
		\item For $t=1$, choose $\mu(a_1|s_0)=\frac{1}{2}d_mSA$ and $\mu(a_2|s_0)=1-\frac{1}{2}d_m SA$. For $t\geq 2$, choose $\mu$ to be uniform policy, \emph{i.e.} $\mu(a_t|s_t)=1/A$.

	\end{itemize}
	
\end{itemize}

Based on this construction, the optimal policy has the form $\pi^\star=(a_1,\ldots)$ and therefore the MDP branch that enters $s_\text{no}$ is uninformative. Hence, data collected by that part is uninformed about the optimal policy and there is only $\frac{1}{2}d_m SA$ proportion of data from $s_\text{yes}$ are useful. Moreover, by Theorem~\ref{thm:learn_low_bound_all} the rest of Markov chain succeeded from $s_\text{yes}$ requires $\Omega(H^3SA/\epsilon^2)$ episodes (regardless of the exploration strategy/logging policy), so the actual data complexity needed for the whole construction $(\mu,M)$ is $\frac{\Omega(H^3SA/\epsilon^2)}{d_mSA}=\Omega(H^3/d_m\epsilon^2)$. 

It remains to check this construction $\mu,M$ stays within $\mathcal{M}_{d_m}$.
\begin{itemize}
	\item For $t=1$, we have $d_1(s_0,a_1)=\frac{1}{2}d_m SA\geq d_m$ (since $S\geq 2$) and $d_1(s_0,a_2)=1-\frac{1}{2}d_mSA\geq d_m$ (this is since $d_m\leq \frac{1}{SA}\leq \frac{2}{2+SA}$);
	\item For $t=2$, $d_2(s_\text{yes},a)=\frac{1}{2}d_m SA\cdot\frac{1}{A}=\frac{1}{2}d_m S\geq d_m$ (since $S\geq 2$) and similar for $s_\text{no}$;
	\item For $t\geq 3$, for $g_h$ and $b_h$ in the sub-chain inherited from $s_\text{yes}$, note $d_h(g_h)\leq d_{h+1}(g_{h+1})$ (since $g_h$ and $b_h$ are absorbing states regardless of actions), therefore $d_h(g_h)\geq d_1(g_1)=d_1(s_\text{yes})\cdot \P(g_1|s_\text{yes})=\frac{1}{2}d_m SA\cdot \frac{1}{S}=\frac{1}{2}d_m A$, since $\mu$ is uniform so $d_h(g_h,a)\geq\Omega(d_m A)\cdot \frac{1}{A}=\Omega(d_m )$ forall $a$. Similar result can be derived for $b_h$ in identical way.
	
	For bandit state, we have for all $i\in[S-2]$,
	\begin{align*}
	d^\mu_{t+1}(s_{t+1,i})&\geq \P^\mu(s_{t+1,i},s_{t,i},s_{t-1,i},\ldots,s_{2,i},s_{1,i},s_\text{yes},s_0)\\
	&= \prod_{u=1}^{t}\P^\mu(s_{u+1,i}|s_u)\P^\mu(s_{1,i}|s_\text{yes})\P^\mu(s_\text{yes}|s_0)\\
	&=(1-\frac{1}{H})^t \left(\frac{1}{S}\right)\left(\frac{1}{2}d_m SA\right)\geq c d_m A,\\
	\end{align*}
	now by $\mu$ is uniform we have $d^\mu_{t+1}(s_{t+1,i},a)\geq \Omega(d_mA)\cdot\frac{1}{A}=\Omega(d_m)$ for all $a$. This concludes the proof.
	
\end{itemize}

\end{proof}

\begin{remark}
A directly corollary is that the sample complexity in Theorem~\ref{thm:offlinelearning} part 3. is optimal. Indeed, for the case $\epsilon_{\text{opt}}=0$, Theorem~\ref{thm:offlinelearning} implies $\widehat{\pi}$ is the $\epsilon$-optimal policy learned with sample complexity $O(H^3\log(HSA/\delta)/d_m\epsilon^2)$. Theorem~\ref{thm:learn_low_bound} implies this sample complexity cannot be further reduced up to the logarithmic factor.
\end{remark}

\subsection{Information theoretical lower sample complexity bound for uniform convergence in OPE.}  

By applying Theorem~\ref{thm:learn_low_bound}, we can now prove Theorem~\ref{thm:uni_lower}.

\begin{proof}[Proof of Theorem~\ref{thm:uni_lower}]
We prove it by contradiction. Suppose there is one off-policy evaluation method $\widehat{v}^\pi$ such that 
\[
\sup_{\pi\in\Pi}|\widehat{v}^{{\pi}}-v^{{\pi}}|\leq o\left(\sqrt{\frac{H^3}{d_m n}}\right),
\]
where $o(\cdot)$ represents the standard small $o$-notation. Then by \begin{align*}
0&\leq v^{\pi^\star}-v^{\widehat{\pi}^\star}= v^{\pi^\star}-\widehat{v}^{\widehat{\pi}^\star}+\widehat{v}^{\widehat{\pi}^\star}-v^{\widehat{\pi}^\star}\\
&\leq |v^{\pi^\star}-\widehat{v}^{{\pi}^\star}|+|\widehat{v}^{\widehat{\pi}^\star}-v^{\widehat{\pi}^\star}|\leq 2\sup_\pi|v^\pi-\widehat{v}^\pi|.
\end{align*} this OPE method implies a $\epsilon$-optimal policy learning algorithm with sample complexity
$
o(H^3/d_m\epsilon^2)
$ which is smaller than the information theoretical lower bound obtained in Theorem~\ref{thm:learn_low_bound}. Contradiction!
\end{proof}

\section{Proofs of Theorem~\ref{thm:offlinelearning}}

\begin{proof}[Proof of Theorem~\ref{thm:offlinelearning}]
	Part 1. and Part 2. are just direct corollaries. We only prove Part 3. here. Indeed, by definition of empirical optimal policy we have $\widehat{Q}^{\pi^\star}\leq\widehat{Q}^{\widehat{\pi}^\star} $, so we have the following:
	\begin{align*}
	Q^{\pi^\star}_1-Q_1^{\widehat{\pi}}&=Q^{\pi^\star}_1-\widehat{Q}_1^{\widehat{\pi}^\star}+\widehat{Q}_1^{\widehat{\pi}^\star}-\widehat{Q}_1^{\widehat{\pi}}+\widehat{Q}_1^{\widehat{\pi}}-Q_1^{\widehat{\pi}}\\
	&\leq Q^{\pi^\star}_1-\widehat{Q}_1^{{\pi}^\star}+\widehat{Q}_1^{\widehat{\pi}^\star}-\widehat{Q}_1^{\widehat{\pi}}+\widehat{Q}_1^{\widehat{\pi}}-Q_1^{\widehat{\pi}}\\
	&\leq Q^{\pi^\star}_1-\widehat{Q}_1^{{\pi}^\star}+\epsilon_{\text{opt}}\cdot \mathbf{1}+\widehat{Q}_1^{\widehat{\pi}}-Q_1^{\widehat{\pi}}\\
	\end{align*}
	and $\widehat{Q}_1^{\widehat{\pi}}-Q_1^{\widehat{\pi}}$ can be bounded by Theorem~\ref{thm:local_uni_opt} using local uniform convergence. $Q^{\pi^\star}_1-\widehat{Q}_1^{{\pi}^\star}$ can be bounded by $O(\sqrt{\frac{H^3\log(HSA/\delta)}{nd_m}})$ using the similar technique in Section~\ref{sec:E} even without introducing $\epsilon_{\text{opt}}$ since $\pi^\star$ is a fixed policy. All these implies:
	\[
	Q^{\pi^\star}_1-Q_1^{\widehat{\pi}}\leq \left(O(\sqrt{\frac{H^3\log(HSA/\delta)}{nd_m}}) + \epsilon_{\text{opt}}\right)\cdot \mathbf{1}.
	\]
	Especially when $\epsilon_\text{opt}=0$ then this is slightly stronger than the stated result since:
	{\small
	\[
	v^{\pi^\star}_1-v_1^{\widehat{\pi}^\star} =Q^{\pi^\star}_1(\cdot,\pi^\star(\cdot))-Q_1^{\widehat{\pi}^\star}(\cdot,\widehat{\pi}^\star(\cdot))\leq Q^{\pi^\star}_1(\cdot,\pi^\star(\cdot))-Q_1^{\widehat{\pi}^\star}(\cdot,{\pi}^\star(\cdot))\leq \norm{Q^{\pi^\star}_1-Q_1^{\widehat{\pi}^\star}}_\infty\leq O(\sqrt{\frac{H^3\log(HSA/\delta)}{nd_m}}) \cdot \mathbf{1}
	\]
}

\end{proof}

\section{Simulation details}\label{sec:simluation_detail}

The non-stationary MDP with used for the experiments have $2$ states $s_0,s_1$ and $2$ actions $a_1,a_2$ where action $a_1$ has probability $1$ always going back the current state and for action $a_2$, there is one state s.t. after choosing $a_2$ the dynamic transitions to both states with equal probability $\frac{1}{2}$ and the other one has asymmetric probability assignment ($\frac{1}{4}$ and $\frac{3}{4}$). The transition after choosing $a_2$ is changing over different time steps therefore the MDP is non-stationary and the change is decided by a sequence of pseudo-random numbers. More formally, $P_t$ can be either

\begin{align*}
\P(s_0|s_0,a_1)=1;
\P(s_1|s_1,a_1)=1;
\P(\cdot|s_0,a_2)=\begin{cases}
\frac{1}{2},\;\text{if} \;\cdot=s_1 \\
\frac{1}{2},\;\text{if}\;\cdot = s_0 \\
\end{cases};\;
\P(\cdot|s_1,a_2)=\begin{cases}
\frac{3}{4},\;\text{if} \;\cdot=s_1 \\
\frac{1}{4},\;\text{if}\;\cdot = s_0 \\
\end{cases}
\end{align*}
or 
\begin{align*}
\P(s_0|s_0,a_1)=1;
\P(s_1|s_1,a_1)=1;
\P(\cdot|s_0,a_2)=\begin{cases}
\frac{1}{4},\;\text{if} \;\cdot=s_1 \\
\frac{3}{4},\;\text{if}\;\cdot = s_0 \\
\end{cases};\;
\P(\cdot|s_1,a_2)=\begin{cases}
\frac{1}{2},\;\text{if} \;\cdot=s_1 \\
\frac{1}{2},\;\text{if}\;\cdot = s_0 \\
\end{cases}
\end{align*}

 Moreover, to make the learning problem non-trivial we use non-stationary rewards with $4$ categories, \emph{i.e.} $r_t(s,a)\in\{\frac{1}{4},\frac{2}{4},\frac{3}{4},1\}$ and assignment of $r_t(s,a)$ for each value is changing over time. That means, one possible assignment can be
 \[
 r_t(s_0,a_1)= 1/4, \; r_t(s_0,a_2)=2/4,\;r_t(s_1,a_1)=3/4,\;r_t(s_1,a_2)=1/4.
 \]

Moreover, the logging policy in Figure~\ref{fig:different_H} is uniform with $\mu_t(a_1|s)=\mu_t(a_2|s)=\frac{1}{2}$ for both states. We implement the non-stationary MDP  in the {\fontfamily{cmtt}\selectfont Python} environment and pseudo-random numbers $p_t,r_t$'s are generated by keeping {\fontfamily{cmtt}\selectfont numpy.random.seed(100)}. 

We fix episodes $n=2048$ and run each algorithm under $K = 100$ macro-replications with data $\mathcal{D}_{(k)}=\left\lbrace (s_t^{(i)},a_t^{(i)},r_t^{(i)})\right\rbrace^{i\in[n],t\in[H]}_{(k)} $, and use each $\mathcal{D}_{(k)}$ $(k=1,...,K)$ to construct a estimator $\widehat{v}^\pi_{[k]}$, then the (empirical) RMSE for fixed policy is computed as: 
\[
\mathrm{RMSE\_FIX} = \sqrt{\frac{\sum_{k=1}^K (\widehat{v}^\pi_{[k]}-v^\pi_{\mathrm{true}})^2}{K}},
\]
and RMSE for suboptimality gap is computed as 
\[
\mathrm{RMSE\_SUB} = \sqrt{\frac{\sum_{k=1}^K ({v}^{\widehat{\pi}^\star_{[k]}}-v^{\pi^\star}_{\mathrm{true}})^2}{K}},
\]
and RMSE for empirical optimal policy gap is computed as 
\[
\mathrm{RMSE\_EMPIRICAL} = \sqrt{\frac{\sum_{k=1}^K (\widehat{v}^{\widehat{\pi}^\star}_{[k]}-v^{\widehat{\pi}^\star}_{\mathrm{true}})^2}{K}},
\]
where $v^\pi_{\mathrm{true}}$ is obtained by calculating  $P^\pi_{t+1,t}(s^\prime|s)=\sum_a P_{t+1,t}(s^\prime|s,a)\pi_t(a|s)$, the marginal state distribution ${d}_t^\pi  = {P}^{\pi}_{t,t-1} {d}_{t-1}^\pi$, ${r}^\pi_t(s_t)=\sum_{a_t}{r}_t(s_t,a_t)\pi_t(a_t|s_t)$ and ${v}^\pi_{\mathrm{true}}=\sum_{t=1}^H\sum_{s_t}{d}^\pi_t(s_t){r}^\pi_t(s_t)$. $v^{\pi^\star}_{\mathrm{true}}$ is obtained by running Value Iteration exhaustively until the error converges to $0$. The average relative error for suboptimality (average of $|{v}^{\widehat{\pi}^\star_{[k]}}-v^{\pi^\star}_{\mathrm{true}}|/v^{\pi^\star}_{\mathrm{true}}$) at $H=1000$ is $0.0011$. Lastly, we also show the scaling of $|\widehat{v}^{\widehat{\pi}^\star}-{v}^{\widehat{\pi}^\star}|$ in Figure~\ref{fig:appendix}, which shares a similar pattern as the suboptimality plot as a whole. \footnote{Here we do point out the empirical dependence on $H$ for $|\widehat{v}^{\widehat{\pi}^\star}-{v}^{\widehat{\pi}^\star}|$ in the Figure~\ref{fig:appendix} is actually less than $H^{1.5}$, this comes from that the MDP example we choose is not the ``hardest'' example for quantity $|\widehat{v}^{\widehat{\pi}^\star}-{v}^{\widehat{\pi}^\star}|$, as opposed to quantity $|v^\star-v^{\widehat{\pi}^
\star}|$ in Figure~\ref{fig:main}.}       

\begin{figure}[H]
	\centering     
	\includegraphics[width=0.6\linewidth]{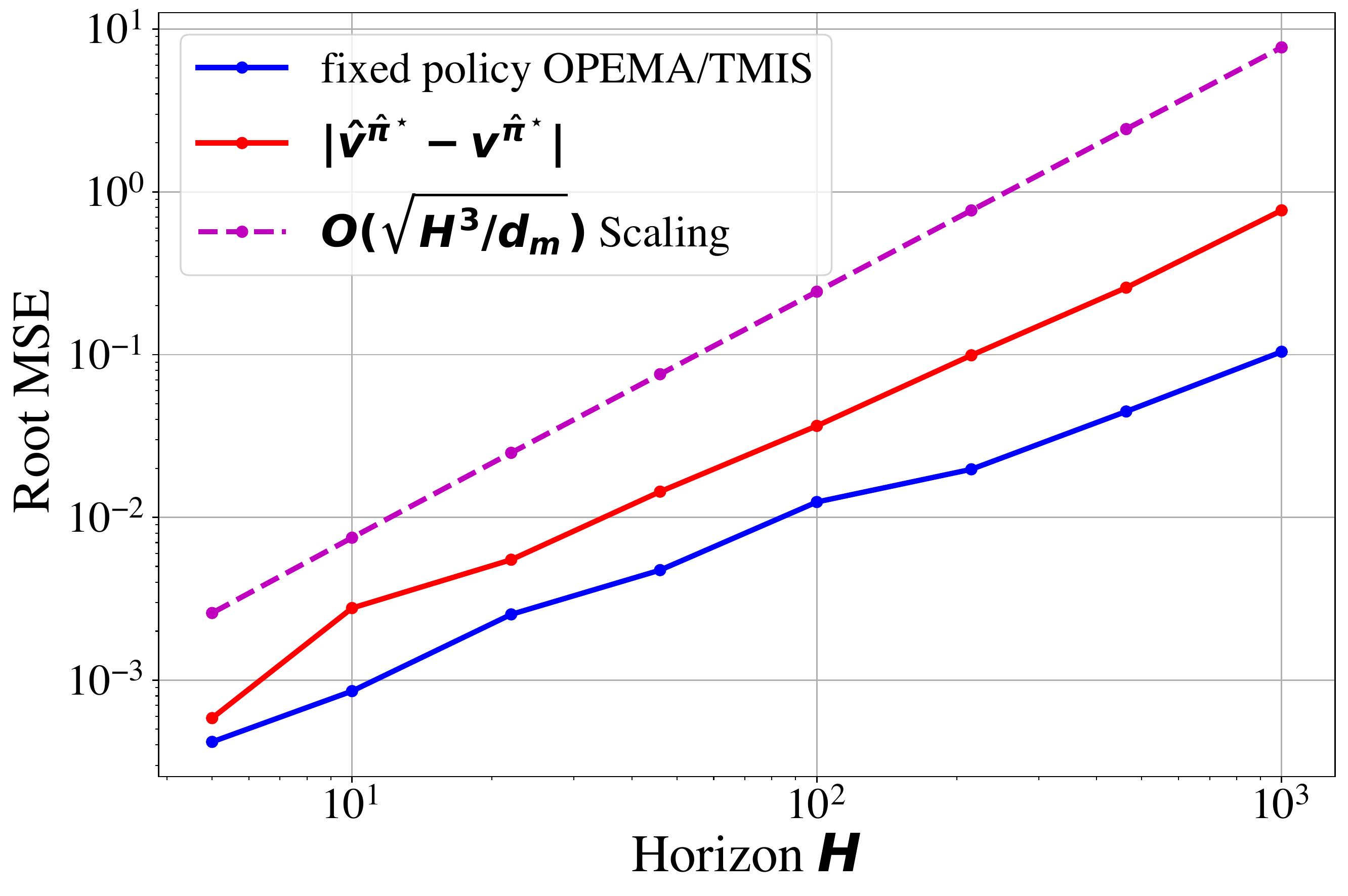}
	\caption{Log-log plot  showing the dependence on horizon of uniform OPE and pointwise OPE via learning ($|\hat{v}^{\widehat{\pi}^\star}-v^{\widehat{\pi}^\star}|$) over a non-stationary MDP example.}
	\label{fig:appendix}
\end{figure}

\section{On improvement over vanilla simulation lemma for fixed policy evaluation}\label{sec:app_sl}

\paragraph{Vanilla simulation lemma, Lemma~1 of \cite{jiang2018notes}.}Without loss of generality, assuming reward is determinsitic function over state-action. By definition of Bellman equation, we have the following:
\[
\widehat{V}^\pi_t=r+\widehat{P}_{t+1}^\pi\widehat{V}^\pi_{t+1},\quad {V}^\pi_t=r+{P}_{t+1}^\pi{V}^\pi_{t+1},
\]
define $\epsilon_P=\sup_{t,s_t,a_t}||\widehat{P}_{t}(\cdot|s_t,a_t)-P_t(\cdot|s_t,a_t)||_1$, then by Hoeffding's inequality and union bound, with probability $1-\delta$,
\[
\epsilon_P\leq S\cdot \sup_{t,s_t,a_t}||\widehat{P}_{t}(\cdot|s_t,a_t)-P_t(\cdot|s_t,a_t)||_\infty\leq S \cdot\sup_{t,s_t,a_t} O\left(\sqrt{\frac{\log(HSA/\delta)}{n_{s_t,a_t}}}\mathbf{1}(E_t)\right)=O\left(\sqrt{\frac{S^2\log(HSA/\delta)}{n\cdot d_m}}\right)
\]
then 
\begin{align*}
\widehat{V}^\pi_t-{V}^\pi_t=&\widehat{P}_{t+1}^\pi\widehat{V}^\pi_{t+1}-{P}_{t+1}^\pi{V}^\pi_{t+1}\\
\leq&\left(\norm{\widehat{P}_{t+1}^\pi-{P}_{t+1}^\pi}_1\norm{\widehat{V}^\pi_{t+1}}_\infty+\norm{{P}_{t+1}^\pi}_1\norm{\widehat{V}^\pi_{t+1}-{V}^\pi_{t+1}}_\infty\right)\cdot \mathbf{1}\\
\leq& \left(H\epsilon_P +\norm{\widehat{V}^\pi_{t+1}-{V}^\pi_{t+1}}_\infty\right)\cdot\mathbf{1},\\
\end{align*}
solving recursively, we have
\[
\norm{\widehat{V}^\pi_1-{V}^\pi_1}_\infty\leq H^2\epsilon_P\leq O\left(\sqrt{\frac{H^4 S^2\log(HSA/\delta)}{n\cdot d_m}}\right).
\]

This verifies SL has complexity $\widetilde{O}(H^4S^2/d_m\epsilon^2)$. We do point out above standard analysis can be improved (\emph{e.g.} \cite{jiang2018notes} Section~2.2) to $ \tilde{O}({{H^4 S}/{ d_m\epsilon^2}})$, then in this case our analysis (Lemma~\ref{thm:single_finer_bound}) has an improvement of $H^2S$ with respect to the modified result.

\section{Algorithms}\label{sec:alg}

\begin{algorithm*}[thb]
	\caption{OPEMA}
	\label{alg:mainalgo}
	{\bfseries Input:} Logging data $\mathcal D = \{\{s_t^{(i)},a_t^{(i)},r_t^{(i)}\}_{t = 1}^{H }\}_{i = 1}^{n}$ from the behavior policy $\mu$. A target policy $\pi$ which we want to evaluate its cumulative reward.
	\begin{algorithmic}[1]
		\STATE Calculate the on-policy estimation of initial distribution $d_1(\cdot)$ by
		$
		\widehat d_1(s) := \frac{1}{n}\sum_{i = 1}^{n} \mathbf{1}(s_1^{(i)} = s),
		$
		and set $\widehat d_1^{\mu}(\cdot):=\widehat d_1(\cdot)$, $\widehat d_1^\pi(s):=\widehat d_1(\cdot)$.
		\FOR{$t = 2,3,\dotsc,H$}
		\STATE Choose all transition data at time step $t$, $\{s_t^{(i)},a_t^{(i)},r_t^{(i)}\}_{i = 1}^{n}$.
		\STATE Calculate the on-policy estimation of $d_{t}^{\mu}(\cdot)$ by
		$
		\widehat d_{t}^{\mu}(s) := \frac{1}{n}\sum_{i = 1}^{n} \mathbf{1}(s_{t }^{(i)} = s).
		$
		\STATE Set the off-policy estimation of $\widehat{P}_{t}(s_{t}|s_{t-1},a_{t-1})$:
		\begin{equation*}
		\hspace{-6.5mm}
		\begin{aligned}
		\widehat{P}_{t}(s_{t}|s_{t-1},a_{t-1})
		:=\frac{\sum_{i=1}^n\mathbf{1}[(s^{(i)}_{t},a^{(i)}_{t-1},s^{(i)}_{t-1})=(s_{t},s_{t-1},a_{t-1})]}{n_{s_{t-1},a_{t-1}}}
		\end{aligned}
		\end{equation*}
		when $n_{s_{t-1},a_{t-1}}>0$. Otherwise set it to be zero.
		\STATE Estimate the reward function
		\begin{align*}
		\widehat{r}_t(s_t,a_t) := \frac{\sum_{i=1}^n r_t^{(i)}\mathbf{1}(s_t^{(i)}=s_t, a_t^{(i)} =a_t)}{\sum_{i=1}^n \mathbf{1}(s_t^{(i)}=s_t, a_t^{(i)} =a_t) }.
		\end{align*}
		when $n_{s_t,a_t}>0$. Otherwise set it to be zero.
		\STATE Set $\widehat{d}^\pi_t(\cdot,\cdot)$ according to $\widehat{d}^\pi_t=\widehat{P}^\pi_t\widehat{d}^\pi_{t-1}$, where $\widehat{d}^\pi_t(\cdot,\cdot)$ is the estimated state-action distribution. 
		\ENDFOR
		\STATE Substitute the all estimated values above into $\widehat v^\pi=\sum_{t=1}^H\langle \widehat{d}^\pi_t,\widehat{r}_t\rangle$ to obtain $\widehat v^\pi$, the estimated value of $\pi$.
	\end{algorithmic}
\end{algorithm*}

\begin{algorithm*}[thb]
	\caption{Data Splitting TMIS in \cite{yin2020asymptotically}}
	\label{alg:secondalgo}
	{\bfseries Input:} Logging data $\mathcal D = \{\{s_t^{(i)},a_t^{(i)},r_t^{(i)}\}_{t = 1}^{H }\}_{i = 1}^{n}$ from the behavior policy $\mu$. A target policy $\pi$ which we want to evaluate its cumulative reward. Requiring splitting data size $M$. 
	\begin{algorithmic}[1]
		\STATE Randomly splitting the data $\mathcal{D}$ evenly into $N$ folds, with each fold $|\mathcal{D}^{(i)}|=M$, \emph{i.e.} $n=M\cdot N$.
		\FOR{$i = 1,2,\dotsc,N$}
		\STATE Use Algorithm~\ref{alg:mainalgo} to estimate $\widehat{v}_{(i)}^\pi$ with data $\mathcal{D}^{(i)}$.
		\ENDFOR
		\STATE Use the mean of $\widehat{v}_{(1)}^\pi,\widehat{v}_{(2)}^\pi,...,\widehat{v}_{(N)}^\pi$ as the final estimation of $v^\pi$.
	\end{algorithmic}
\end{algorithm*}

\begin{remark}
	In short, we can see Algorithm~\ref{alg:secondalgo} requires the splitting data size $M$ which is undecided by \cite{yin2020asymptotically} and that makes the hyper-parameter requiring additional concrete specifications to make the data splitting estimator sample efficient. In contrast, OPEMA in Algorithm~\ref{alg:mainalgo} is defined without ambiguity and can be implemented without extra work.
	
	Their results require number of episodes in each splitted data $M$ to satisfy $\widetilde{O}(\sqrt{nSA})>M>O(HSA)$. To achieve data efficiency, they need $n\approx \Theta(H^2SA/\epsilon^2)$ and by that condition $M$ has to satisfy $M\approx C\cdot HSA$. In this case, data-splitting version needs to create $N=n/M$ empirical transition dynamics and each dynamics use $H^3/N\approx C\cdot H^2SA/\epsilon^2$ episodes which is less than the lower bound ($O(H^3)$) required for learning. Most critically, due to data-splitting it has $N$ empirical transitions hence it is not clear which transition to plan over. Therefore in this sense their result does not enables efficient offline learning. Our Analysis for unsplitted version (OPEMA) addresses all these issues.

\end{remark}

\end{document}